\title{Dichotomy of Early and Late Phase Implicit Biases Can Provably Induce Grokking}
\setlist[enumerate,1]{leftmargin=0.6cm}
\newenvironment{proof}{\myparagraph{Proof:}}{\hfill$\square$}
\newtheorem{theorem}{Theorem}[section]
\newtheorem{property}[theorem]{Property}
\newtheorem{lemma}[theorem]{Lemma}
\newtheorem{assumption}[theorem]{Assumption}
\newtheorem{definition}[theorem]{Definition}
\newtheorem{remark}[theorem]{Remark}
\newtheorem{corollary}[theorem]{Corollary}
\Crefname{assumption}{Assumption}{Assumptions}
\crefname{equation}{}{}
\Crefname{lemma}{Lemma}{Lemmas}
\Crefname{definition}{Definition}{Definitions}
\Crefname{proposition}{Proposition}{Propositions}
\Crefname{corollary}{Corollary}{Corollaries}
\Crefname{example}{Example}{Examples}
\Crefname{property}{Property}{Properties}
\newcommand{\half}{\frac{1}{2}}
\def\1{\bm{1}}
\def\vzero{{\bm{0}}}
\def\vone{{\bm{1}}}
\def\vmu{{\bm{\mu}}}
\def\vtheta{{\bm{\theta}}}
\def\va{{\bm{a}}}
\def\vb{{\bm{b}}}
\def\ve{{\bm{e}}}
\def\vg{{\bm{g}}}
\def\vh{{\bm{h}}}
\def\vs{{\bm{s}}}
\def\vu{{\bm{u}}}
\def\vv{{\bm{v}}}
\def\vw{{\bm{w}}}
\def\vx{{\bm{x}}}
\def\vy{{\bm{y}}}
\def\vz{{\bm{z}}}
\def\mB{{\bm{B}}}
\def\mE{{\bm{E}}}
\def\mF{{\bm{F}}}
\def\mI{{\bm{I}}}
\def\mK{{\bm{K}}}
\def\mP{{\bm{P}}}
\def\mU{{\bm{U}}}
\def\mV{{\bm{V}}}
\def\mW{{\bm{W}}}
\def\mX{{\bm{X}}}
\def\mPhi{{\bm{\Phi}}}
\def\mSigma{{\bm{\Sigma}}}
\DeclareMathAlphabet{\mathsfit}{\encodingdefault}{\sfdefault}{m}{sl}
\SetMathAlphabet{\mathsfit}{bold}{\encodingdefault}{\sfdefault}{bx}{n}
\newcommand{\E}{\mathbb{E}}
\newcommand{\R}{\mathbb{R}}
\DeclareMathOperator*{\argmax}{arg\,max}
\DeclareMathOperator{\sign}{sign}
\newcommand{\cC}{\mathcal{C}}
\newcommand{\cD}{\mathcal{D}}
\newcommand{\cE}{\mathcal{E}}
\newcommand{\cF}{\mathcal{F}}
\newcommand{\cH}{\mathcal{H}}
\newcommand{\cL}{\mathcal{L}}
\newcommand{\cN}{\mathcal{N}}
\newcommand{\cO}{\mathcal{O}}
\newcommand{\cS}{\mathcal{S}}
\renewcommand{\O}{\cO}
\newcommand{\lO}[1]{\cO\!\left(#1\right)}
\newcommand{\vwopt}{\vw^*}
\newcommand{\ntk}{\mathrm{ntk}}
\newcommand{\vhoptntk}{\vh^*_{\ntk}}
\newcommand{\nuntk}{\nu_{\ntk}}
\newcommand{\Gntk}{G_{\mathrm{ntk}}}
\newcommand{\vhoptLone}{\vh^*_{\mathrm{L1}}}
\newcommand{\cDX}{\mathcal{D}_{\mathrm{X}}}
\newcommand{\N}{\mathbb{N}}
\newcommand{\Rad}{\mathfrak{R}}
\newcommand{\vdelta}{{\bm{\delta}}}
\newcommand{\tr}{\mathrm{tr}}
\newcommand{\LL}{F_\mathrm{LL}}
\newcommand{\flin}{{f}^{\mathrm{lin}}}
\newcommand{\cLlin}{\cL_{\mathrm{lin}}}
\newcommand{\vthetalin}{\vtheta_{\mathrm{lin}}}
\newcommand{\vthetauinit}{\bar{\vtheta}_{\mathrm{init}}}
\newcommand{\gammantk}{\gamma_{\mathrm{ntk}}}
\newcommand{\cLc}{\cL_{\mathrm{c}}}
\newcommand{\Tcn}{T^{-}_{\mathrm{c}}}
\newcommand{\Tcp}{T^{+}_{\mathrm{c}}}
\newcommand{\onec}{\mathbbm{1}}
\newcommand{\norm}[1]{\|#1\|}
\newcommand{\abs}[1]{\lvert #1 \rvert}
\newcommand{\labs}[1]{\left\lvert #1 \right\rvert}
\newcommand{\normone}[1]{\norm{#1}_1}
\newcommand{\normtwo}[1]{\norm{#1}_2}
\newcommand{\norminf}[1]{\norm{#1}_{\infty}}
\newcommand{\lnormtwo}[1]{\left\|#1\right\|_2}
\newcommand{\inner}[2]{\left\langle{#1}, {#2}\right\rangle}
\newcommand{\inne}[2]{\langle{#1}, {#2}\rangle}
\newcommand{\rank}{\mathrm{rank}}
\newcommand{\ReLU}{\mathrm{ReLU}}
\newcommand{\diag}{\mathrm{diag}}
\newcommand{\poly}{\mathrm{poly}}
\newcommand{\arcsinh}{\mathrm{arcsinh}}
\newcommand{\dd}{\mathrm{d}}
\newcommand{\unif}{\mathrm{unif}}
\newcommand{\widesim}[2][1.5]{
	\mathrel{\overset{#2}{\scalebox{#1}[1]{$\sim$}}}
}
\newcommand{\iidsim}{\widesim[2.33]{\mathrm{i.i.d.}}}
\renewcommand{\vec}[1]{\bm{vec}\left( #1\right)}
\DeclarePairedDelimiterX{\infdivx}[2]{(}{)}{%
  #1\;\delimsize\|\;#2%
}
\newcommand{\kaifeng}[1]{\textcolor{orange}{[Kaifeng: #1]}}
\author{Kaifeng Lyu\thanks{Equal Contribution} \\
{\small Princeton University}\\
{\small \texttt{klyu@cs.princeton.edu}} \\
\And
Jikai Jin\footnotemark[1] \\
{\small Stanford University}\\
{\small \texttt{jkjin@stanford.edu}} \\
\And
Zhiyuan Li \\
{\small Toyota Technological Institute at Chicago} \\
{\small \texttt{zhiyuanli@ttic.edu}} \\
\AND
Simon S. Du \\
{\small University of Washington} \\
{\small \texttt{ssdu@cs.washington.edu}}
\And
Jason D. Lee \\
{\small Princeton University} \\
{\small \texttt{jasonlee@princeton.edu}}
\And
Wei Hu \\
{\small University of Michigan} \\
{\small \texttt{vvh@umich.edu}}
}
\begin{document}

\maketitle

\begin{abstract}
    Recent work by \citet{power2022grokking} highlighted a surprising ``grokking'' phenomenon in learning arithmetic tasks: a neural net first ``memorizes'' the training set, resulting in perfect training accuracy but near-random test accuracy, and after training for sufficiently longer, it suddenly transitions to perfect test accuracy. This paper studies the grokking phenomenon in theoretical setups and shows that it can be induced by a dichotomy of early and late phase implicit biases. Specifically, when training homogeneous neural nets with large initialization and small weight decay on both classification and regression tasks, we prove that the training process gets trapped at a solution corresponding to a kernel predictor for a long time, and then a very sharp transition to min-norm/max-margin predictors occurs, leading to a dramatic change in test accuracy. 
\end{abstract}

\section{Introduction}

\footnotetext[1]{Code is available at \url{https://github.com/vfleaking/grokking-dichotomy}}

The generalization behavior of modern over-parameterized neural nets has been puzzling: these nets have the capacity to overfit the training set, and yet they frequently exhibit a small gap between training and test performance when trained by popular gradient-based optimizers.
A common view now is that the network architectures and training pipelines can automatically induce regularization effects to avoid or mitigate overfitting throughout the training trajectory.

Recently, \citet{power2022grokking} discovered an even more perplexing generalization phenomenon called \emph{grokking}:
when training a neural net to learn modular arithmetic operations, it first ``memorizes'' the training set with zero training error and near-random test error, and then training for much longer leads to a sharp transition from no generalization to perfect generalization. See~\Cref{sec:l2_motivating} for our reproduction of this phenomenon.
Beyond modular arithmetic, grokking has been reported in learning group operations~\citep{chughtai2023toy}, learning sparse parity~\citep{barak2022hidden,bhattamishra2022simplicity}, learning greatest common divisor~\citep{charton2023can}, and image classification~\citep{liu2023omnigrok,radhakrishnan2022mechanism}.



Different viewpoints on the mechanism of grokking have been proposed, including the
slingshot mechanism (cyclic phase transitions)~\citep{thilak2022slingshot}, random walk among minimizers~\citep{millidge2022grokking}, slow formulation of good representations~\citep{liu2022towards}, the scale of initialization~\citep{liu2023omnigrok}, and the simplicity of the generalizable solution~\citep{nanda2023progress,varma2023explaining}.
However, existing studies failed to address two crucial aspects for gaining a comprehensive understanding of grokking:

\vspace{-0.08in}
\begin{enumerate}
    \item No prior work has rigorously proved grokking in a neural network setting.
    \item No prior work has provided a quantitative explanation as to why the transition from memorization to generalization is often sharp, instead of gradual.
\end{enumerate}




\begin{figure}[t]
    \vspace{-0.5in}
    \centering
    \begin{minipage}[t]{0.49\textwidth}
        \centering
        \includegraphics[width=\linewidth]{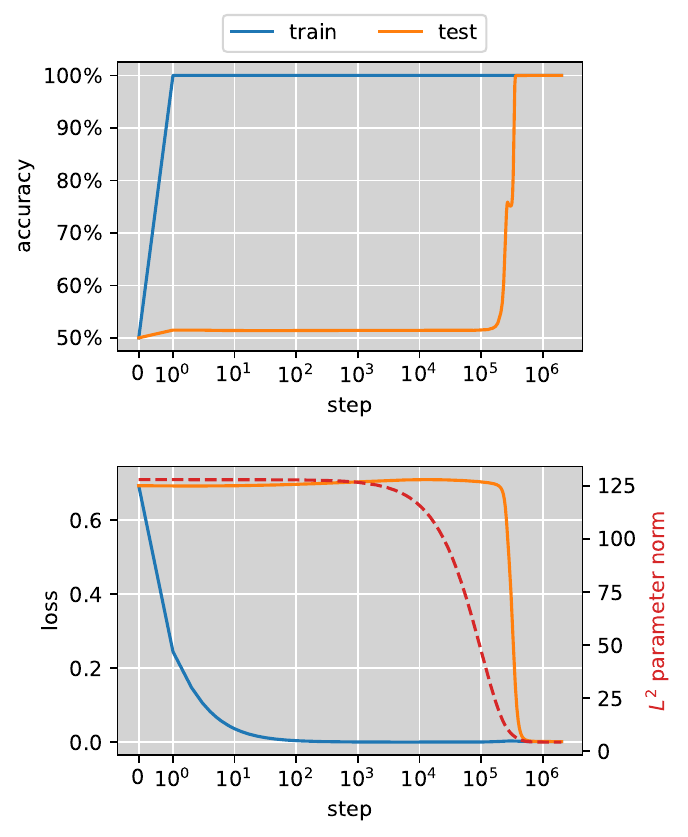}
        \subcaption{Grokking}
        \label{fig:intro-diag-cls}
    \end{minipage}\hfill
    \begin{minipage}[t]{0.49\textwidth}
        \centering
        \includegraphics[width=\linewidth]{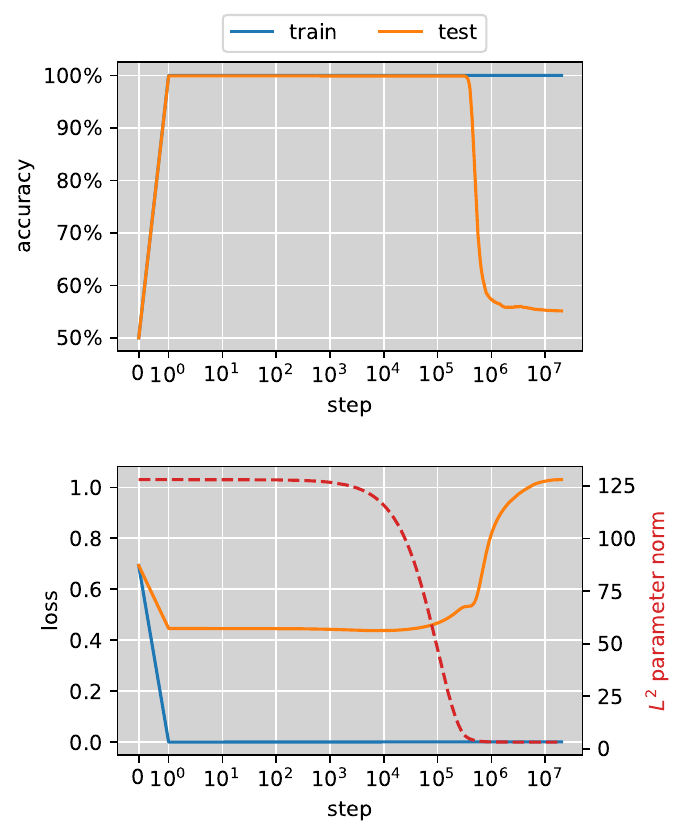}
        \subcaption{``Misgrokking''}
        \label{fig:intro-diag-cls-2}
    \end{minipage}
    \caption{Training two-layer diagonal linear nets for linear classification can exhibit (a) grokking, for a dataset that can be linearly separated by a $3$-sparse weight vector, or (b) ``misgrokking'', for a dataset that can be linearly separated by a large $L^2$-margin. See~\Cref{subsec:diagonal_linear_networks} for details.}
    \vspace{-0.15in}
\end{figure}

\myparagraph{Our Contributions.}
In this work, we address these limitations by identifying simple yet insightful theoretical setups where grokking with sharp transition can be rigorously proved and its mechanism can be intuitively understood.
Our main intuition is that optimizers may implicitly induce different biases in early and late phases; grokking happens if the early phase bias implies an overfitting solution and late phase bias implies a generalizable solution.

More specifically, we focus on neural nets with large initialization and small weight decay. This is inspired by a recent work~\citep{liu2023omnigrok} showing that training with these two tricks can induce grokking on many tasks, even beyond learning modular arithmetic. Our theoretical analysis attributes this to the following {\em dichotomy} of early and late phase implicit biases:
the large initialization induces a very strong early phase implicit bias towards kernel predictors, but over time, it decays and competes with a late phase implicit bias towards min-norm/max-margin predictors induced by the small weight decay, resulting in a transition that turns out to be provably sharp in between.



This implicit bias result holds for homogeneous neural nets, 
a broad class of neural nets that include commonly used MLPs and CNNs with homogeneous activation.
We further exemplify this dichotomy in sparse linear classification with diagonal linear nets and low-rank matrix completion with over-parameterized models, where we provide theory and experiments showing that the kernel predictor does not generalize well but the min-norm/max-margin predictors do, hence grokking happens (see \Cref{fig:intro-diag-cls,fig:mc}).

Based on these theoretical insights, we are able to construct examples where the early phase implicit bias leads to a generalizable solution, and the late phase bias leads to overfitting. This gives rise to a new phenomenon which we call ``{\em misgrokking}'': a neural net first achieves perfect training and test accuracies, but training for a longer time leads to a sudden big drop in test accuracy. See \cref{fig:intro-diag-cls-2}.

Our proof of the kernel regime in the early phase is related to the Neural Tangent Kernel (NTK) regime studied in the literature~\citep{jacot2018neural,chizat2019lazy}.
However, our result is qualitatively different from existing NTK analyses in that in our setting, the weight norm changes significantly (due to weight decay), while in the usual NTK regime, the weight changes only by a small amount, and so does its norm.
Our proof relies on a careful analysis of the norm and direction of the weight, which enables us to obtain a tight bound on the time spent in the kernel regime.
This novel result of a kernel regime in homogeneous neural nets under weight decay may be of independent interest.
We further analyze the late phase using a gradient convergence argument, and it turns out that a slightly longer time suffices for convergence to the max-margin/min-norm predictor, thus exhibiting a sharp transition between the two phases.

Concretely, our contributions can be summarized as follows:
\vspace{-0.08in}
\begin{enumerate}
    \item For training homogeneous neural nets with large initialization and small weight decay on classification and regression tasks, we prove a very sharp transition from optimal solutions of kernel SVM/regression to KKT solutions of global max-margin/min-norm problems associated with the neural net.
    \item For classification and regression tasks respectively, we provide concrete examples with diagonal linear nets and overparameterized matrix completion, showing either grokking or misgrokking.
\end{enumerate}

\vspace{-0.06in}
\section{Motivating Experiment: Grokking in Modular Addition} \label{sec:l2_motivating}

\vspace{-0.03in}
In this section, we provide experiments to show that the initialization scale and weight decay are important factors in the grokking phenomenon, in the sense that enlarging the initialization scale or reducing the weight decay delays the sharp transition in test accuracy.


\myparagraph{Grokking in Modular Addition.}

Following many previous works~\citep{nanda2023progress,gromov2023grokking},
we focus on {\em modular addition}, which belongs to the modular arithmetic tasks where the grokking phenomenon was first observed~\citep{power2022grokking}.
The task is to learn the addition operation over $\mathbb{Z}_p$: randomly split $\{(a, b, c) : a + b \equiv c \pmod{p}\}$ into training and test sets, and train a neural net on the training set to predict $c$ given input pair $(a, b)$.
The grokking phenomenon has been observed in learning this task under many training regimes, including training transformers and MLPs on cross-entropy and squared loss with various optimizers (full-batch GD, SGD, Adam, AdamW, etc.)~\citep{power2022grokking,liu2022towards,thilak2022slingshot,gromov2023grokking}. For simplicity, we focus on a simple setting without involving too many confounding factors: training a two-layer ReLU net with full-batch GD. More specifically, we represent the input $(a, b) \in \mathbb{Z}_{p} \times \mathbb{Z}_{p}$ as the concatenation of the one-hot representations of $a$ and $b$, resulting in an input vector $\vx \in \R^{2p}$, and then the neural net processes the input as follows:
\begin{equation}
    f(\vtheta; \vx) = \mW_2 \, \ReLU(\mW_1 \vx + \vb_1),
\end{equation}
where $\vtheta = (\mW_1, \vb_1, \mW_2)$ is the parameter vector consisting of $\mW_1 \in \R^{h \times 2p}, \vb_1 \in \R^h, \mW_2 \in \R^{p \times h}$, $h$ is the width, and $\ReLU(\vv)$ stands for the ReLU activation that maps each entry $v_i$ of a vector $\vv$ to $\max\{v_i, 0\}$.
The neural net output $f(\vtheta; \vx) \in \R^p$ is treated as logits and is fed into the standard cross-entropy loss in training. We set the modulus $p$ as $97$, the training data size as $40\%$ of the total number of data ($p^2$), the width $h$ as $1024$, the learning rate as $0.002$ and weight decay as $10^{-4}$. \Cref{fig:mod-add-motivating-default} successfully reproduces the grokking phenomenon: the training accuracy reaches $100\%$ in $10^6$ steps while the test accuracy remains close to random guessing, but after training for one additional order of magnitude in the number of steps, the test accuracy suddenly jumps to $100\%$.

\begin{figure}[t]
    \vspace{-0.5in}
    \centering
    \begin{minipage}[t]{0.33\textwidth}
        \centering
        \includegraphics[width=\linewidth]{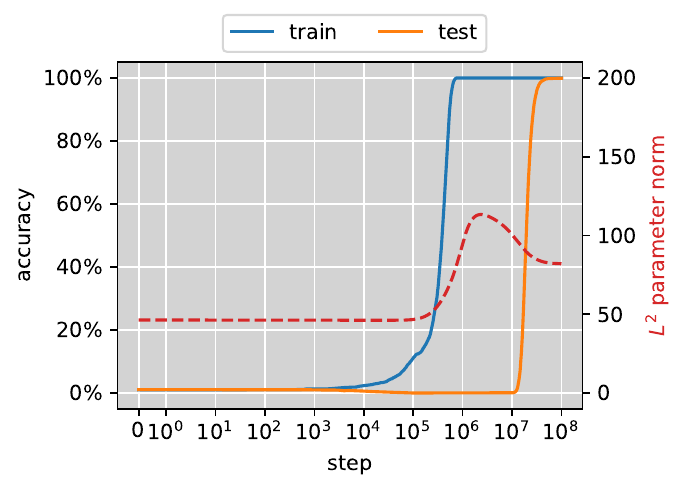}
        \subcaption{He init, $\lambda = 10^{-4}$}
        \label{fig:mod-add-motivating-default}
    \end{minipage}\hfill
    \begin{minipage}[t]{0.33\textwidth}
        \centering
        \includegraphics[width=\linewidth]{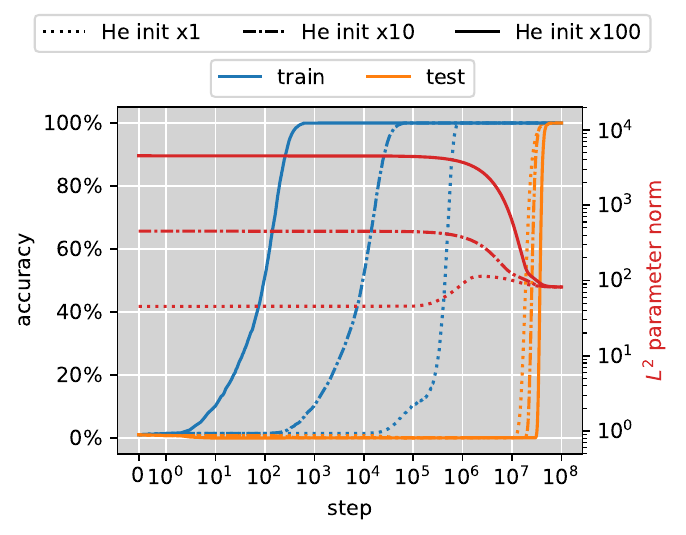}
        \subcaption{Effect of Initialization Scale}
        \label{fig:mod-add-motivating-change-init}
    \end{minipage}\hfill
    \begin{minipage}[t]{0.33\textwidth}
        \centering
        \includegraphics[width=\linewidth]{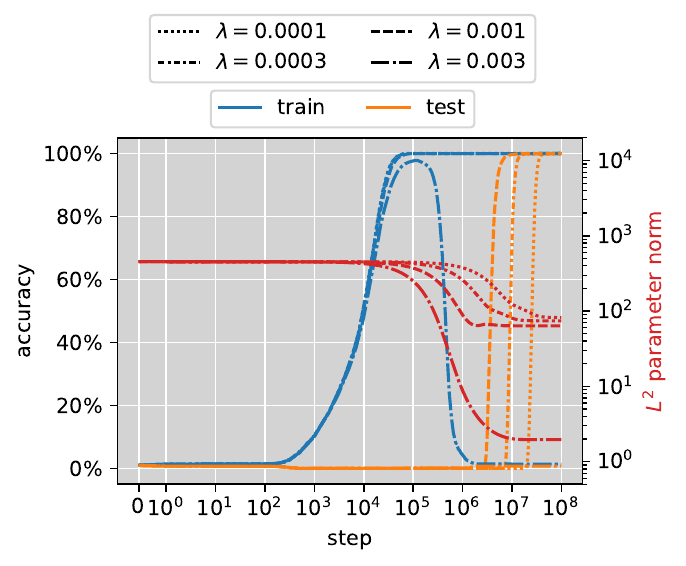}
        \subcaption{Effect of Weight Decay}
        \label{fig:mod-add-motivating-change-wd}
    \end{minipage}
    \caption{Training two-layer ReLU nets for modular addition exhibits grokking with He initialization and weight decay. Enlarging the initialization scale or reducing the weight decay delays the sharp transition in test accuracy.}
    \label{fig:mod-add-motivating}
    \vspace{-0.1in}
\end{figure}

A natural question is how this grokking phenomenon depends on the training tricks in the pipeline. \citet{liu2023omnigrok} showed that, empirically, using a large initialization scale and a small but non-zero weight decay can induce grokking on various tasks even beyond modular arithmetic, including image classification on MNIST and sentiment classification on IMDB.
Indeed, our experiments on modular addition can confirm the importance of initialization scale and weight decay.

\myparagraph{Effect of Initialization Scale.}
To study the effect of initialization, we scale up the standard He initialization~\citep{he2015delving} by a factor of $\alpha > 0$, and then run the same training procedure. However, the model at large initialization produces a very large random number for each training sample, which induces training instability. To remove this confounding factor, we set the weights $\mW_2$ in the second layer to $\vzero$ so that the initial model output is always $0$. Results in \Cref{fig:mod-add-motivating-change-init} show that the sharp transition in test accuracy is delayed when $\alpha$ increases from $1$ to $100$.

\myparagraph{Effect of Weight Decay.}
In~\Cref{fig:mod-add-motivating-change-wd}, we run experiments with different weight decay values. We observe that increasing weight decay makes the transition happen earlier. However, when weight decay surpasses a small threshold, the regularization strength can become so strong that it even hinders the optimization of the training loss and causes the training accuracy to collapse eventually. This collapse due to weight decay is also consistent with the observation in~\citet{lewkowycz2020training}.
Conversely, as weight decay decreases, the grokking phenomenon becomes more significant because now it takes more time for the test accuracy to exhibit the sharp transition. But the weight decay has to be non-zero: we will discuss in~\Cref{sec:discussion-no-wd} that training without weight decay leads to perfect generalization in the end, but the transition in test accuracy is no longer sharp.

\myparagraph{Explanation?}
\citet{liu2023omnigrok} built their intuition upon an empirical observation from~\citet{fort2019goldilocks}: there is a narrow range of weight norm, called {\em Goldilocks zone}, where the generalization inside the zone is better than the outside. It is argued that the weight norm can take a longer time to enter this Goldilocks zone when training with a larger initial weight norm or smaller weight decay, hence causing the delayed generalization. However, \citet{fort2019goldilocks,liu2023omnigrok} did not provide much explanation on why such a narrow Goldilocks zone could exist in the first place. When talking about MLPs and CNNs with ReLU activation, which are commonly used in practice, this becomes even more mysterious: the output functions $f(\vtheta; \vx)$ of these neural nets are homogeneous to their parameters, i.e., $f(c\vtheta; \vx) = c^L f(\vtheta; \vx)$ for all $c > 0$~\citep{lyu2020gradient,ji2020directional,kunin2022asymmetric}. As a result, it is impossible to explain the effect of norm just from the expressive power, since all classifiers that can be represented with a certain norm can also be represented by all other norms. This motivates us to dive deep into the training dynamics in grokking via rigorous theoretical analysis.

\vspace{-0.06in}
\section{Grokking with Large Initialization and Small Weight Decay}

\vspace{-0.03in}
In this section, we present our theory on homogeneous neural nets with large initialization and small weight decay, which attributes grokking to a dichotomy of early and late phase implicit biases.

\subsection{Theoretical Setup} \label{sec:theoretical_setup}

We focus on training models for classification and regression. Let $\cDX$ be the input distribution. For every input $\vx \sim \cDX$, let $y^*(\vx)$ be the classification/regression target of $\vx$. 
We parameterize the model with $\vtheta$ and use $f(\vtheta; \vx)$ to denote the model output on input $\vx$.
We train the model by optimizing an empirical loss on a dataset $\{(\vx_i, y_i)\}_{i=1}^{n}$ where the data points are drawn i.i.d.~as $\vx_i \sim \cDX, y_i = y^*(\vx_i)$. 
For each $i \in [n]$, we write $f_i(\vtheta) := f(\vtheta; \vx_i)$ for short.
We assume the model is {\em homogeneous} with respect to its parameters $\vtheta$, a common assumption in analyzing neural networks~\citep{lyu2020gradient,ji2020directional,telgarsky2023feature,woodworth2020kernel}. 
\begin{assumption}
\label{asmp:homogeneous}
    For all $\vx$, $f(\vtheta; \vx)$ is $L$-homogeneous (i.e., $f_i(c\vtheta) = c^L f_i(\vtheta)$ for all $c > 0$) and $\cC^2$-smooth with respect to $\vtheta$.
\end{assumption}

\vspace{-0.04in}
The two-layer net in our motivating experiment and other MLPs and CNNs with ReLU activation are indeed $L$-homogeneous. However, we also need to assume the $\cC^2$-smoothness to ease the definition and analysis of gradient flow, which excludes the use of any non-smooth activation. This can be potentially addressed by defining and analyzing gradient flow via Clarke's subdifferential~\citep{clarke1975generalized,clarke2001nonsmooth,lyu2020gradient,ji2020directional}, but we choose to make this assumption for simplicity. Note that our concrete examples of grokking on sparse linear classification and matrix completion indeed satisfy this assumption.

As motivated in~\Cref{sec:l2_motivating}, we consider training with large initialization and small weight decay. Mathematically, we start training from $\alpha \vthetauinit$, where $\vthetauinit \in \R^d$ is a fixed vector and $\alpha$ is a large factor controlling the initialization scale.
We also assume a small but non-zero weight decay $\lambda > 0$. To study the asymptotics more conveniently, we regard $\lambda$ as a function of $\alpha$
with order $\lambda(\alpha) = \Theta(\alpha^{-p})$ for some positive $p = \Theta(1)$.

To avoid loss explosion at initialization, we deterministically set or randomly sample $\vthetauinit$ in a way that the initial output of the model is zero. This can be done by setting the last layer weights to zero (same as our experiments in~\Cref{sec:l2_motivating}), using the symmetrized initialization in~\citet{chizat2019lazy},
or using the ``difference trick'' in~\citet{hu2020Simple}.
This zero-output initialization is also used in 
many previous studies of implicit bias~\citep{chizat2019lazy,woodworth2020kernel,moroshko2020implicit} for the sake of mathematical simplicity. We note that our analysis should be easily extended to random initialization with small outputs, just as in~\citet{chizat2019lazy,arora2019fine}.

\begin{assumption}
    $f(\vthetauinit;\vx) = 0$ for all $\vx$.
\end{assumption}

\vspace{-0.04in}
We define the vanilla training loss as the average over the loss values of each individual sample:
\begin{equation}
\label{eq:loss-vanilla}
    \cL(\vtheta) = \frac{1}{n}\sum_{i=1}^{n} \ell(f_i(\vtheta); y_i).
\end{equation}
With weight decay $\lambda$, it becomes the following regularized training loss:
\begin{equation}
\label{eq:loss-regularized}
    \cL_{\lambda}(\vtheta) = \cL(\vtheta) + \frac{\lambda}{2} \normtwo{\vtheta}^2.
\end{equation}
For simplicity, in this paper, we consider minimizing the loss via gradient flow $\frac{\dd \vtheta}{\dd t} = - \nabla \cL_{\lambda}(\vtheta)$, which is the continuous counterpart of gradient descent and stochastic gradient descent when the learning rate goes to $0$. We use $\vtheta(t; \vtheta_0)$ to denote the parameter at time $t$ by running gradient flow from $\vtheta_0$, and we write $\vtheta(t)$ when the initialization $\vtheta_0$ is clear from the context.

\subsection{Grokking in Classification}

In this subsection, we study the grokking phenomenon on classification tasks.
For simplicity, we restrict to binary classification problems, where $y^*(\vx) \in \{-1, +1\}$.
While in practice it is standard to use the logistic loss $\ell(\hat{y}, y) = \log(1 + e^{-y\hat{y}})$ (a.k.a.~binary cross-entropy loss), we follow the implicit bias literature to analyze the exponential loss 
$\ell(\hat{y}; y) = e^{-y\hat{y}}$ as a simple surrogate, which has the same tail behavior as the logistic loss
when $y\hat{y} \to +\infty$ and thus usually leads to the same implicit bias~\citep{soudry2018implicit,nacson2019convergence,lyu2020gradient,chizat2020implicit}.

By carefully analyzing the training dynamics, we rigorously prove that there is a sharp transition around $\frac{1}{\lambda} \log \alpha$. Before this point, gradient flow fits the training data perfectly while maintaining the parameter direction within a local region near the initial direction, which causes the classifier to behave like a kernel classifier based on Neural Tangent Kernel (NTK)~\citep{jacot2018neural,arora2019fine,arora2019exact}. After the transition, however, the gradient flow escapes the local region and makes efforts to maximize the margin. 
Following the nomenclature in~\citep{moroshko2020implicit}, we call the first regime the {\em kernel regime} and the second regime the {\em rich regime}.
The key difference to the existing works analyzing the kernel and rich regimes~\citep{moroshko2020implicit,telgarsky2023feature} is that the transition in our case is provably sharp, which is a crucial ingredient for the grokking phenomenon.

\subsubsection{Kernel Regime} \label{sec:cls-kernel}

First, we show that gradient flow gets stuck in the kernel regime
over the initial period of $\frac{1-c}{\lambda} \log \alpha$,
where the model behaves as if it were optimizing over a linearized model. More specifically, for all $\vx$, we define $\nabla f(\vthetauinit; \vx)$ as the NTK feature of $\vx$. As we are considering over-parameterized models, where the dimension of $\vtheta$ is larger than the number of data $n$, it is natural to assume that the NTK features of the training data are linearly separable~\citep{ji2020polylogarithmic,telgarsky2023feature}:
\begin{assumption}
    There exists $\vh$ such that $y_i \inner{\nabla f_i(\vthetauinit)}{\vh} > 0$ for all $i \in [n]$.   
\end{assumption}

\vspace{-0.04in}
Let $\vhoptntk$ be the unit vector so that a linear classifier with weight $\vhoptntk$ can attain the max $L^2$-margin on the NTK features of the training data,
$\{(\nabla f_i(\vthetauinit), y_i)\}_{i=1}^{n}$, and $\gammantk$ be the corresponding $L^2$-margin.
That is, $\vhoptntk$ is the unique unit vector that points to the direction of
the unique optimal solution
to the following constrained optimization problem:
\begin{align}
    \min        \quad \frac{1}{2}\normtwo{\vh}^2 \quad
    \text{s.t.} \quad y_i \inner{\nabla f_i(\vthetauinit)}{\vh} \ge 1, \quad \forall i \in [n],
    \tag{K1}
    \label{eq:ntk-margin-problem}
\end{align}
and $\gammantk := \min_{i \in [n]} \{ y_i \inne{\nabla f_i(\vthetauinit)}{\vhoptntk} \}$.

The following theorem states that for any $c \in (0,1)$, the solution found by gradient flow at time $\frac{1 -c}{\lambda} \log \alpha$ represents the same classifier as the max $L^2$-margin linear classifier on the NTK features:
\begin{theorem} \label{thm:l2-cls-kernel}
    For any all constants $c \in (0, 1)$, letting
    $\Tcn(\alpha) := \frac{1-c}{\lambda} \log \alpha$, it holds that
    \begin{align*}
        \forall \vx \in \R^d: \qquad \lim_{\alpha \to +\infty} 
        \frac{1}{Z(\alpha)} f(\vtheta(\Tcn(\alpha); \alpha \vthetauinit); \vx) = \inner{\nabla f(\vthetauinit; \vx)}{\vhoptntk},
    \end{align*}
    where $Z(\alpha) := \frac{1}{\gammantk} \log \frac{\alpha^c}{\lambda}$ is a normalizing factor.
\end{theorem}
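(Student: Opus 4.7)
The plan is to linearize the gradient flow around the ``pure weight-decay'' trajectory $\alpha e^{-\lambda t}\vthetauinit$ and reduce the problem to an ODE for an effective NTK weight vector. First I would write $\vtheta(t) = \alpha e^{-\lambda t}\vthetauinit + \vdelta(t)$, so that $\dot\vdelta = -\lambda\vdelta - \nabla\cL(\vtheta)$. Under the bootstrap hypothesis that $\normtwo{\vdelta(t)} \ll \tau(t) := \alpha e^{-\lambda t}$, the $L$-homogeneity of $f$ gives $\nabla f_i(\vtheta) \approx \tau^{L-1}\vphi_i$ and $f_i(\vtheta) \approx \tau^{L-1}\inner{\vphi_i}{\vdelta}$, where $\vphi_i := \nabla f_i(\vthetauinit)$. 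Defining the rescaled NTK weight $\vw(t) := \tau(t)^{L-1}\vdelta(t)$, a chain-rule computation using $\dot\tau = -\lambda\tau$ yields
\[
\dot\vw = -L\lambda\vw + \tau^{2(L-1)}\cdot\frac{1}{n}\sum_{i=1}^n y_i\vphi_i\, e^{-y_i\inner{\vphi_i}{\vw}},
\]
with $\vw(0)=0$. Moreover, applying Euler's identity to $f_i(\vthetauinit)=0$ gives $\inner{\vthetauinit}{\vphi_i}=0$, so $\vw(t)$ stays within $\mathrm{span}\{\vphi_1,\ldots,\vphi_n\}$ throughout.

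Next I would prove $\vw(\Tcn(\alpha))/Z(\alpha)\to\vhoptntk$ via a quasi-equilibrium analysis of the displayed $\vw$-ODE. It has the form of a linear-model exponential-loss gradient flow on the NTK features $\vphi_i$, together with an effective ridge coefficient $L\lambda/\tau^{2(L-1)}$ that vanishes as $\alpha\to\infty$. I would show the solution rapidly enters a quasi-equilibrium in which $L\lambda\vw \approx \tau^{2(L-1)}\cdot\frac{1}{n}\sum_i y_i\vphi_i e^{-y_i\inner{\vphi_i}{\vw}}$, and then invoke the classical ridge-path implicit bias for exponential loss: as the effective regularization vanishes, the equilibrium points in the direction $\vhoptntk$ with support-vector margins tending to $\log(\alpha^c/\lambda)$, which identifies $Z$. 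The tracking is justified because the local relaxation rate near the quasi-equilibrium scales like $\Theta(\lambda\log\alpha)$, much faster than the rate at which $\tau$ and hence the equilibrium evolve.

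Finally I would close the bootstrap. The resulting $\normtwo{\vw(\Tcn)} = O(\log\alpha)/\gammantk$ implies $\normtwo{\vdelta(\Tcn)} = \normtwo{\vw}/\tau^{L-1} = O(\log\alpha)\cdot\alpha^{-c(L-1)}$, which is $o(\alpha^c) = o(\tau(\Tcn))$, so the linearization is valid at time $\Tcn$; a Gronwall-type argument extends this uniformly over $[0,\Tcn]$. One also needs $\normtwo{\vtheta(t)} \approx \tau(t)$, which follows from checking that the loss-gradient contribution to $\frac{d}{dt}\normtwo{\vtheta(t)}$ is dominated by the weight-decay term. For arbitrary $\vx\in\R^d$, substituting the linearization gives $f(\vtheta(\Tcn);\vx) \approx \inner{\nabla f(\vthetauinit;\vx)}{\vw(\Tcn)}$, and applying $\vw/Z\to\vhoptntk$ yields the theorem.

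The main obstacle will be the second step: the sharp asymptotic analysis of the time-inhomogeneous $\vw$-ODE. The classical exp-loss implicit-bias results (Soudry et al., Lyu--Li, Ji--Telgarsky) handle the autonomous gradient flow, and the ridge-path literature (Rosset--Zhu--Hastie) handles the static regularization limit, but here both are coupled and the effective regularization itself evolves with $\tau$. Establishing convergence $\vw(\Tcn)/Z\to\vhoptntk$ uniformly in $c\in(0,1)$, and in particular nailing down the logarithmic normalization factor $Z$, will require a careful two-scale matched-asymptotic argument combining the fast relaxation of $\vw$ to its instantaneous quasi-equilibrium with the slow evolution of that equilibrium as $\tau$ decays from $\alpha$ to $\alpha^c$.
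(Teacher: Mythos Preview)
Your setup coincides with the paper's: your $\vw$ is exactly the paper's variable $\vh := \alpha^L e^{-L\lambda t}\big(\tfrac{e^{\lambda t}}{\alpha}\vtheta - \vthetauinit\big)$, and your displayed ODE for $\vw$ matches the paper's computation of $\tfrac{\dd\vh}{\dd t}$ (Lemma~C.8, up to the Taylor error). The bootstrap idea is also the same in spirit; the paper implements it by defining $T_{\max}$ as the first exit time from a small $\vdelta$-neighborhood, bounding $\|\vdelta(t)\|$ directly by integrating the loss gradient against an explicit loss \emph{upper} bound (Lemmas~C.4--C.5), rather than via Gronwall.

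Where you genuinely differ is the analysis of the $\vw$-ODE. You propose a quasi-equilibrium / ridge-path argument: show $\vw$ tracks the minimizer of the exp-loss with effective ridge $L\lambda/\tau^{2(L-1)}$, then invoke Rosset--Zhu--Hastie as that ridge vanishes. This is conceptually appealing and gives the right intuition, but as you correctly flag, making the tracking rigorous is the crux, and it requires handling a fast initial transient (at $t=0$, $\vw=0$ is far from the equilibrium whose norm is already $\Theta(\log\alpha)$) plus a moving-target stability argument. The paper sidesteps all of this with a much shorter, self-contained trick: it proves matching \emph{upper and lower} bounds on $\cL(\vtheta(t))$ (Lemmas~C.4 and~C.7), together with the one-sided differential inequality
\[
\frac{\dd}{\dd t}\log\frac{1}{\cL} \;\ge\; \big(\gammantk + o(1)\big)\,\frac{\dd\|\vh\|}{\dd t}
\]
(Lemma~C.8). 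Integrating gives $\log\tfrac{1}{\cL} \ge (\gammantk+o(1))\|\vh\|$; on the other side, the Taylor expansion yields $\min_i y_i\inne{\vphi_i}{\vh} \ge \log\tfrac{1}{\cL} - o(\|\vh\|)$. Squeezing, the normalized margin $\min_i y_i\inne{\vphi_i}{\vh}/\|\vh\|$ is forced to $\gammantk$, and directional convergence follows from uniqueness of the NTK max-margin direction. The norm $\|\vh\|$ and hence $Z(\alpha)$ are then read off from the loss sandwich. No equilibrium tracking, no ridge-path, no two-scale analysis.

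So: your route is viable but harder; the paper's route is more elementary. The practical advantage of the paper's argument is that the differential inequality in Lemma~C.8 replaces your entire matched-asymptotics step with a direct comparison of two scalar rates, and the loss lower bound (which you do not mention needing) is what pins down $\|\vh\|$ and $Z$ without having to identify any instantaneous equilibrium. If you pursue your approach, you will also need a loss lower bound anyway to locate the equilibrium norm precisely, so you may as well adopt the paper's shortcut once you have both loss bounds in hand.
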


\myparagraph{Key Proof Insight.}
The standard NTK-based dynamical analysis requires $\vtheta(t; \alpha\vthetauinit) \approx \alpha\vthetauinit$, but the weight decay in our case can significantly change the norm, hence violating this condition. The key ingredient of our proof is a much more careful dynamical analysis showing that $\vtheta(t; \alpha\vthetauinit) \approx \alpha e^{-\lambda t} \vthetauinit$, i.e., the norm of $\vtheta(t; \alpha\vthetauinit)$ decays but the direction remains close to $\vthetauinit$. Then by $L$-homogeneity and Taylor expansion, we have
\begin{align*}
    f(\vtheta; \vx) = \left(\alpha e^{-\lambda t}\right)^{L} f(\tfrac{e^{\lambda t}}{\alpha} \vtheta; \vx) &\approx \left(\alpha e^{-\lambda t}\right)^{L} \left( f(\vthetauinit; \vx) + \inner{\nabla f(\vthetauinit; \vx)}{\tfrac{e^{\lambda t}}{\alpha}\vtheta -  \vthetauinit} \right) \\
    &= \alpha^{L} e^{-L\lambda t} \inner{\nabla f(\vthetauinit; \vx)}{\tfrac{e^{\lambda t}}{\alpha}\vtheta -  \vthetauinit}.
\end{align*} 
The large scaling factor $\alpha^L e^{-L\lambda t}$ above enables fitting the dataset even when the change in direction $\tfrac{e^{\lambda t}}{\alpha}\vtheta -  \vthetauinit$ is very small. Indeed, we show that $\tfrac{e^{\lambda t}}{\alpha}\vtheta -  \vthetauinit \approx \frac{1}{\gammantk} \alpha^L e^{-L\lambda t} \log \frac{\alpha^c}{\lambda} \vhoptntk$ at time $\frac{1-c}{\lambda} \log \alpha$ by closely tracking the dynamics, hence completing the proof. 
See~\Cref{sec:l2-cls-kernel-proof} for details.
When $\alpha^L e^{-L\lambda t}$ is no longer a large scaling factor, namely $t = \frac{1}{\lambda} (\log \alpha + \omega(1))$, this analysis breaks and the dynamics enter the rich regime.


\subsubsection{Rich Regime}
Next, we show that continuing the gradient flow for a slightly long time to time $\frac{1+c}{\lambda}\log\alpha$, it is able to escape the kernel regime. More specifically, consider the following constrained optimization problem that aims at maximizing the margin of the predictor $f(\vtheta;\vx)$ on the training data $\{(\vx_i,y_i)\}_{i=1}^n$:
\begin{align}
    \min        \quad \frac{1}{2}\normtwo{\vtheta}^2 \quad
    \text{s.t.} \quad y_i f_i(\vtheta) \ge 1, \quad \forall i \in [n].
    \tag{R1}
    \label{eq:l2-margin-problem}
\end{align}
Then we have the following directional convergence result:
\begin{theorem} \label{thm:l2-cls-rich}
    For all constants $c > 0$
    and for every sequence $\{\alpha_k\}_{k \ge 1}$ with $\alpha_k \to +\infty$,
    letting
    $\Tcp(\alpha) := \frac{1+c}{\lambda} \log \alpha$, 
    there exists a time sequence $\{t_k\}_{k \ge 1}$
    such that
    $\frac{1}{\lambda} \log \alpha_k \le t_k \le \Tcp(\alpha_k)$
    and every limit point of
    $\left\{\frac
        {\vtheta(t_k; \alpha_k \vthetauinit)}
        {\normtwo{\vtheta(t_k; \alpha_k \vthetauinit)}}
        : k \ge 1\right\}$
    is along the direction of a KKT point of~\eqref{eq:l2-margin-problem}.
\end{theorem}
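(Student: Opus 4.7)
The plan is to treat Theorem~\ref{thm:l2-cls-kernel} as providing a ``warm start'' at time $T_1 := \frac{1-c_0}{\lambda}\log\alpha_k$ for some small $c_0 \in (0,c)$, where the training loss is already tiny and every sample carries a large functional margin, and then combine a norm-equilibration estimate with a pigeonhole argument over $[\frac{1}{\lambda}\log\alpha_k, \Tcp(\alpha_k)]$ to locate a nearly-stationary time $t_k$. A Chizat-style limiting argument then converts near-stationarity of $\cL_\lambda$ at $t_k$ into KKT stationarity for \eqref{eq:l2-margin-problem}.

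First I would upgrade Theorem~\ref{thm:l2-cls-kernel} to a pointwise bound on the training set. Since $y_i\inner{\nabla f_i(\vthetauinit)}{\vhoptntk} \ge \gammantk$ for every $i$, Theorem~\ref{thm:l2-cls-kernel} gives $y_i f_i(\vtheta(T_1)) \ge \tfrac{1}{2}\gammantk Z(\alpha_k) = \Omega(\log(\alpha_k^{c_0}/\lambda))$ for large $\alpha_k$, hence $\cL(\vtheta(T_1)) \le \poly(\lambda/\alpha_k^{c_0})$. Then I would use $L$-homogeneity to write
\begin{equation*}
    \frac{\dd}{\dd t}\tfrac{1}{2}\normtwo{\vtheta(t)}^2 = \tfrac{L}{n}\sum_{i=1}^{n} e^{-y_i f_i(\vtheta(t))}\, y_i f_i(\vtheta(t)) \;-\; \lambda \normtwo{\vtheta(t)}^2,
\end{equation*}
and show that this equation drives $\normtwo{\vtheta(t)}^L \asymp \log(1/\lambda)$ on a timescale $O(1/\lambda)$, at which point $\cL_\lambda = \cL + \tfrac{\lambda}{2}\normtwo{\vtheta}^2$ reaches its equilibrium scale $\Theta(\lambda (\log(1/\lambda))^{2/L})$. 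This controls the initial energy at the beginning of the pigeonhole window.

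Next, from $\frac{\dd}{\dd t}\cL_\lambda(\vtheta(t)) = -\normtwo{\nabla \cL_\lambda(\vtheta(t))}^2$ I would extract a time $t_k \in [\frac{1}{\lambda}\log\alpha_k, \Tcp(\alpha_k)]$ with $\normtwo{\nabla\cL_\lambda(\vtheta(t_k))} = o(\lambda \normtwo{\vtheta(t_k)})$, using the fact that the window has length $\Theta(c/\lambda \cdot \log\alpha_k)$ and that $\cL_\lambda$ at the start of this window has already dropped to its equilibrium scale. Writing $-\nabla\cL(\vtheta) = \tfrac{1}{n}\sum_i e^{-y_i f_i(\vtheta)}\, y_i \nabla f_i(\vtheta)$ and defining nonnegative dual multipliers $\beta_i^{(k)} := e^{-y_i f_i(\vtheta(t_k))}\normtwo{\vtheta(t_k)}^{L-2}/(n\lambda)$, the approximate stationarity $\lambda\vtheta(t_k) \approx -\nabla\cL(\vtheta(t_k))$ rescales, via the $(L-1)$-homogeneity of $\nabla f_i$, into an approximate version of the KKT stationarity condition $\bar\vtheta = \sum_i \beta_i y_i \nabla f_i(\bar\vtheta)$ for \eqref{eq:l2-margin-problem} after normalizing $\bar\vtheta := \vtheta(t_k)/\normtwo{\vtheta(t_k)}$. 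Primal feasibility and complementary slackness then follow in the limit because $y_i f_i(\vtheta(t_k))/\normtwo{\vtheta(t_k)}^L$ concentrates on a common value on the samples with non-vanishing $\beta_i^{(k)}$, forcing $y_i f_i(\bar\vtheta) \ge 1$ with equality on the support.

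The main obstacle is controlling the weight-decay-induced interaction between the norm and the direction inside the rich regime. Without weight decay (Lyu--Li), the smoothed margin is monotone and the proof reduces to a clean Polyak-type convergence; here the norm equilibrates rather than growing, so one must quantitatively verify that this equilibration completes within the extra time budget of only $\frac{c}{\lambda}\log\alpha_k$ past the end of the kernel regime, and that the rescaled multipliers $\beta_i^{(k)}\normtwo{\vtheta(t_k)}^{L-2}$ neither blow up nor collapse to zero along the chosen subsequence. A secondary subtlety is that the theorem only asserts existence of such $t_k$ rather than convergence for all large $t$, which is exactly what the pigeonhole argument provides; proving a full for-all statement would require additional control on oscillations near the critical set of $\cL_\lambda$.
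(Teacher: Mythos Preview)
Your proposal is correct and follows essentially the same route as the paper: use the kernel-regime analysis to obtain a warm start with $\cL_\lambda(\vtheta) = \cO(\lambda(\log\tfrac{1}{\lambda})^{2/L})$, deduce the tight norm bound $\normtwo{\vtheta(t)} = \Theta((\log\tfrac{1}{\lambda})^{1/L})$, apply a pigeonhole argument on $[\tfrac{1}{\lambda}\log\alpha_k,\Tcp(\alpha_k)]$ to find $t_k$ with $\normtwo{\nabla\cL_\lambda(\vtheta(t_k))}^2 = \cO(\lambda^2(\log\tfrac{1}{\lambda})^{-(1-2/L)})$, then rescale into the KKT system using exactly the multipliers $\mu_{k,i} = \tfrac{\normtwo{\vtheta}^{L-2}}{n\lambda}e^{-y_i f_i(\vtheta(t_k))}$ and pass to a convergent subsequence. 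Two small remarks: the paper gets the norm bound directly from the monotonicity of $\cL_\lambda$ (via $\normtwo{\vtheta}^2 \le \tfrac{2}{\lambda}\cL_\lambda$ for the upper bound and $\log\tfrac{1}{n\cL_\lambda} \le y_i f_i(\vtheta) \le B_0\normtwo{\vtheta}^L$ for the lower bound) rather than by tracking the norm ODE, which is a bit cleaner than what you sketch; and the warm start is obtained from the loss-convergence lemma underlying Theorem~\ref{thm:l2-cls-kernel} rather than from the theorem statement itself, since the latter is only an asymptotic limit and does not immediately give a quantitative bound on $\cL_\lambda$.
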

For the problem \Cref{eq:l2-margin-problem}, the KKT condition is a first-order necessary condition for global optimality, but it is not sufficient in general since $f_i(\vtheta)$ can be highly non-convex. Nonetheless, since gradient flow can easily get trapped at spurious minima with only first-order information, the KKT condition is widely adopted as a surrogate for global optimality in theoretical analysis~\citep{lyu2020gradient,wang2021implicit,kunin2022asymmetric}.

\myparagraph{Key Proof Insight.} The key is to use the loss convergence and norm decay bounds from the kernel regime as a starting point to obtain a small upper bound for the gradient $\nabla \cL_{\lambda}(\vtheta)$. Then we connect the gradient upper bounds with KKT conditions. See~\Cref{sec:l2-cls-rich-proof} for the proof.

\subsubsection{Example: Linear Classification with Diagonal Linear Nets}\label{subsec:diagonal_linear_networks}

Now, we exemplify how our implicit bias results imply a sharp transition in test accuracy in a concrete setting: training diagonal linear nets for linear classification. Let $\cDX$ be a distribution over $\R^d$ and $y^*(\vx) = \sign(\inner{\vwopt}{\vx}) \in \{\pm 1\}$ be the ground-truth target for binary classification, where $\vwopt \in \R^d$ is an unknown vector.
Following~\citet{moroshko2020implicit}, we consider training a so-called two-layer ``diagonal'' linear net to learn $y^*$. On input $\vx \in \R^d$, the model outputs the following function $f(\vtheta; \vx)$, where $\vtheta := (\vu, \vv) \in \R^d \times \R^d \simeq \R^{2d}$ is the model parameter.
\begin{align}\label{eq:diagonal_linear_network}
    f(\vtheta; \vx) = \sum_{k=1}^{d} u_k^2 x_k - \sum_{k=1}^{d}v_k^2 x_k.
\end{align}
This model is considered a diagonal net because it can be seen as a sparsely-connected two-layer net with $2d$ hidden neurons, each of which only inputs and outputs a single scalar. More specifically, 
the input $\vx$ is first expanded to an $2d$-dimensional vector $(\vx, -\vx)$. Then the $k$-th hidden neuron in the first half inputs $x_k$ and outputs $u_k x_k$, and the $k$-th hidden neuron in the second half inputs $-x_k$ and outputs $v_k x_k$.
The output neuron in the second layer shares the same weights as the hidden neurons and takes a weighted sum over the hidden neuron outputs according to their weights: $f(\vtheta; \vx) = \sum_{k=1}^{d} u_k \cdot (u_k \cdot x_k) + \sum_{k=1}^{d}v_k \cdot (v_k \cdot (-x_k))$. 
Note that the model output is always linear with the input. 
For convenience, we can write $f(\vtheta; \vx) = \inner{\vw(\vtheta)}{\vx}$
with effective weight $\vw(\vtheta) := \vu^{\odot 2} - \vv^{\odot 2}$.

In this setup, we can understand the early and late phase implicit biases very concretely. 
We start training from a large initialization: $\vtheta(0) = \alpha \vthetauinit$, where $\alpha > 0$ controls the initialization scale and $\vthetauinit = (\vone, \vone)$. That is, $u_k = v_k = \alpha$ for all $1 \le k \le d$ at $t = 0$.
As noted in~\citet{moroshko2020implicit}, the kernel feature for each data point is $\nabla f(\vthetauinit; \vx) = (2\vx, -2\vx)$, so the max $L^2$-margin kernel classifier is the max $L^2$-margin linear classifier. This leads to the following corollary:
\begin{corollary}[Diagonal linear nets, kernel regime]
    Let $\Tcn(\alpha) := \frac{1-c}{\lambda} \log \alpha$.
    For all constants $c \in (0, 1)$, as $\alpha \to +\infty$, the normalized effective weight vector $\frac{\vw(\vtheta(\Tcn(\alpha); \alpha \vthetauinit))}{\normtwo{\vw(\vtheta(\Tcn(\alpha); \alpha \vthetauinit))}}$ converges to the max $L^2$-margin direction, namely $\vwopt_2 := \argmax_{\vw \in \cS^{d-1}}\{ \min_{i \in [n]} y_i\inne{\vw}{\vx_i}  \}$.
\end{corollary}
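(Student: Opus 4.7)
The plan is to reduce this corollary to a direct application of \Cref{thm:l2-cls-kernel}, exploiting the fact that for the diagonal linear net we have $f(\vtheta;\vx) = \inner{\vw(\vtheta)}{\vx}$, which is linear in $\vx$. So convergence of the scalar predictor at enough inputs will pin down the effective weight vector up to a known positive multiple, after which the claim about its normalized direction follows by a single division.

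First I would compute the NTK feature at $\vthetauinit = (\vone, \vone)$. Differentiating $f(\vtheta;\vx) = \sum_k u_k^2 x_k - \sum_k v_k^2 x_k$ with respect to $(\vu, \vv)$ and evaluating at $(\vone, \vone)$ gives $\nabla f(\vthetauinit; \vx) = (2\vx, -2\vx) \in \R^{2d}$. Second, I would solve the NTK max-margin problem \eqref{eq:ntk-margin-problem} in closed form. Writing $\vh = (\vh_1, \vh_2) \in \R^d \times \R^d$, the constraint $y_i \inner{\nabla f_i(\vthetauinit)}{\vh} \ge 1$ simplifies to $y_i \inner{2(\vh_1 - \vh_2)}{\vx_i} \ge 1$ and depends only on $\tilde\vw := 2(\vh_1 - \vh_2)$. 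For each fixed $\tilde\vw$, minimizing $\|\vh_1\|_2^2 + \|\vh_2\|_2^2$ subject to $\vh_1 - \vh_2 = \tilde\vw/2$ is a simple quadratic problem whose optimum is the symmetric split $\vh_1 = \tilde\vw/4$, $\vh_2 = -\tilde\vw/4$, with value $\|\tilde\vw\|_2^2/8$. Hence \eqref{eq:ntk-margin-problem} reduces to minimizing $\|\tilde\vw\|_2^2$ subject to $y_i \inner{\tilde\vw}{\vx_i} \ge 1$, i.e., the standard $L^2$ hard-margin SVM on the raw data $\{(\vx_i, y_i)\}$. Its optimizer $\tilde\vw^*$ is a positive multiple of $\vwopttwo$, so $\vhoptntk$ is proportional to $(\tilde\vw^*, -\tilde\vw^*)$, and therefore $\inner{\nabla f(\vthetauinit; \vx)}{\vhoptntk} = c \inner{\vwopttwo}{\vx}$ for every $\vx$ and some absolute constant $c > 0$ that does not depend on $\vx$.

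Third, I would feed this into \Cref{thm:l2-cls-kernel}. For every fixed $\vx$, the theorem asserts $\frac{1}{Z(\alpha)} f(\vtheta(\Tcn(\alpha); \alpha \vthetauinit); \vx) \to c \inner{\vwopttwo}{\vx}$ as $\alpha \to +\infty$. By the linearity $f(\vtheta;\vx) = \inner{\vw(\vtheta)}{\vx}$, applying this for each standard basis vector $\vx = \ve_k$ yields componentwise convergence $\frac{\vw(\vtheta(\Tcn(\alpha); \alpha\vthetauinit))}{Z(\alpha)} \to c\,\vwopttwo$ as vectors in $\R^d$. Since $c\,\vwopttwo \ne \vzero$, for all sufficiently large $\alpha$ the effective weight is nonzero, and dividing by its Euclidean norm gives $\frac{\vw(\vtheta(\Tcn(\alpha); \alpha\vthetauinit))}{\|\vw(\vtheta(\Tcn(\alpha); \alpha\vthetauinit))\|_2} \to \frac{c\,\vwopttwo}{\|c\,\vwopttwo\|_2} = \vwopttwo$, which is the claim.

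I do not expect a substantive obstacle: the only calculation of any substance is the closed-form reduction of \eqref{eq:ntk-margin-problem} to the standard SVM on $\{(\vx_i, y_i)\}$, which is a short symmetry/Lagrange argument. All of the dynamical content has already been absorbed into \Cref{thm:l2-cls-kernel}, and this corollary is essentially a translation of that theorem into the familiar max $L^2$-margin language for the specific parameterization \eqref{eq:diagonal_linear_network}.
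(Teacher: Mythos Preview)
Your proposal is correct and follows essentially the same approach as the paper: the paper simply notes that $\nabla f(\vthetauinit;\vx)=(2\vx,-2\vx)$, so the NTK max-margin problem \eqref{eq:ntk-margin-problem} reduces to the ordinary $L^2$ max-margin problem on $\{(\vx_i,y_i)\}$, after which the corollary is read off from \Cref{thm:l2-cls-kernel}. Your write-up fills in the symmetric-split computation for $\vhoptntk$ and the basis-vector trick to pass from pointwise convergence of the predictor to convergence of $\vw(\vtheta)/Z(\alpha)$, which are exactly the details the paper leaves implicit.
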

In contrast, if we train slightly longer from $\Tcn(\alpha)$ to $\Tcp(\alpha)$, we can obtain a KKT point of~\eqref{eq:l2-margin-problem}.
\begin{corollary}[Diagonal linear nets, rich regime]
    Let $\Tcp(\alpha) := \frac{1+c}{\lambda} \log \alpha$.
    For all constants $c \in (0, 1)$, and for every sequence $\{\alpha_k\}_{k \ge 1}$ with $\alpha_k \to +\infty$,
    there exists a time sequence $\{t_k\}_{k \ge 1}$
    such that
    $\frac{1}{\lambda} \log \alpha_k \le t_k \le \Tcp(\alpha_k)$
    and every limit point of $\left\{\frac{\vtheta(t_k; \alpha_k \vthetauinit)}{\normtwo{\vtheta(t_k; \alpha_k \vthetauinit)}} : k \ge 1\right\}$
    is along the direction of a KKT point of the problem
    $\min \normtwo{\vu}^2 + \normtwo{\vv}^2$ s.t. $y_i\inne{\vu^{\odot 2} - \vv^{\odot 2}}{\vx_i} \ge 1$.
\end{corollary}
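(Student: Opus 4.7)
The plan is to deduce the corollary as an immediate specialization of \Cref{thm:l2-cls-rich} to the diagonal linear net $f(\vtheta;\vx) = \inne{\vu^{\odot 2} - \vv^{\odot 2}}{\vx}$. First I verify that the structural hypotheses needed to invoke \Cref{thm:l2-cls-rich} all hold in this setting. Homogeneity and smoothness (\Cref{asmp:homogeneous}) are immediate: each coordinate of $\vu^{\odot 2} - \vv^{\odot 2}$ is a quadratic polynomial in $\vtheta$, so $f$ is $\cC^2$ and $L$-homogeneous with $L = 2$. The zero-output condition holds because $\vthetauinit = (\vone,\vone)$ gives $\vu^{\odot 2} = \vv^{\odot 2}$ and hence $f(\vthetauinit;\vx) = 0$ for all $\vx$. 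NTK separability is inherited from separability of the data by $\vwopt$: the NTK feature is $\nabla f(\vthetauinit;\vx) = (2\vx, -2\vx)$, and picking $\vh = (\vwopt, -\vwopt)$ yields $y_i \inne{\nabla f_i(\vthetauinit)}{\vh} = 4\, y_i \inne{\vx_i}{\vwopt} > 0$ for every $i$, because $y_i = \sign(\inne{\vwopt}{\vx_i})$.

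Next I instantiate the optimization problem \eqref{eq:l2-margin-problem} for this network. Plugging $f_i(\vtheta) = \inne{\vu^{\odot 2} - \vv^{\odot 2}}{\vx_i}$ and $\normtwo{\vtheta}^2 = \normtwo{\vu}^2 + \normtwo{\vv}^2$ into \eqref{eq:l2-margin-problem} yields
\begin{equation*}
    \min\ \tfrac{1}{2}\bigl(\normtwo{\vu}^2 + \normtwo{\vv}^2\bigr) \quad \text{s.t.} \quad y_i \inne{\vu^{\odot 2} - \vv^{\odot 2}}{\vx_i} \ge 1,\ \forall i \in [n],
\end{equation*}
which coincides with the problem in the corollary up to the harmless factor $\tfrac{1}{2}$ in the objective. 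Since multiplying the objective by a positive constant only rescales the KKT multipliers and preserves the set of KKT points, the two optimization problems share exactly the same KKT points.

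Combining these observations, the corollary is obtained by a direct invocation of \Cref{thm:l2-cls-rich}: for any sequence $\{\alpha_k\}$ with $\alpha_k \to +\infty$, the theorem supplies a time sequence $\{t_k\}$ with $\frac{1}{\lambda}\log\alpha_k \le t_k \le \Tcp(\alpha_k)$ such that every limit point of the normalized iterate $\vtheta(t_k;\alpha_k\vthetauinit)/\normtwo{\vtheta(t_k;\alpha_k\vthetauinit)}$ lies along the direction of a KKT point of \eqref{eq:l2-margin-problem}, which, by the correspondence just established, is the same as a KKT point of the problem in the corollary. I do not anticipate a serious obstacle here, since this is a pure specialization; the only delicate checks are the NTK-separability verification above and the KKT correspondence under the constant rescaling of the objective, both of which are routine.
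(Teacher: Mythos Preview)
Your proposal is correct and matches the paper's approach: the corollary is stated in the paper as a direct specialization of \Cref{thm:l2-cls-rich} to the diagonal linear net, with no separate proof given, and you have simply filled in the routine verifications (homogeneity with $L=2$, zero output at $\vthetauinit=(\vone,\vone)$, NTK separability via $\vh=(\vwopt,-\vwopt)$, and the identification of \eqref{eq:l2-margin-problem} with the stated problem).
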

As noted in these two works, in the diagonal linear case, optimizing \eqref{eq:l2-margin-problem} to the global optimum is equivalent to finding the max $L^1$-margin linear classifier: $\min \norm{\vw}_1$ s.t. $y_i \inne{\vw}{\vx_i} \ge 1, \forall i \in [n]$, or equivalently, $\vwopt_1 := \argmax_{\vw \in \cS^{d-1}}\{ \min_{i \in [n]} y_i\inne{\vw}{\vx_i}  \}$.
Therefore, our theory suggests a sharp transition at time $\frac{1}{\lambda} \log \alpha$ from max $L^2$-margin linear classifier to max $L^1$-margin linear classifier.

While this corollary only shows KKT conditions rather than global optimality, we believe that one may be able to obtain the global optimality using insights from existing analysis of training diagonal linear nets without weight decay~\citep{gunasekar2018implicit,moroshko2020implicit}.


\myparagraph{Empirical Validation: Grokking.}  As it is well known that maximizing the $L^1$-margin can better encourage sparsity than maximizing the $L^2$-margin, our theory predicts that the grokking phenomenon can be observed in training diagonal linear nets for $k$-sparse linear classification. To verify this, we specifically consider the following task: sample $n$ data points uniformly from $\{\pm 1\}^d$ for a very large $d$, and let the ground truth be a linear classifier with a $k$-sparse weight, where the non-zero coordinates are sampled uniformly from $\{\pm 1\}^k$.
Applying standard generalization bounds based on Rademacher complexity can show that the max $L^2$-margin linear classifier needs $\O(kd)$ to generalize, while the max $L^1$-margin linear classifier only needs $\O(k^2 \log d)$. See~\Cref{sec:appendix-gen-sparse} for the proof.
This suggests that the grokking phenomenon can be observed in this setting.
In~\Cref{fig:intro-diag-cls}, we run experiments with $n = 256, d = 10^5, k=3$, large initialization with initial parameter norm $128$, and small weight decay $\lambda=0.001$. As expected, we observe the grokking phenomenon: in the early phase, the net fits the training set very quickly but fails to generalize; after a sharp transition in the late phase, the test accuracy becomes $100\%$.

\myparagraph{Empirical Validation: Misgrokking.} From the above example, we can see that grokking can be induced by the mismatch between the early-phase implicit bias and data and a good match between the late-phase implicit bias and data. However, this match and mismatch can largely depend on data. Conversely, if we consider the case where the labels in the linear classification problem are generated by a linear classifier with a large $L^2$-margin, the early phase implicit bias can make the neural net generalize easily, but the late phase implicit bias can destroy this good generalization since the ground-truth weight vector may not have a large $L^1$-margin.
To justify this, we first sample a unit-norm vector $\vwopt$, then sample a dataset from the distribution $(\vx, y) \sim (\vz + \frac{\gamma}{2} \sign(\inne{\vz}{\vwopt}) \vwopt, \sign(\inne{\vz}{\vwopt})$, where $\vz \sim \cN(\vzero, \mI_d)$ (i.e., a Gaussian distribution that is separated by a margin in the middle). We take $\gamma = 25$ and sample $n = 32$ points. Indeed, we observe a phenomenon which we call ``{\em misgrokking}'': the neural net first fits the training set and achieves $100\%$ test accuracy, and then after training for sufficiently longer, the test accuracy drops to nearly $50\%$. See~\Cref{fig:intro-diag-cls-2}.

\subsection{Grokking in Regression}

In this subsection, we study the grokking phenomenon on regression tasks with squared loss $\ell(\hat{y}, y) = (\hat{y} - y)^2$. Paralleling the classification setting, we analyze the behavior of gradient flow in both kernel and rich regimes, and show that there is a sharp transition around $\frac{1}{\lambda} \log \alpha$.

\subsubsection{Kernel Regime} \label{sec:l2-sqr-kernel}

Similar to the classification setting, we first show that gradient flow gets stuck in the kernel regime over the initial period of $\frac{1-c}{\lambda} \log \alpha$ time.
For all $\vx$, we define $\nabla f(\vthetauinit; \vx)$ as the NTK feature of~$\vx$. For over-parameterized models, where the dimension of $\vtheta$ is larger than the number of data $n$, we make the following natural assumption that is also widely used in the literature~\citep{du2018gradient,chizat2019lazy,arora2019fine}:
\begin{assumption}
    The NTK features of training samples $\{\nabla f(\vthetauinit; \vx)\}_{i=1}^{n}$ are linearly independent.
\end{assumption}

\vspace{-0.04in}
Now we let $\vhoptntk$ be the vector with minimum norm such that the linear predictor $\vg \mapsto \inner{\vg}{\vh}$ perfectly fits
$\{(\nabla f_i(\vthetauinit), y_i)\}_{i=1}^{n}$. That is, $\vhoptntk$ is the solution to the following constrained optimization problem:
\begin{align}
    \begin{aligned}
    \min        \quad \frac{1}{2}\normtwo{\vh}^2 \quad
    \text{s.t.} \quad \inner{\nabla f_i(\vthetauinit)}{\vh} = y_i, \quad \forall i \in [n].
    \end{aligned}
    \tag{K2}
    \label{eq:ntk-min-norm-problem}
\end{align}
Then we have the following result that is analogous to~\Cref{thm:l2-cls-kernel}. See~\Cref{sec:l2-cls-kernel-proof} for the proof.
\begin{theorem}
\label{thm:wd-sqr-kernel}
    For all constants $c \in (0, 1)$, letting
    $\Tcn(\alpha) := \frac{1-c}{\lambda} \log \alpha$, it holds that
    \begin{align*}
        \forall \vx \in \R^d: \qquad \lim_{\alpha \to +\infty} 
         f(\vtheta(\Tcn(\alpha); \alpha \vthetauinit); \vx) = \inner{\nabla f(\vthetauinit; \vx)}{\vhoptntk}.
    \end{align*}
\end{theorem}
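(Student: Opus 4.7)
The plan is to mirror the proof of~\Cref{thm:l2-cls-kernel}, with the key modification that the early-phase dynamics now linearize toward the min-norm interpolant on NTK features instead of the max-margin direction, and no multiplicative normalizing factor is needed because the squared loss pins the rescaled predictions to the $\Theta(1)$ targets $y_i$. The starting point is to introduce $\vphi(t) := \frac{e^{\lambda t}}{\alpha}\vtheta(t) - \vthetauinit$, which isolates the change in direction after factoring out the exponential norm decay caused by weight decay. From $\dot\vtheta = -\nabla\cL(\vtheta) - \lambda\vtheta$ we immediately get $\dot\vphi = -\frac{e^{\lambda t}}{\alpha}\nabla\cL(\vtheta)$. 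Since $f(\vthetauinit;\vx) = 0$, homogeneity combined with a first-order Taylor expansion gives
\begin{align*}
f_i(\vtheta(t)) = (\alpha e^{-\lambda t})^L f_i(\vthetauinit + \vphi(t)) \approx \alpha^L e^{-L\lambda t}\inner{\nabla f_i(\vthetauinit)}{\vphi(t)},
\end{align*}
together with $\nabla f_i(\vtheta(t)) \approx \alpha^{L-1}e^{-(L-1)\lambda t}\nabla f_i(\vthetauinit)$. Introducing the rescaled predictor coefficient $\vh(t) := \alpha^L e^{-L\lambda t}\vphi(t)$ then makes the model output read $f_i(\vtheta(t)) \approx \inner{\nabla f_i(\vthetauinit)}{\vh(t)}$ to leading order, identifying $\vh$ as the NTK predictor.

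Translating the ODE for $\vphi$ into one for $\vh$, after absorbing the $L\lambda\vh$ term coming from differentiating the rescaling, yields
\begin{align*}
\dot\vh(t) + L\lambda\vh(t) \approx -\alpha^{2(L-1)}e^{-2(L-1)\lambda t}\cdot\frac{2}{n}\sum_{i=1}^n \bigl(\inner{\nabla f_i(\vthetauinit)}{\vh(t)} - y_i\bigr)\nabla f_i(\vthetauinit).
\end{align*}
Under the time change $\tau(t) := \int_0^t \alpha^{2(L-1)}e^{-2(L-1)\lambda s}\,ds$, this becomes the standard gradient flow $\tfrac{d\vh}{d\tau} = -\nabla\widetilde{\cL}(\vh)$ on $\widetilde{\cL}(\vh) := \tfrac{1}{n}\sum_i(\inner{\nabla f_i(\vthetauinit)}{\vh} - y_i)^2$, up to a lower-order friction term of magnitude $L\lambda/\dot\tau$. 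By the linear-independence assumption on the NTK features, the associated Gram matrix on the training set is strictly positive-definite, so gradient flow from $\vh(0) = \vzero$ converges exponentially in $\tau$ to the unique min-norm interpolant $\vhoptntk$ defined by~\eqref{eq:ntk-margin-problem}. A direct computation gives $\tau(\Tcn(\alpha)) = \Theta(\alpha^{2(L-1)}/\lambda) \to +\infty$ for $L \ge 2$, and $\tau(\Tcn(\alpha)) = \Tcn(\alpha) \to +\infty$ for $L = 1$, hence $\vh(\Tcn(\alpha)) \to \vhoptntk$ as $\alpha \to +\infty$.

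The main obstacle, as in the classification case, is to rigorously justify the linearization via a Gronwall/bootstrap argument. The $\cC^2$-smoothness of $f$ bounds the leading-order approximation error in $f_i$ by $O(\|\vphi\|^2)$, which gets amplified by the $\alpha^{2(L-1)}e^{-2(L-1)\lambda t}$ prefactor when fed into the $\vh$-dynamics. The saving grace is that $\|\vphi(t)\| = \alpha^{-L}e^{L\lambda t}\|\vh(t)\|$ and $\|\vh(t)\|$ stays $O(1)$ along the trajectory (by its convergence to $\vhoptntk$), so $\|\vphi(t)\| = O(\alpha^{-Lc})$ at $t = \Tcn(\alpha)$ and is small enough throughout to keep the accumulated error negligible. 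A standard bootstrap then closes the loop: assuming $\|\vphi(t)\| \le \epsilon_{\max}$ on $[0, t]$ validates the linearization there, which in turn confirms a tighter bound $\|\vphi(t)\| \le \epsilon_{\max}/2$, extending the good interval up to $\Tcn(\alpha)$ for all sufficiently large $\alpha$. Finally, for any test input $\vx$, homogeneity gives
\begin{align*}
f(\vtheta(\Tcn(\alpha);\alpha\vthetauinit);\vx) = (\alpha e^{-\lambda\Tcn(\alpha)})^L f(\vthetauinit + \vphi(\Tcn(\alpha));\vx) \approx \inner{\nabla f(\vthetauinit;\vx)}{\vh(\Tcn(\alpha))},
\end{align*}
and the right-hand side tends to $\inner{\nabla f(\vthetauinit;\vx)}{\vhoptntk}$ as $\alpha \to +\infty$, proving the theorem.
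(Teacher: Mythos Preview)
Your approach shares the same skeleton as the paper's: the definitions of $\vphi$ (the paper calls it $\vdelta$) and $\vh$ are identical, and the conclusion that $\vh(\Tcn(\alpha)) \to \vhoptntk$ is the key step in both. The time-change $\tau(t) = \int_0^t \alpha^{2(L-1)} e^{-2(L-1)\lambda s}\,ds$ is a clean way to phrase what the paper does via explicit loss-descent inequalities (\Cref{lm:wd-sqr-case-L-descent}) and movement bounds (\Cref{lm:wd-sqr-case-move-bound}), and the friction term $L\lambda \vh$ is indeed lower order after the change of clock.

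There is one point your sketch glosses over that the paper treats as a separate lemma. When you write ``gradient flow from $\vh(0) = \vzero$ converges exponentially in $\tau$ to the unique min-norm interpolant $\vhoptntk$,'' this is true for the \emph{idealized} linearized flow because $\vh$ then stays in the column span of $\{\nabla f_i(\vthetauinit)\}$, where the Gram matrix is positive definite. But the perturbed flow has linearization errors $\nabla f_i(\vthetauinit + \vphi) - \nabla f_i(\vthetauinit)$ that can push $\vh$ \emph{out} of this span, and in the orthogonal complement the operator $\mPhi\mPhi^\top$ provides no damping. Your bootstrap only controls $\|\vphi\|$, which closes the linearization, but does not by itself show the orthogonal part of $\vh$ vanishes. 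The paper handles this explicitly: \Cref{lm:wd-sq-orthogonal-decrease} shows $\inner{\vdelta}{\vv} = O(\alpha^{-2cL})$ for any $\vv$ orthogonal to the feature span, while $\|\vdelta\| = \Omega(\alpha^{-cL})$ by \Cref{lm:wd-sqr-h-bound}, so the normalized orthogonal component is $O(\alpha^{-cL}) \to 0$. You could equivalently integrate the orthogonal projection of your $\vh$-ODE, where only the friction $-L\lambda \vh_\perp$ and an error of size $O(\lambda \alpha^{-L} e^{L\lambda t})$ survive, yielding $\|\vh_\perp(\Tcn(\alpha))\| = O(\alpha^{-cL}) \to 0$; but this step needs to be stated, as it is the piece that distinguishes convergence to $\vhoptntk$ from convergence to \emph{some} interpolant.
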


\subsubsection{Rich Regime}
Similar to the classification setting, we then show that gradient flow is able to escape the kernel regime at time $\frac{1+c}{\lambda}\log\alpha$. Specifically, consider the following constrained optimization problem that searches for the parameter with the minimum norm that can perfectly fit the training data:
\begin{align}
    \min        \quad \frac{1}{2}\normtwo{\vtheta}^2 \quad
    \text{s.t.} \quad f_i(\vtheta) = y_i, \quad \forall i \in [n].
    \tag{R2}
    \label{eq:l2-norm-problem}
\end{align}
Then we have the following convergence result. See~\Cref{sec:l2-reg-rich-proof} for the proof.
\begin{theorem}
\label{thm:wd-sqr-rich}
    For any constant $c > 0$, letting
    $\Tcp(\alpha) := \frac{1+c}{\lambda} \log \alpha$, 
    for any sequence of $\{\alpha_k\}_{k \ge 1}$ with $\alpha_k \to +\infty$,
    there exists a time sequence $\{t_k\}_{k \ge 1}$
    satisfying
    $\frac{1}{\lambda} \log \alpha_k \le t_k \le \Tcp(\alpha_k)$, such that $\normtwo{\vtheta(t_k; \alpha_k \vthetauinit)}$ are uniformly bounded
    and that every limit point of
    $\left\{\vtheta(t_k; \alpha_k \vthetauinit)
        : k \ge 1\right\}$
    is a KKT point of~\eqref{eq:l2-norm-problem}.
\end{theorem}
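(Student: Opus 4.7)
The plan is to emulate the gradient-convergence strategy sketched for the classification rich regime (\Cref{thm:l2-cls-rich}), replacing the exponential loss by the squared loss and the inequality constraints of \eqref{eq:l2-margin-problem} by the equality constraints of \eqref{eq:l2-norm-problem}, while bootstrapping from \Cref{thm:wd-sqr-kernel} rather than \Cref{thm:l2-cls-kernel}. The three ingredients are: (i) applying \Cref{thm:wd-sqr-kernel} at a slightly earlier time $\Tcn(\alpha)$ with $c' \in (0,c)$ small to certify that $\cL_\lambda(\vtheta(\Tcn(\alpha))) \to 0$; (ii) using the energy identity $\frac{\dd}{\dd t}\cL_\lambda(\vtheta(t)) = -\normtwo{\nabla\cL_\lambda(\vtheta(t))}^2$ together with a pigeonhole argument over $[\Tcn(\alpha),\Tcp(\alpha)]$ to extract a time $t_k$ at which the gradient is small; and (iii) rewriting the small gradient and small loss as approximate stationarity and feasibility for \eqref{eq:l2-norm-problem}, then passing to a subsequential limit.

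Concretely, fix $c' \in (0,c)$ with $2c' < p$ so that $\lambda\alpha^{2c'} = \Theta(\alpha^{2c'-p}) \to 0$. \Cref{thm:wd-sqr-kernel} gives $f_i(\vtheta(\Tcn(\alpha))) \to y_i$, hence $\cL(\vtheta(\Tcn(\alpha))) \to 0$; combined with the kernel-regime estimate $\vtheta(\Tcn(\alpha)) \approx \alpha^{c'}\vthetauinit$, this yields $\cL_\lambda(\vtheta(\Tcn(\alpha))) = O(\lambda\alpha^{2c'}) \to 0$. Integrating the energy identity on $[\Tcn(\alpha),\Tcp(\alpha)]$ (of length $\tfrac{c+c'}{\lambda}\log\alpha$) and applying the mean-value principle produce $t_k \in [\Tcn(\alpha_k),\Tcp(\alpha_k)]$ satisfying $\normtwo{\nabla\cL_\lambda(\vtheta(t_k))}^2 \le \tfrac{\lambda\,\cL_\lambda(\vtheta(\Tcn(\alpha_k)))}{(c+c')\log\alpha_k}$, and monotonicity of $\cL_\lambda$ further forces $\cL(\vtheta(t_k)) \to 0$, giving $f_i(\vtheta(t_k)) \to y_i$. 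Expanding $\nabla\cL_\lambda(\vtheta) = \lambda\vtheta + \tfrac{2}{n}\sum_i (f_i(\vtheta) - y_i)\nabla f_i(\vtheta)$ rewrites the gradient bound as approximate stationarity
\[
\vtheta(t_k) = \sum_{i=1}^n \mu_i^{(k)}\,\nabla f_i(\vtheta(t_k)) + o(1), \qquad \mu_i^{(k)} := -\tfrac{2}{n\lambda}\bigl(f_i(\vtheta(t_k)) - y_i\bigr),
\]
so any limit point of $(\vtheta(t_k),\mu_1^{(k)},\ldots,\mu_n^{(k)})$ satisfies the KKT conditions of \eqref{eq:l2-norm-problem} by continuity of $f_i$ and $\nabla f_i$.

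The hard part will be securing the uniform norm bound $\sup_k\normtwo{\vtheta(t_k)} < \infty$ together with the boundedness of the multipliers $\mu_i^{(k)}$, since the crude estimate $\tfrac{\lambda}{2}\normtwo{\vtheta(t_k)}^2 \le \cL_\lambda(\vtheta(t_k)) \le \cL_\lambda(\vtheta(\Tcn(\alpha))) = O(\lambda\alpha^{2c'})$ only yields $\normtwo{\vtheta(t_k)} = O(\alpha^{c'})$, and likewise the residuals $|f_i(\vtheta(t_k)) - y_i|$ are only controlled at the level $O(\sqrt{\lambda}\,\alpha^{c'})$, both of which diverge as $\alpha \to \infty$. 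The expected resolution is to exploit the long window of length $\tfrac{c}{\lambda}\log\alpha$ after the rich regime begins: once the loss term is small, $\cL_\lambda$ satisfies a Polyak--{\L}ojasiewicz-type inequality coming either from the $\lambda$-strongly convex regularizer or from the positive-definite NTK Gram matrix near the min-norm interpolator, so $\cL_\lambda$ contracts exponentially toward its minimum $\cL_\lambda^\star = \Theta(\lambda)$. Selecting $t_k$ in the later portion of the window then delivers $\normtwo{\vtheta(t_k)} = O(1)$ and residuals $|f_i(\vtheta(t_k))-y_i| = O(\lambda)$, providing the uniform boundedness and bounded multipliers that close the KKT extraction.
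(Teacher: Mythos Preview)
Your overall strategy---energy identity, pigeonhole over a long time window, then rewriting the small gradient as approximate KKT stationarity---is exactly the paper's. The difference lies in how the uniform norm bound is secured, which you correctly flag as the crux but then overcomplicate.

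You bootstrap from \Cref{thm:wd-sqr-kernel} at time $\frac{1-c'}{\lambda}\log\alpha$ for a small $c'>0$, which only yields $\normtwo{\vtheta} = O(\alpha^{c'})$ and $\cL_{\lambda} = O(\lambda\alpha^{2c'})$. The paper instead invokes the \emph{quantitative} kernel-regime lemmas that underlie \Cref{thm:wd-sqr-kernel}: these show that the approximation $\lnormtwo{\tfrac{e^{\lambda t}}{\alpha}\vtheta(t) - \vthetauinit} = O(\alpha^{-L}e^{L\lambda t})$ persists up to $\frac{1}{\lambda}(\log\alpha + \Delta T)$ for a constant $\Delta T$, together with a loss bound of the form $\cL(\vtheta(t)) = O(\lambda^2 \alpha^{-4(L-1)}e^{4(L-1)\lambda t})$. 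Evaluating at $t_0 := \frac{1}{\lambda}\log\alpha$ (still inside the kernel regime) the rescaling factor is $\alpha e^{-\lambda t_0} = 1$, so one reads off $\normtwo{\vtheta(t_0) - \vthetauinit} = O(1)$ and $\cL(\vtheta(t_0)) = O(\lambda^2)$ directly. Hence $\cL_{\lambda}(\vtheta(t_0)) = O(\lambda)$, and monotonicity of $\cL_{\lambda}$ along the flow gives $\normtwo{\vtheta(t)}^2 \le \tfrac{2}{\lambda}\cL_{\lambda}(\vtheta(t)) = O(1)$ for \emph{all} $t \ge t_0$. No PL or contraction argument is needed---and the one you propose is not obviously valid: $\lambda$-strong convexity of the regularizer alone does not give a PL inequality for the non-convex $\cL_{\lambda}$, and invoking a positive-definite Gram matrix ``near the min-norm interpolator'' presupposes proximity to that interpolator.

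Starting the pigeonhole at $t_0$ also repairs a second issue in your plan: your interval $[\frac{1-c'}{\lambda}\log\alpha,\,\Tcp(\alpha)]$ may produce $t_k < \frac{1}{\lambda}\log\alpha_k$, violating the constraint in the theorem statement. With the paper's interval $[t_0,\Tcp(\alpha)]$ of length $\frac{c}{\lambda}\log\alpha$, pigeonhole gives $\normtwo{\nabla\cL_{\lambda}(\vtheta(t_k))} = O(\lambda(\log\alpha)^{-1/2})$; dividing by $\lambda$ and passing to a convergent subsequence then delivers both feasibility (from $\cL \le \cL_{\lambda} = O(\lambda) \to 0$) and stationarity at the limit, exactly as in your step (iii).
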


\subsubsection{Example: Overparameterized Matrix Completion}
\label{sec:mc}

As an example, we consider the matrix completion problem of recovering a low-rank symmetric matrix based on partial observations of its entries. Specifically, we let $\mX^*\in\R^{d\times d}$ be the ground-truth symmetric matrix with rank $r \ll d$ and we observe a (random) set of entries indexed by $\Omega \subseteq [d]\times[d]$. In this setting, the input distribution $\cD_X$ is the uniform distribution on the finite set $[d]\times[d]$. For every input $\vx=(i,j)\sim\cD_X$, $y^*(\vx)=\left\langle \mP_{\vx}, \mX^*\right\rangle$ where $\mP_x = \ve_{i}\ve_{j}^\top$. 


Following previous works \citep{arora2019implicit,razin2020implicit,woodworth2020kernel,li2021towards}, we solve the matrix completion problem using the matrix factorization approach, \emph{i.e.}, we parameterize a matrix as $\mW = \mU\mU^\top-\mV\mV^\top$, where $\mU,\mV\in\R^{d\times d}$, and optimize $\mU, \mV$ so that $\mW$ matches with the ground truth on observed entries.
This can be viewed a neural net that is parameterized by $\vtheta=(\mathrm{vec}(\mU),\mathrm{vec}(\mV))\in\R^{2d^2}$ and outputs $f(\vtheta;\vx)=\left\langle \mP_{\vx}, \mW(\vtheta) \right\rangle$ given $\vx = (i, j)$, where $\mW(\vtheta) := \mU \mU^\top - \mV\mV^\top$.
It is easy to see that $f(\vtheta;\vx)$ is $2$-homogeneous, satisfying \Cref{asmp:homogeneous}.
Given a dataset $\{(\vx_i,y_i)\}_{i=1}^n$, we consider using gradient flow to minimize the $\ell_2$-regularized square loss defined in \Cref{eq:loss-regularized} which can be equivalently written as 
\begin{equation}
    \label{eq:mc-loss}
    \cL(\bm{\theta})=\frac{1}{n}\sum_{i=1}^n \left( f(\vtheta;\vx_i) - y_i\right)^2 + \frac{\lambda}{2}\left(\|\mU\|_F^2+\|\mV\|_F^2\right).
\end{equation}
In the kernel regime, \Cref{thm:wd-sqr-kernel} implies the following result for identity initialization.
\begin{corollary}[Matrix completion, kernel regime]
\label{cor:mc-kernel}
    Let $\vthetauinit := (\mathrm{vec}(\mI), \mathrm{vec}(\mI))$. For all constants $c \in (0, 1)$, letting $T_c^-(\alpha)=\frac{1-c}{\lambda}\log\alpha$, it holds for all $(i,j)\in [d]\times[d]$ that
    \begin{equation}
        \notag
        \lim_{\alpha\to +\infty} \left[\mW(\vtheta(T_c^-(\alpha); \alpha \vthetauinit))\right]_{ij} = \left\{
        \begin{aligned}
            \mX_{ij}^* \quad &\text{if } (i,j)\in\Omega \text{ or } (j,i)\in\Omega, \\
            0 \quad &\text{otherwise.}
        \end{aligned}
        \right.
    \end{equation}
\end{corollary}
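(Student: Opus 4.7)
The plan is to apply \Cref{thm:wd-sqr-kernel} to the matrix completion model and then compute the resulting min-norm kernel predictor in closed form. I first verify the hypotheses of that theorem. Homogeneity holds with $L=2$ since $f(c\vtheta;\vx)=c^2 f(\vtheta;\vx)$, and the zero-output assumption is immediate because at $\vthetauinit=(\mI,\mI)$ one has $\mU\mU^\top-\mV\mV^\top=\vzero$. Direct differentiation gives the NTK feature
\begin{equation*}
\nabla f(\vthetauinit;\vx=(a,b))=(\mG_{ab},-\mG_{ab}),\qquad\mG_{ab}:=\ve_a\ve_b^\top+\ve_b\ve_a^\top,
\end{equation*}
so the linear-independence assumption holds as long as $\Omega$ contains at most one of each symmetric pair $\{(i,j),(j,i)\}$, which I take as a mild assumption (since $\mX^*$ is symmetric, observing both is redundant and can be deduplicated).

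Next I use $f(\vtheta;(i,j))=[\mW(\vtheta)]_{ij}$ to rewrite the conclusion of \Cref{thm:wd-sqr-kernel} as
\begin{equation*}
\lim_{\alpha\to\infty}\bigl[\mW(\vtheta(T_c^-(\alpha);\alpha\vthetauinit))\bigr]_{ij}=\langle\nabla f(\vthetauinit;(i,j)),\vhoptntk\rangle=\langle(\mG_{ij},-\mG_{ij}),\vhoptntk\rangle,
\end{equation*}
so the task reduces to computing $\vhoptntk=(\mA^*,\mB^*)$. Writing $\mC_\pm:=\mA\pm\mB$, the norm decomposes as $\normF{\mA}^2+\normF{\mB}^2=\tfrac12(\normF{\mC_-}^2+\normF{\mC_+}^2)$, while each constraint $\langle(\mG_i,-\mG_i),(\mA,\mB)\rangle=\langle\mG_i,\mC_-\rangle=y_i$ depends only on $\mC_-$. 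The optimum therefore has $\mC_+^*=\vzero$, and the reduced problem is $\min\tfrac12\normF{\mC_-}^2$ subject to $\langle\mG_i,\mC_-\rangle=y_i$.

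Because the supports of $\{\mG_i\}$ are pairwise disjoint (by the no-duplicate assumption), the Lagrange ansatz $\mC_-^*=\sum_i\mu_i\mG_i$ decouples entry-wise. For each off-diagonal observation $(a_i,b_i)\in\Omega$ with $a_i\ne b_i$ I obtain $\mu_i=y_i/2$, so $[\mC_-^*]_{a_ib_i}=[\mC_-^*]_{b_ia_i}=y_i/2$; for each diagonal observation $\mu_i=y_i/4$, giving $[\mC_-^*]_{a_ia_i}=y_i/2$; all other entries of $\mC_-^*$ vanish. Substituting into the display above yields
\begin{equation*}
\langle(\mG_{ij},-\mG_{ij}),\vhoptntk\rangle=\langle\mG_{ij},\mC_-^*\rangle=[\mC_-^*]_{ij}+[\mC_-^*]_{ji},
\end{equation*}
which equals $\mX^*_{ij}$ when $(i,j)\in\Omega$ or $(j,i)\in\Omega$ (using $\mX^*=(\mX^*)^\top$) and vanishes otherwise, matching the corollary.

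I expect the main obstacle to be the linear-independence condition on the NTK features, which is subtle because the symmetric parametrization $\mW=\mU\mU^\top-\mV\mV^\top$ forces $\nabla f(\vthetauinit;(i,j))$ and $\nabla f(\vthetauinit;(j,i))$ to coincide. The cleanest fix is to assume $\Omega$ lists each unordered pair at most once; allowing duplicates merely rescales the loss weight on the coincident constraint and leaves both the min-norm solution and the kernel-regime limit unchanged, so the conclusion extends after a routine deduplication. A minor secondary subtlety is that diagonal observations give $\normF{\mG_{ii}}^2=4$ rather than $2$, changing the Lagrange multiplier by a factor but not affecting the final $[\mC_-^*]_{ii}=y_i/2$.
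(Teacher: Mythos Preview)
Your proposal is correct and is essentially the approach the paper intends: the paper states this corollary as a direct consequence of \Cref{thm:wd-sqr-kernel} without spelling out the computation of $\vhoptntk$, so your derivation of the NTK feature $(\mG_{ab},-\mG_{ab})$, the $\mC_\pm$ decomposition, and the entrywise solution of the min-norm problem is exactly the missing routine calculation. Your handling of the linear-independence hypothesis via deduplication of symmetric pairs is the right fix and matches the paper's implicit convention of working with the symmetrized observation $\mP_i=\tfrac12(\ve_{i_i}\ve_{j_i}^\top+\ve_{j_i}\ve_{i_i}^\top)$.
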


\vspace{-0.05in}
In other words, in the kernel regime, while GD is able to fit the observed entries, it always fills the unobserved entries with $0$, leading to a significant failure in recovering the matrix.

In the rich regime, \Cref{thm:wd-sqr-rich} implies that 
the solution transitions sharply near $\frac{1}{\lambda} \log \alpha$ to a KKT point of the norm minimization problem. If this KKT point is globally optimal, then it is known that $\mW(\vtheta)$ is the minimum nuclear norm solution to the matrix completion problem: $\min \norm{\mW}_*$ s.t. $\mW = \mW^\top$, $\mW_{ij} = \mX^*_{ij}, \forall (i, j) \in \Omega$ \cite[Theorem 3.2]{ding2022flat}. 
It is known that the minimum nuclear norm solution can recover the ground truth with very small errors when $\tilde{\O}(d \log^2 d)$ entries are observed~(\Cref{thm:mc-recovery}).

Although it is not proved that this KKT point is globally optimal,
the following theorem provides positive evidence by showing that gradient flow eventually converges global minima and recovers the ground truth, if the time is not limited around $\frac{1}{\lambda} \log \alpha$.
\begin{theorem}[Matrix completion, rich regime]
    Suppose that $\rank{(\mX^*)}=r=\O(1)$ and $\mX^* = \mV_{\mX^*}\mSigma_{\mX^*}\mV_{\mX^*}^\top$ is a SVD of $\mX^*$, where each row of $\mV_{\mX^*}$ has $\ell_{\infty}$-norm bounded by $\sqrt{\frac{\mu}{d}}$. If the number of observed entries satisfies $N \gtrsim \mu^4 d \log^2 d$, then for any $\sigma>0$, the gradient flow trajectory $(\vtheta(t))_{t\geq 0}$ for \Cref{eq:mc-loss} starting from random initialization $(\mU(0))_{ij},(\mV(0))_{ij}\overset{i.i.d}{\sim} \mathcal{N}(\alpha\mathbbm{1}\{i=j\}, \sigma^2)$ converges to a global minimizer $\vtheta_{\infty}$ of $\cL_{\lambda}$. Moreover, $\|\mW(\vtheta_{\infty})-\mX^*\|_F \lesssim \sqrt{\lambda\|\mX^*\|_*}\mu^2\log d$ with probability $\geq 1-d^{-3}$.
\end{theorem}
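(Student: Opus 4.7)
The plan is to split the theorem into two independent pieces and attack each separately: first, establish that gradient flow converges to a global minimizer $\vtheta_\infty$ of $\cL_\lambda$; second, bound $\|\mW(\vtheta_\infty) - \mX^*\|_F$ by reducing to a noisy nuclear-norm-penalized matrix completion problem. Throughout, write $\widehat{\cL}(\mW) = \frac{1}{n}\sum_{i=1}^n(\langle \mP_{\vx_i},\mW\rangle - y_i)^2$.

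For the convergence part, first observe that $\cL_\lambda$ is coercive because of the weight-decay term $\frac{\lambda}{2}(\|\mU\|_F^2 + \|\mV\|_F^2)$, so the gradient flow exists for all time and its trajectory remains in a bounded set. Since $\cL_\lambda$ is a polynomial in $\vtheta$, the Kurdyka--\L ojasiewicz inequality then guarantees that $\vtheta(t)$ converges to a single critical point $\vtheta_\infty$. Upgrading this to a global minimum is the main technical step: I would show that every critical point of $\cL_\lambda$ is either a global minimizer of the nuclear-norm-penalized convex program described in the next paragraph, or a strict saddle of $\cL_\lambda$. This dichotomy is obtained by computing the first- and second-order conditions of $\cL_\lambda$ in the factored variables $(\mU,\mV)$, matching them against the KKT conditions of the convex program, and checking that any mismatch produces a direction of negative curvature in the factored Hessian. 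Once the dichotomy is in hand, the nondegenerate Gaussian component of variance $\sigma^2>0$ in the initialization becomes crucial: an almost-sure stable-manifold argument (a continuous-time analogue of Lee--Simchowitz--Jordan--Recht) rules out convergence to strict saddles, forcing $\vtheta_\infty$ to be a global minimum.

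For the recovery part, the key tool is the classical variational identity
\[
\|\mW\|_* \;=\; \min\Bigl\{\tfrac{1}{2}\bigl(\|\mU\|_F^2 + \|\mV\|_F^2\bigr)\ :\ \mW = \mU\mU^\top - \mV\mV^\top\Bigr\},
\]
which implies that $\widehat{\mW} := \mW(\vtheta_\infty)$ is an optimal solution to the convex program $\min_{\mW = \mW^\top}\widehat{\cL}(\mW) + \lambda\|\mW\|_*$. Plugging in $\mX^*$ as a feasible comparator (noting $y_i = \langle \mP_{\vx_i}, \mX^*\rangle$ so that $\widehat{\cL}(\mX^*)=0$) yields the basic inequality $\widehat{\cL}(\widehat{\mW}) \le \lambda(\|\mX^*\|_* - \|\widehat{\mW}\|_*) \le \lambda\|\mX^*\|_*$, which I would then combine with a restricted strong convexity (RSC) property of the rescaled entry-sampling operator $\mathcal{P}_\Omega$ on the low-rank tangent cone at $\mX^*$. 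Under $\mu$-incoherence, the standard Negahban--Wainwright / Cand\`es--Plan machinery shows RSC holds with probability $\ge 1 - d^{-3}$ once $N \gtrsim \mu^4 d\log^2 d$, and converting the basic inequality through it yields $\|\widehat{\mW} - \mX^*\|_F^2 \lesssim \lambda\|\mX^*\|_*\cdot \mu^4\log^2 d$, which is exactly the stated bound after taking square roots.

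The main obstacle is clearly the convergence-to-global-minimum step. Unlike the $\mU\mU^\top$ parametrization for positive semidefinite matrices, the $\mU\mU^\top - \mV\mV^\top$ parametrization has a much richer critical-point structure, so classical benign-landscape results do not transfer verbatim and the strict-saddle classification must be carried out explicitly from the factored Hessian. The Gaussian perturbation of variance $\sigma^2$ is also essential here: without it, the deterministic initialization $(\alpha\mI,\alpha\mI)$ could in principle sit on a measure-zero set of non-strict (degenerate) saddles that the flow never leaves. The recovery bound in the second part, by contrast, is fairly routine once the nuclear-norm reduction is in place and incoherence is invoked.
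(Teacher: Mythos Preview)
Your proposal is correct and matches the paper's approach almost exactly. The paper proves the same two landscape properties (no spurious local minima, strict saddle) for $\cL_\lambda$ in the $\mU\mU^\top - \mV\mV^\top$ parametrization by computing the factored first- and second-order conditions and matching them to the subdifferential KKT conditions of the convex program $\min_{\mW}\widehat{\cL}(\mW)+\tfrac{\lambda}{2}\|\mW\|_*$; it then invokes Lee et al.\ for almost-sure saddle avoidance under the Gaussian perturbation, and for the recovery bound it directly cites Cand\`es--Plan rather than going through RSC explicitly, but that is the same machinery you describe.
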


\begin{remark}
    The small random perturbation to the identity initialization $(\alpha \mI, \alpha \mI)$ in the above theorem is needed to guarantee that gradient flow does not get stuck at saddle points and thus converges to global minimizers almost surely. By the continuity of the gradient flow trajectory w.r.t.~initialization, the conclusion of \Cref{cor:mc-kernel} still holds if $\sigma=\sigma(\alpha)$ is sufficiently small.
\end{remark}

\begin{wrapfigure}{r}{0.4\textwidth}
\vspace{-0.2in}
 \centering
        \includegraphics[width=1\linewidth]{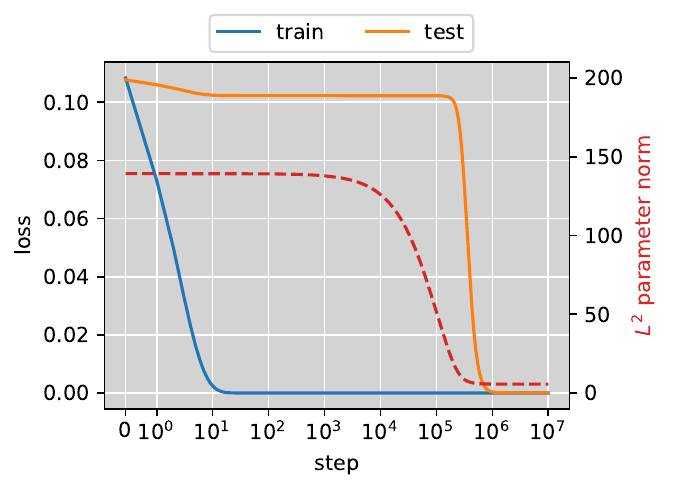} 
        \vspace{-10pt}
        \caption{Matrix completion for the multiplication table exhibits grokking.}
        \label{fig:mc} 
\vspace{-0.1in}
\end{wrapfigure}

\vspace{-0.06in}
\myparagraph{Empirical Validation.} We verify our theory on a simple algorithmic task: matrix completion for the multiplication table. Here, the multiplication table refers to a matrix $\mX^*$ where the $(i, j)$-entry is $ij/d^2$.
It is easy to see that this matrix is of rank-$1$ since $\mX^* = \vu\vu^\top$ for $\vu = (0, \frac{1}{d}, \frac{2}{d}, \dots, \frac{d-1}{d})$. This means the late phase bias can complete the matrix with few observed entries; however, the early phase bias fills every unobserved entry with $0$, leading to a large test loss. This dichotomy of implicit biases implies grokking. To check this, we set $d = 97$ and randomly choose $5\%$ of the entries as the training set. We run GD with initialization scale $\alpha = 10$, learning rate $\eta = 0.1$, weight decay $10^{-4}$. \Cref{fig:mc} shows that grokking indeed happens: GD takes $\sim 10^1$ steps to minimize the training loss, but the sharp transition in test loss occurs only after $\sim 10^6$ steps.

\vspace{-0.07in}
\section{Conclusion}

\vspace{-0.04in}
In this paper, we show that the grokking phenomenon provably occurs in several setups.
Our results suggest that the sharp transition in test accuracy may stem from a dichotomy of implicit biases between the early and late training phases. While the early phase bias guides the optimizer to overfitting solutions, it is quickly corrected when the late phase bias takes effect.

Some limitations of our work are as follows. First, our work only studies the training dynamics with large initialization and small weight decay, but these may not be the only source of the dichotomy of the implicit biases. We further discuss in~\Cref{sec:discussion-other} that the late phase implicit bias can also be induced by implicit margin maximization and sharpness reduction, though the transition may not be as sharp as the weight decay case. Also, our work focuses on understanding the cause of grokking but does not study how to make neural nets generalize without so much delay in time. We leave it to future work to explore the other sources of grokking and practical methods to eliminate grokking.




\subsubsection*{Acknowledgments}
Kaifeng Lyu is partly supported by NSF and ONR. Simon S.~Du is supported by supported by NSF IIS 2110170, NSF DMS 2134106, NSF CCF 2212261, NSF IIS 2143493, NSF CCF 2019844, NSF IIS 2229881. Wei Hu is supported by Google Research Scholar Program.

\bibliographystyle{plainnat}
\bibliography{reference}

\newpage

\appendix

\setlength{\abovedisplayskip}{6pt}
\setlength{\belowdisplayskip}{8pt}

\section{Additional Related Works}

\myparagraph{Identifying Mechanisms that Cause Grokking.} Some works attempted to identify mechanisms that cause grokking through experiments.
\citet{davies2022unifying} hypothesized that both grokking and double descent are the result of the existence of two patterns: one pattern is faster to learn but generalizes poorly, and the other pattern is slower to learn but generalizes well. However, it is unclear how these notions can be rigorously defined for general training dynamics.
Similar to the work of~\citet{liu2023omnigrok} that we discussed in~\Cref{sec:l2_motivating}, 
\citet{varma2023explaining} hypothesized that grokking happens when the generalizable solutions have a smaller parameter norm than the overfitting solutions, but the former takes a longer time to learn. They also demonstrated that violating a part of this condition indeed changes the generalization behavior from grokking to ``ungrokking'' and ``semi-grokking'', two interesting phenomena they named in the paper.
For training with adaptive gradient methods, \citet{thilak2022slingshot} identified the Slingshot mechanism, which refers to a cyclic behavior of the last-layer parameter norm that empirically causes grokking. \citet{notsawo2023predicting} also discussed a similar relationship between the oscillations in training loss and grokking. For the task of learning sparse parity, \citet{merrill2023tale} empirically observed that the number of active neurons significantly decrease as the test accuracy starts to improve.
While many explanations of grokking may make sense intuitively, none of these works provides rigorous justification their claims with mathematical analysis for neural net training, while our work is grounded by theoretical analyses of implicit biases in the kernel and rich regimes.

\myparagraph{Progress Measures of Grokking.}
Efforts have been made to find progress measures that are improving smoothly and are predictive of the time to perfect generalization. 
\citet{nanda2023progress,chughtai2023toy,gromov2023grokking} focused on the tasks of learning modular arithmetic or more general group operations, and attempted to reverse engineer the weights of neural nets after grokking. It was found that the weights can exhibit special structures to make the neural net internally compute trigonometric functions or other representations of group elements. These works also made efforts to define progress measures for grokking based on this understanding of the final weights. In a related study, \citet{morwani2023feature} derived analytical formulas for the weights of two-layer neural nets, assuming the margin is ultimately maximized.
\citet{hu2023latent} computed a variety of metrics throughout training and fit a Hidden Markov Model (HMM) over them to study the phase transition in grokking.
A common issue with these works is that why progress measures themselves can make progress is still not well understood and requires new insights into the training dynamics, which is the focus of our work.




\myparagraph{Dynamical Analysis of Grokking.} A few recent works were devoted to theoretically analyze the training dynamics in the grokking phenomenon.
\citet{liu2022towards} attributed the delayed generalization in grokking to the slow learning speed of the input embeddings. They especially analyzed the dynamics of the input embeddings in a related optimization problem, and showed that the eigenvalues of the coefficient matrix in the ODE can be used to predict the convergence. However, it is unclear how this relates to the training dynamics of the original problem and why the neural net overfits the dataset before the predicted convergence time.
\citet{vzunkovivc2022grokking} analyzed a simple linear classification problem, where the loss is the squared loss with explicit $\ell_1$ and $\ell_2$ regularization. Assuming specific input distributions, they proved sharp transition in test accuracy and derived an estimate of the grokking time, which depends on the initialization and regularization strength. \citet{levi2023grokking} leveraged random matrix theory to analyze grokking in a standard linear regression setting, where the test accuracy can provably exhibit sharp transition if the accuracy is defined as the fraction of points whose regression loss is smaller than a small threshold. 
All these works are limited to linear models, while our work is able to analyze deep homogeneous neural nets.


\myparagraph{Concurrent Works.}
Concurrent to our work, \citet{kumar2023grokking} hypothesized that grokking is due to a transition from lazy to rich regimes, and observed that manipulating the scale parameter for the output and the task-model alignment between NTK and ground-truth labels can control the time to escape the lazy regime and the test accuracy when staying in the lazy regime. This observation is consistent with our work, but we provide rigorous analyses for both lazy and rich regimes with quantitative bounds. 
\citet{xu2024benign} proved that training two-layer neural nets on XOR cluster data with noisy labels can exhibit grokking. Both their work and our work go beyond linear models, but their analysis does not show whether the transition in test accuracy is sharp or not, while our work provides quantitative bounds for the sharp transition in grokking.

\myparagraph{Implicit Bias.} A line of works seek to characterize the implicit bias of optimization algorithms that drive them to generalizable solutions. Several forms of implicit bias have been considered, including equivalence to NTK~\citep{du2018gradient,du2019gradient,allen-zhu2019convergence,zou2020gradient,chizat2019lazy,arora2019fine,ji2020polylogarithmic,cao2019generalization}, margin maximization~\citep{soudry2018implicit,nacson2019lexicographic,lyu2020gradient,ji2020directional}, parameter norm minimization~\citep{gunasekar2017implicit,gunasekar2018characterizing,arora2019implicit} and sharpness minimization~\citep{blanc2020implicit,damian2021label,haochen2021shape,li2021happens,lyu2022understanding,gu2023why}. In this work, we characterize the early phase implicit bias based on NTK and late phase implicit bias based on margin maximization.


\newpage

\section{Discussion: Other Implicit Biases Can Also Induce Grokking} \label{sec:discussion-other}

The main paper focuses on the grokking phenomenon induced by large initialization and small weight decay, where the transition in the implicit bias is very sharp: $\frac{1-c}{\lambda} \log \alpha$ leads to the kernel predictor, but increasing it slightly to $\frac{1+c}{\lambda} \log \alpha$ leads to a KKT solution of min-norm/max-margin problems associated with the neural net. 
In this section, we discuss two other possible sources of grokking: implicit margin maximization and sharpness reduction. These two sources can also cause the neural net to generalize eventually, but the transition in test accuracy is not as sharp as the case of using large initialization and small weight decay.

\subsection{Implicit Margin Maximization Without Weight Decay} \label{sec:discussion-no-wd}

\begin{figure}[t]
    \centering
    \begin{minipage}[t]{0.45\textwidth}
        \centering
        \includegraphics[width=\linewidth]{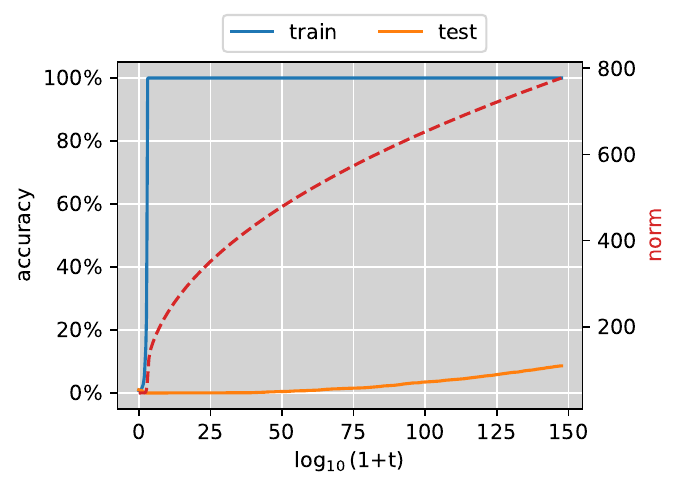}
        \subcaption{time in log scale (shorter horizon)}
    \end{minipage}\hfill
    \begin{minipage}[t]{0.45\textwidth}
        \centering
        \includegraphics[width=\linewidth]{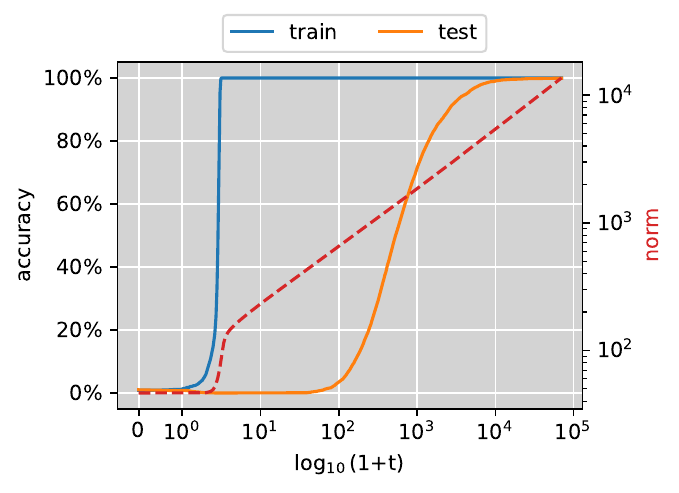}
        \subcaption{time in double log scale (longer horizon)}
    \end{minipage}
    \caption{Training two-layer ReLU nets for modular addition exhibits grokking without weight decay, but the transition is no longer sharp: it can take exponentially longer than the time needed for reaching 100\% training accuracy.}
    \label{fig:mod-add-margin}
\end{figure}

One may wonder if grokking still exists if we remove the small weight decay in our setting.
In \citet{liu2023omnigrok}, it has been argued that adding a non-zero weight decay is crucial for grokking
since the weight norm has to lie in a certain range (Goldilocks zone) to generalize.

We note that for classification tasks, ~\citet{lyu2020gradient,ji2020directional} have shown that gradient descent/flow on homogeneous neural nets converges to KKT points of the margin maximization problem~\eqref{eq:l2-margin-problem}, even without weight decay.
So it is still possible that some implicit bias traps the solution 
in early phase, such as the implicit bias to NTK~\citet{allen-zhu2019convergence,zou2018stochastic,ji2020polylogarithmic,cao2019generalization}.
Then after training for sufficiently long time, the margin maximization bias arises and leads to grokking.


We empirically show that training two-layer neural nets without weight decay indeed leads to grokking, but the transition is no longer sharp.
More specifically, we use the same setting as in~\Cref{sec:l2_motivating} but with weight decay $\lambda = 0$.
We focus on analyzing the training dynamics of full-batch gradient descent with learning rate so small that the trajectory is close to the gradient flow, which ensures that gradient noise/training instability is not a potential source of implicit regularization.

However, directly training with a constant learning rate
leads to very slow loss convergence in the late phase.
Inspired by~\citet{lyu2020gradient,nacson2019convergence},
we set the learning rate as $\frac{\eta}{\cL(\vtheta(t))}$, where $\eta$ is a constant. The main intuition behind this is that the Hessian of $\cL(\vtheta)$ can be shown to be bounded by $\cO(\cL(\vtheta) \cdot \poly(\normtwo{\vtheta}))$, and gradient descent stays close to gradient flow when the learning rate is much lower than the reciprocal of the smoothness.
\Cref{fig:mod-add-margin} plots training and test curves, where $x$-axis shows the corresponding continuous time calculated by summing the learning rates up to this point. It is shown that a very quick convergence in training accuracy occurs in the early phase,
but the test accuracy improves extremely slowly in the late phase:
the test accuracy slowly transitions from $\sim 0\%$ to $100\%$ as $t$ increases from $10^{10^2}$ to $10^{10^4}$. In contrast, \Cref{fig:mod-add-motivating} shows that training with weight decay leads to a sharp transition when the number of steps is $\sim 10^7$.

We note that this much longer time for grokking is not surprising, since the convergence to max-margin solutions is already as slow as $\cO(\frac{\log \log t}{\log t})$ for linear logistic regression~\citep{soudry2018implicit}.

\subsection{Sharpness Reduction with Label Noise}

Grokking can also be triggered by the sharpness-reduction implicit bias of optimization algorithm, \emph{e.g.}, label noise SGD~\citep{blanc2020implicit,damian2021label,li2021happens} and 1-SAM~\citep{wen2022does}. The high-level idea here is that the minimizers of the training loss connect as a Riemannian manifold due to the overparametrization in the model. Training usually contains two phases for algorithms which implicitly minimize the sharpness when the learning rate is small, where in the first phase the algorithm minimizes the training loss and finds a minimizer with poor generalization, while in the second phase the algorithm implicitly minimizes the sharpness along the manifold of the minimizers, and the model groks when the sharpness is sufficiently small. The second phase is typically longer than the first phase by magnitude and this causes grokking. 

As a concrete example, \citet{li2021happens} shows that grokking happens when training two-layer diagonal linear networks \Cref{eq:diagonal_linear_network} by label noise SGD~\eqref{eq:label_noise_sgd} when the learning rate is sufficiently small. We consider the same initialization as in \Cref{subsec:diagonal_linear_networks}, where $\vtheta(0) = \alpha (\vone, \vone)$, where $\alpha > 0$ controls the initialization scale. Different from \Cref{subsec:diagonal_linear_networks}, we consider a $\ell_2$ loss, and the label is generated by a sparse linear ground-truth, that is, $\vy^*(\vx) = \vx^\top\vw^*$, where $\vw^*$ is $\kappa$-sparse. We also assume data distribution is the isotropic Gaussian distribution $\mathcal{N}(0,I)$. 
\begin{align}
	\vtheta(t+1) = \vtheta(t) - \eta \nabla_\vtheta(f(\theta;\vx_{i_t}) - y_{i_t}+\xi_{t})^2, \text{where } \xi_t\overset{i.i.d.}{\sim} \cN(0,1)\label{eq:label_noise_sgd}
\end{align}

\begin{theorem}\label{thm:label_noise_sgd_grok}
    There exists $C_1>0,C_2\in(0,1)$, such that for any $n,d$ satisfying $C_1\kappa \ln d \le n \le d(1-C_2)$, the following holds with $1-e^{-\Omega(n)}$ probability with respect to randomness of training data:
    \begin{enumerate}
        \item The gradient flow starting of $\cL$ from $\vtheta(0)$ has a limit, $\vtheta_{GF}$, which achieves zero training loss. The expected population loss $\vtheta_{GF}$ is at least $\Omega(\norm{\vw^*}_2^2)$. (Theorem 6.7)
        \item For any $\alpha\in(1,2)$ and $T>0$, $\vtheta(T/\eta^{\alpha})$ converges to $\vtheta_{GF}$ in distribution as $\eta\to 0$. (Implication of Theorem B.9)
        \item For any $T>0$, $\vtheta(T/\eta^2)$ converges to some $\bar{\vtheta}(T)$ in distribution as $\eta\to 0$ and $\bar{\vtheta}(T)$ has zero training loss. Moreover, the population loss of $\bar{\vtheta}(T)$ converges to $0$ as $T\to \infty$. (Theorem 6.1)
    \end{enumerate}
\end{theorem}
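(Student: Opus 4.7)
The plan is to verify that our setting (two-layer diagonal linear net \eqref{eq:diagonal_linear_network} on isotropic Gaussian data with sparse linear ground-truth $\vw^*$, square loss, initialization $\vtheta(0)=\alpha(\vone,\vone)$, label-noise SGD \eqref{eq:label_noise_sgd}) matches the hypotheses of the three results from \citet{li2021happens} cited in parentheses, and then to invoke each of them. Each of the three bullets will be proved separately and corresponds to a different time scale.

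\textbf{Bullet (1) — gradient flow limit.} First, I would verify that the loss landscape of $\cL$ on diagonal linear networks satisfies the manifold-of-minimizers structure required by Theorem~6.7 of \citet{li2021happens}, using the explicit parametrization $\vw(\vtheta)=\vu^{\odot 2}-\vv^{\odot 2}$. Since the sample size obeys $n\le d(1-C_2)<d$, the interpolation constraints $\mX\vw=\vy$ have a positive-dimensional affine solution set; and since our initialization is symmetric with $\vu(0)=\vv(0)=\alpha\vone$, gradient flow stays in the kernel regime (by the same argument as in \Cref{thm:l2-cls-kernel}, via the $2$-homogeneity of $f(\vtheta;\vx)$). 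The resulting $\vw_{GF}=\vw(\vtheta_{GF})$ is then the min-$\ell^2$-norm interpolator of the data, which is essentially a ridgeless least-squares solution. For $n \le d(1-C_2)$ draws of isotropic Gaussian covariates with sparse target $\vw^*$, a standard high-probability concentration argument (Marchenko--Pastur / random-matrix bound) gives $\mathbb{E}_{\vx}[(\vx^\top(\vw_{GF}-\vw^*))^2] \ge \Omega(\|\vw^*\|_2^2)$, which is exactly the content of their Theorem~6.7.

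\textbf{Bullet (2) — intermediate time scale.} I would next apply Theorem~B.9 of \citet{li2021happens}, which shows that when we rescale time by $\eta^{-\alpha}$ for any $\alpha\in(1,2)$, the label-noise SGD iterates converge in distribution to the solution of the deterministic gradient flow. The intuition is that one SGD step moves the iterate by $O(\eta)$ in the drift direction and by $O(\eta)$ in the noise direction; after $T/\eta^\alpha$ steps this accumulates to $O(\eta^{1-\alpha}\cdot T)$ drift motion and $O(\eta^{1-\alpha/2})$ noise motion, so the drift wins for $\alpha<2$ while the total drift is $T\cdot \eta^{1-\alpha}\cdot \eta = T$, matching the $O(1)$-scale of the gradient flow. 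Together with bullet (1), this gives that $\vtheta(T/\eta^\alpha)$ converges in distribution to $\vtheta_{GF}$.

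\textbf{Bullet (3) — slow phase and groking.} Finally, for the critical time scale $T/\eta^2$, I would invoke Theorem~6.1 of \citet{li2021happens}, which states that label-noise SGD on this model induces a slow drift along the manifold of zero-training-loss minimizers that, in the limit $\eta\to 0$, is governed by an ODE $\bar{\vtheta}(T)$ minimizing the implicit sharpness regularizer (which here coincides with $\|\vw\|_1$ along the manifold). Zero training loss is preserved throughout since the slow drift moves along this manifold. As $T\to\infty$, the ODE converges to the $\ell^1$-min interpolator; under the assumption $n\gtrsim \kappa\log d$, the standard compressed-sensing/RIP guarantee for random Gaussian designs shows this interpolator exactly recovers $\vw^*$, so the population loss vanishes.

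\textbf{Main obstacle.} The nontrivial step is not any single bullet in isolation — each is a direct quote from \citet{li2021happens} — but rather matching the precise normalization: verifying that the sharpness-reduction ODE on our reparametrized $\vtheta=(\vu,\vv)$ at initialization $\alpha(\vone,\vone)$ picks out the $\ell^1$-min (rather than some other weighted) interpolator, and confirming that the high-probability event in bullet (1) (lower-bounded population loss) and the recovery event in bullet (3) both hold on the same $1-e^{-\Omega(n)}$-probability draw of the data, so that the three statements chain together into a single grokking trajectory.
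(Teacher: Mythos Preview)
Your approach matches the paper's: the paper offers no independent proof of this theorem but simply packages three results from \citet{li2021happens}, with the parenthetical citations (Theorem~6.7, Theorem~B.9, Theorem~6.1) serving as the entire justification. Your plan to verify the hypotheses and invoke each cited result is exactly that.

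However, several of the heuristic justifications you add are imprecise and would be problematic if they were load-bearing. In bullet~(1), you invoke \Cref{thm:l2-cls-kernel} to claim gradient flow stays in the kernel regime, but that theorem treats classification with exponential loss \emph{and weight decay}; here we have square loss with no weight decay, so the argument does not transfer. More importantly, for a fixed finite $\alpha$ the gradient-flow limit $\vw_{GF}$ from the symmetric initialization $\alpha(\vone,\vone)$ is \emph{not} the min-$\ell^2$ interpolator but the minimizer of a hyperbolic-entropy regularizer (cf.\ \citet{woodworth2020kernel}); it only approaches min-$\ell^2$ as $\alpha\to\infty$. The population-loss lower bound you need is the actual content of Theorem~6.7 in \citet{li2021happens} for the true $\vtheta_{GF}$, not a consequence of a Marchenko--Pastur argument for the ridgeless solution. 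In bullet~(2) your scaling heuristic is also slightly garbled: after $T/\eta^{\alpha}$ steps the effective gradient-flow time is $T\eta^{1-\alpha}\to\infty$ (so the flow has long converged to $\vtheta_{GF}$), while the accumulated diffusion is $O(\eta^{1-\alpha/2})\to 0$ for $\alpha<2$; that is the correct mechanism. None of this breaks the proof-by-citation, but you should not present these side explanations as if they were the argument.
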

The above theorem shows that a sharp transition of the implicit biases can be seen in the log time scale: when $\log t = \log T + \alpha \log \frac{1}{\eta}$ for any $\alpha \in (1, 2)$, it converges to $\vtheta_{GF}$; but increasing the time slightly in the log scale to $\log t = \log T + 2 \log \frac{1}{\eta}$ leads to $\bar{\vtheta}(T)$, which minimizes the population loss to $0$. However, this transition is less sharp than our main results, where we show that increasing the time slightly in the normal scale, from $\frac{1-c}{\lambda} \log \alpha$ to $\frac{1+c}{\lambda} \log \alpha$, changes the implicit bias.


\newpage

\section{Proofs for Classification with L2 Regularization}

\subsection{Preliminaries}

Let $\LL: [1, +\infty) \to [0, +\infty), x \mapsto x \log x$ be the linear-times-log function,
and $\LL^{-1}: [0, +\infty) \to [1, +\infty)$ be its inverse function.

\begin{lemma} \label{lm:LL-inverse-lb}
    For all $y > 0$, $\LL^{-1}(y) \ge \frac{y}{\log(y+1)}$.
\end{lemma}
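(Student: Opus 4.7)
The plan is to reduce the claim to a simple one-variable inequality. First I will note that $\LL(x) = x\log x$ is strictly increasing on $[1,+\infty)$ since $\LL'(x) = \log x + 1 \ge 1$, and $\LL(1) = 0$, so $\LL^{-1}$ is a well-defined strictly increasing function from $[0,+\infty)$ onto $[1,+\infty)$. For $y > 0$ this gives $x := \LL^{-1}(y) > 1$, and hence $\log x > 0$. Rewriting the target inequality in this form, $x \ge \frac{y}{\log(y+1)}$ becomes $x \log(y+1) \ge y = x \log x$, which (dividing by $x > 0$) is equivalent to $\log(y+1) \ge \log x$, i.e.\ to the upper bound $\LL^{-1}(y) \le y+1$.

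By monotonicity of $\LL$, this upper bound is in turn equivalent to $\LL(y+1) \ge y$, namely $(y+1)\log(y+1) \ge y$ for all $y \ge 0$. I would verify this elementary inequality by setting $g(y) := (y+1)\log(y+1) - y$ and computing $g(0) = 0$ together with $g'(y) = \log(y+1) + 1 - 1 = \log(y+1) \ge 0$ for $y \ge 0$, which gives $g(y) \ge 0$ on $[0,+\infty)$. The $y = 0$ boundary case of the original claim is handled separately by continuity (using $y/\log(y+1) \to 1 = \LL^{-1}(0)$ as $y \to 0^+$), or simply by noting that the claim is vacuous there.

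There is no real obstacle here: the whole argument is just the observation that inverting $\LL$ and using $\log x$ versus $\log(y+1)$ reduces everything to the one-line convexity-style inequality $(y+1)\log(y+1) \ge y$, which follows from a single derivative computation.
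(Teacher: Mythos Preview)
The proposal is correct and takes essentially the same approach as the paper: both use monotonicity of $\LL$ to reduce the claim to the elementary inequality $\frac{y}{\log(y+1)} \le y+1$, equivalently $(y+1)\log(y+1) \ge y$. The only difference is cosmetic --- the paper applies $\LL$ to $\frac{y}{\log(y+1)}$ and shows the result is at most $y$, while you set $x = \LL^{-1}(y)$ and rewrite the target as $x \le y+1$ --- and you add a one-line derivative check of the final inequality that the paper simply asserts.
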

\begin{proof}
    Since $\LL$ is an increasing function, it suffices to show $\LL(\frac{y}{\log(y+1)}) \le y$,
    and this can be indeed proved by
    \begin{align*}
        \LL\left(\frac{y}{\log(y+1)}\right)
        \le \frac{y}{\log(y+1)} \log \frac{y}{\log(y+1)}
        = y \cdot \frac{\log \frac{y}{\log(y+1)}}{\log(y+1)} 
        \le y,
    \end{align*}
    where the last inequality holds because $\frac{y}{\log(y+1)} \le y + 1$.
\end{proof}

\begin{lemma} \label{lm:LL-alpha-rescale}
    For all $y > 0$ and $\beta \in (0, 1)$,
    \begin{align*}
        \LL^{-1}(\beta y) - 1 \ge \beta (\LL^{-1}(y) - 1).
    \end{align*}
\end{lemma}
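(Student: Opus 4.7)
The plan is to derive this inequality as a direct consequence of the concavity of $\LL^{-1}$ together with the boundary value $\LL^{-1}(0) = 1$. This sidesteps any direct calculation with the transcendental expression $u \log u$.

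First I would verify that $\LL(x) = x \log x$ is strictly convex and strictly increasing on $[1,+\infty)$: indeed $\LL'(x) = \log x + 1 \ge 1$ and $\LL''(x) = 1/x > 0$. A standard fact then gives that the inverse $\LL^{-1}: [0,+\infty) \to [1,+\infty)$ is strictly concave. Second, I would note the boundary value $\LL^{-1}(0) = 1$, which follows from $\LL(1) = 1 \cdot \log 1 = 0$.

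The final step is a one-line application of concavity. Viewing $\beta y$ as the convex combination $\beta \cdot y + (1-\beta) \cdot 0$, concavity of $\LL^{-1}$ yields
\begin{equation*}
    \LL^{-1}(\beta y) \;\ge\; \beta \LL^{-1}(y) + (1-\beta)\LL^{-1}(0) \;=\; \beta \LL^{-1}(y) + (1-\beta).
\end{equation*}
Subtracting $1$ from both sides gives exactly $\LL^{-1}(\beta y) - 1 \ge \beta(\LL^{-1}(y) - 1)$.

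There is no real obstacle here; the only mildly subtle point is making sure that the concavity of $\LL^{-1}$ on the closed interval $[0,y]$ is legitimately invoked at the endpoint $0$. Since $\LL$ is $\mathcal{C}^2$ and strictly convex on $[1,+\infty)$ with $\LL(1) = 0$, $\LL^{-1}$ extends continuously to $0$ with value $1$ and is concave on all of $[0,+\infty)$, so the midpoint inequality above is valid as stated.
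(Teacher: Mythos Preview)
Your proof is correct and takes essentially the same approach as the paper: both invoke concavity of $\LL^{-1}$ to write $\LL^{-1}(\beta y) \ge \beta\,\LL^{-1}(y) + (1-\beta)\,\LL^{-1}(0)$ and then rearrange using $\LL^{-1}(0)=1$. You simply spell out the justification for concavity and the endpoint value in more detail than the paper does.
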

\begin{proof}
    By concavity of $\LL^{-1}$, $\LL^{-1}(\beta y) \ge \beta \LL^{-1}(y) + (1-\beta) \LL^{-1}(0)$.
    Rearranging the terms gives the desired result.
\end{proof}

\begin{lemma} \label{lm:L-log-L}
The following holds for $r_1, \dots, r_n > 0$ and $\cL = \frac{1}{n} \sum_{i=1}^{n} r_i$:
\begin{align*}
    \cL \log \frac{1}{n \cL} \le \frac{1}{n}\sum_{i=1}^{n} r_i \log \frac{1}{r_i}
    \le \cL \log \frac{1}{\cL}.
\end{align*}
\end{lemma}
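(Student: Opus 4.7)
The plan is to reduce both inequalities to the standard bounds on Shannon entropy. First, I would normalize the $r_i$'s by defining $p_i := r_i/(n\cL)$ for $i = 1, \dots, n$. Since $\sum_i r_i = n\cL$ and each $r_i > 0$, the vector $(p_1, \dots, p_n)$ is a strictly positive probability distribution on $[n]$. Substituting $r_i = n\cL \, p_i$ into the middle sum and splitting the logarithm gives
\begin{equation*}
\frac{1}{n}\sum_{i=1}^{n} r_i \log \frac{1}{r_i}
= \frac{1}{n}\sum_{i=1}^{n} n\cL \, p_i \log \frac{1}{n\cL \, p_i}
= \cL \log \frac{1}{n\cL} + \cL \sum_{i=1}^{n} p_i \log \frac{1}{p_i}.
\end{equation*}
Writing $H(p) := \sum_i p_i \log(1/p_i)$ for the Shannon entropy, the whole lemma then reads as the elementary bound $0 \le H(p) \le \log n$.

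For the lower bound, since $p_i \in (0, 1]$ we have $\log(1/p_i) \ge 0$, so each term in $H(p)$ is nonnegative and the left inequality follows immediately. For the upper bound, I would invoke Jensen's inequality applied to the concave function $x \mapsto -x\log x$ on $(0, 1]$, which gives $H(p) \le \log n$; equivalently, this is the nonnegativity of the KL divergence between $p$ and the uniform distribution. Combined with the identity $\cL \log \frac{1}{n\cL} + \cL \log n = \cL \log \frac{1}{\cL}$, this yields the right inequality.

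There is no real obstacle here: the lemma is essentially a restatement of $0 \le H(p) \le \log n$ after the natural change of variables, and the hypothesis $r_i > 0$ is exactly what is needed to make all logarithms finite and to ensure $p_i > 0$ so that Jensen applies on the interior of the domain. The only thing worth double-checking is the algebraic identity in the displayed expansion, which is routine.
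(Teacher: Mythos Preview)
Your proof is correct and essentially identical to the paper's: the paper proves the upper bound by applying Jensen directly to the convex function $x\mapsto x\log x$ on the $r_i$'s, and the lower bound by the pointwise inequality $\log(1/r_i)\ge\log(1/(n\cL))$, which is exactly your $p_i\le 1\Rightarrow H(p)\ge 0$ after the normalization $p_i=r_i/(n\cL)$. The entropy reformulation is a nice way to organize the same two inequalities.
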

\begin{proof}
    For proving the first inequality, note that
    $\log \frac{1}{r_i} \ge \log \frac{1}{n \cL}$,
    so
    \begin{align*}
        \frac{1}{n}\sum_{i=1}^{n} r_i \log \frac{1}{r_i}
        \ge \frac{1}{n}\sum_{i=1}^{n} r_i \log \frac{1}{n \cL}
        = \cL \log \frac{1}{n \cL}.
    \end{align*}
    For the other inequality,
    by Jensen's inequality, we have
    \begin{align*}
        \frac{1}{n}\sum_{i=1}^{n} r_i \log \frac{1}{r_i}
        = -\frac{1}{n} \sum_{i=1}^{n} \LL(r_i)
        \le -\LL(\cL)
        = \cL \log \frac{1}{\cL},
    \end{align*}
    which completes the proof.
\end{proof}

\subsection{Kernel Regime} \label{sec:l2-cls-kernel-proof}

Now we present the proof of~\Cref{thm:l2-cls-kernel} for the kernel regime.
We follow the notations in~\Cref{sec:cls-kernel} and define $\vhoptntk$ as the unique unit vector
along the direction of the optimal solution to~\Cref{eq:ntk-margin-problem}
and $\gammantk := \min_{i \in [n]} \{ y_i \inne{\nabla f_i(\vthetauinit)}{\vhoptntk} \}$.
The kernel regime is a training regime where the parameter does not move very far from the initialization. Mathematically, let $\epsilon_{\max} > 0$ be a small constant so that
\[
    \max_{i \in [n]}\{ \normtwo{\nabla f_i(\vtheta) - \nabla f_i(\vthetauinit)} : i \in [n] \}
< \frac{\gammantk}{2}
\]
holds for all $\normtwo{\vtheta - \vthetauinit} < \epsilon_{\max}$,
and let $T_{\max} := 
    \inf\left\{
        t \ge 0 :
        \lnormtwo{\frac{e^{\lambda t}}{\alpha}\vtheta(t) - \vthetauinit} > \epsilon_{\max}
\right\}$ be the time that $\vtheta(t)$ moves far enough to change the NTK features significantly.

A key difference to existing analyses of the kernel regime is that our setting has weight decay, so the norm can change a lot even when the parameter direction has not changed much yet. Therefore, we normalize the parameter by multiplying a factor of $\frac{e^{\lambda t}}{\alpha}$ and compare it to the initial parameter direction. We consider $\vtheta(t)$ as moving too far only if $\normtwo{\frac{e^{\lambda t}}{\alpha}\vtheta(t) - \vthetauinit}$ is too large, which is different from existing analyses in that they usually consider the absolute moving distance $\lnormtwo{\vtheta(t) - \vtheta(0)}$.

In the following, we first derive an upper bound for the loss convergence rate, and then use it to lower bound the training time in the kernel regime $T_{\max} \ge \frac{1}{\lambda}(\log \alpha - \frac{1}{L} \log \log A + \Omega(1))$. 
Finally, we establish the implicit bias towards the kernel predictor with a more careful analysis of the trajectory.

More notations are needed for the proof.
Let $q_i(\vtheta) := y_i f_i(\vtheta)$ be the output margin of the neural net at the $i$-th training sample. Let $r_i(\vtheta) = \ell(f_i(\vtheta),y_i) = \exp(-q_i(\vtheta))$ be the loss incurred at the $i$-th training sample.
Let $A := \frac{\alpha^{2(L-1)}}{\lambda}$, which is a factor that will appear very often in our proof. Let $\Gntk := \sup\{ \normtwo{\nabla f_i(\vtheta)} :
    \normtwo{\vtheta - \vthetauinit} < \epsilon_{\max}, i \in [n] \}$ be the maximum norm of NTK features.
The following lemma gives an upper bound for the training loss over time.
\begin{lemma} \label{lm:wd-case-L-descent}
    For all $0 \le t \le T_{\max}$,
    \begin{align}
        \frac{\dd \cL}{\dd t}
        &\le -\frac{\gammantk^2}{4} \alpha^{2(L-1)} e^{-2(L-1)\lambda t} \cL^2 + \lambda L \cdot \cL \log \frac{1}{\cL}, \label{eq:wd-case-L-descent-a} \\
        \cL(\vtheta(t))
        &\le \max\left\{
            \frac{1}{1 + \frac{\gammantk^2}{16(L-1)} A(1-e^{-2(L-1)\lambda t})},
            \frac{e^{2(L-1)\lambda t}}{\LL^{-1}(\frac{\gammantk^2}{8L} A) - 1}
        \right\}. \label{eq:wd-case-L-descent-b}
    \end{align}
\end{lemma}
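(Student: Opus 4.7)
The plan is to first establish the differential inequality (\ref{eq:wd-case-L-descent-a}) by a chain-rule analysis of $\dot{\cL}$, and then extract the pointwise bound (\ref{eq:wd-case-L-descent-b}) via a case analysis combined with a comparison argument.

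\textbf{Step 1 (Differential inequality).} Since $\dot{\vtheta} = -\nabla\cL_\lambda(\vtheta) = -\nabla\cL(\vtheta) - \lambda\vtheta$, I would split $\dot{\cL} = -\|\nabla\cL\|^2 - \lambda\inner{\nabla\cL}{\vtheta}$. For the weight-decay term, Euler's identity for $L$-homogeneous $f_i$ gives $\inner{\nabla f_i}{\vtheta} = L f_i$, so $\inner{\nabla\cL}{\vtheta} = -\frac{L}{n}\sum_i r_i q_i = -\frac{L}{n}\sum_i r_i \log \frac{1}{r_i}$ (using $r_i = e^{-q_i}$), and \Cref{lm:L-log-L} yields $-\lambda\inner{\nabla\cL}{\vtheta} \le \lambda L \cL \log\frac{1}{\cL}$. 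For the gradient term, I would lower bound $\|\nabla\cL\|$ by its component along $\vhoptntk$. The key rescaling step uses $L$-homogeneity to write $\nabla f_i(\vtheta) = (\alpha e^{-\lambda t})^{L-1}\nabla f_i(\vphi)$ where $\vphi := \frac{e^{\lambda t}}{\alpha}\vtheta$; for $t \le T_{\max}$ we have $\|\vphi - \vthetauinit\| < \epsilon_{\max}$, so by construction of $\epsilon_{\max}$ the perturbed NTK features still satisfy $y_i\inner{\nabla f_i(\vphi)}{\vhoptntk} \ge \gammantk - \gammantk/2 = \gammantk/2$. This gives $\|\nabla\cL\| \ge \inner{-\nabla\cL}{\vhoptntk} \ge \tfrac{\gammantk}{2}(\alpha e^{-\lambda t})^{L-1}\cL$, and squaring together with the weight-decay bound yields (\ref{eq:wd-case-L-descent-a}).

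\textbf{Step 2 (Pointwise bound).} Let $\cL_1(t)$ and $\cL_2(t)$ denote the first and second expressions inside the $\max$ in (\ref{eq:wd-case-L-descent-b}); note that $\cL(0) = 1 = \cL_1(0)$ (since $f(\vthetauinit;\vx) = 0$) while $\cL_2(0)$ is small for large $A$. Define $S := \{t \in [0, T_{\max}] : \cL(s) > \cL_2(s) \text{ for all } s \in [0, t]\}$. On $S$, I would use $\cL > \cL_2(t) > \frac{e^{2(L-1)\lambda t}}{\LL^{-1}(\gammantk^2 A/(8L))}$ together with the identity $\log\LL^{-1}(y) = y/\LL^{-1}(y)$ (immediate from $\LL(\LL^{-1}(y)) = y$) to show by short algebraic manipulation that the weight-decay term is at most half the gradient term in (\ref{eq:wd-case-L-descent-a}); hence $\dot{\cL} \le -\frac{\gammantk^2}{8}\alpha^{2(L-1)}e^{-2(L-1)\lambda t}\cL^2$, and dividing by $\cL^2$ and integrating from $\cL(0) = 1$ gives $\cL(t) \le \cL_1(t)$ on $S$. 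Off $S$, I would argue via comparison: at any time where $\cL(t) = \cL_2(t)$, direct substitution into (\ref{eq:wd-case-L-descent-a}) combined with $\log(Y+1) = y/(Y+1) \le y/Y$ (where $y = \gammantk^2 A/(8L)$ and $Y = \LL^{-1}(y)-1$) shows $\dot{\cL} \le \dot{\cL}_2 = 2(L-1)\lambda \cL_2$, so $\cL - \cL_2$ cannot cross zero from below, yielding $\cL(t) \le \cL_2(t)$ off $S$. Combining the two cases gives (\ref{eq:wd-case-L-descent-b}).

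\textbf{Main obstacle.} The lower bound on $\|\nabla\cL\|$ is the main technical point in Step 1: unlike standard NTK-style arguments, $\vtheta(t)$ itself drifts far from $\alpha\vthetauinit$ because of weight decay, so one cannot Taylor-expand directly around initialization. The homogeneity rescaling to $\vphi = e^{\lambda t}\vtheta/\alpha$ is what makes the NTK feature perturbation controllable, and the condition $t \le T_{\max}$ is precisely what keeps $\vphi$ inside the kernel neighborhood. The subtle point in Step 2 is the non-re-emergence argument: the gradient-dominated ODE bound is only valid while $\cL > \cL_2(t)$, so one has to rule out $\cL$ later crossing back above the exponentially growing $\cL_2$. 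The identity $\log\LL^{-1}(y) = y/\LL^{-1}(y)$ is the key algebraic fact making both the dominance and the non-re-emergence arguments go through with matching constants of $\gammantk^2/(8L)$ inside $\LL^{-1}$ and $\gammantk^2/(16(L-1))$ in the $\cL_1$ denominator.
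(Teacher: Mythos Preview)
Your proposal is correct and follows essentially the same approach as the paper. Step~1 is identical: the same chain-rule split, the same homogeneity rescaling $\vphi = \frac{e^{\lambda t}}{\alpha}\vtheta$ to keep the NTK features in the $\epsilon_{\max}$-neighborhood, and the same use of \Cref{lm:L-log-L} for the weight-decay term.

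For Step~2 both arguments rest on the same case analysis (weight-decay term dominated vs.\ not), but the organization is dual. The paper partitions by $\cE = \{t : \LL(1/\cL) \ge \frac{\gammantk^2}{8L}Ae^{-2(L-1)\lambda t}\}$: on $\cE$ it reads off $\cL \le \cL_2$ directly via \Cref{lm:LL-alpha-rescale}, and off $\cE$ it integrates the simplified ODE and rules out $t' \in \cE$ by contradiction to force $t' = 0$, yielding $\cL \le \cL_1$. You instead partition by $S = \{t : \cL(s) > \cL_2(s)\ \forall s \le t\}$: on $S$ you integrate to get $\cL \le \cL_1$, and off $S$ you use a barrier/comparison argument (checking $\dot{\cL} < \dot{\cL}_2$ at crossings) to keep $\cL \le \cL_2$. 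Your route avoids invoking \Cref{lm:LL-alpha-rescale} and is arguably more self-contained; the paper's route avoids the ODE comparison lemma machinery. Both are clean and the constants match because of the same algebraic identity $\log\LL^{-1}(y) = y/\LL^{-1}(y)$ you identified.
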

\begin{proof}
    By the update rule of $\vtheta$, the following holds for all $t > 0$:
    \begin{align*}
        \frac{\dd \cL}{\dd t} = \inner{\nabla \cL(\vtheta)}{\frac{\dd \vtheta}{\dd t}}
        = \underbrace{-\lnormtwo{\frac{1}{n}\sum_{i=1}^{n} r_i
        \nabla q_i(\vtheta)}^2}_{=: V_1}
        + \underbrace{\frac{\lambda L}{n} \sum_{i=1}^{n} r_i q_i}_{=: V_2}.
    \end{align*}
    For $V_1$, we have
    \begin{align}
        -V_1 = \lnormtwo{\frac{1}{n}\sum_{i=1}^{n} r_i \nabla q_i(\vtheta)}^2
        \ge \left( \frac{1}{n}\sum_{i=1}^{n} r_i
        \inner{\nabla q_i(\vtheta)}
        {\vhoptntk} \right)^2. \label{eq:wd-case-v1-prelim}
    \end{align}
    Note that for all $t \le T_{\max}$,
    \begin{align*}
        \frac{1}{n}\sum_{i=1}^{n} r_i \inner{\nabla q_i(\vtheta)}{\vhoptntk}
        & \ge \frac{1}{n} \sum_{i=1}^{n} r_i \left(
            \inner{\nabla q_i(\alpha e^{-\lambda t} \vthetauinit)}{\vhoptntk}
            - \normtwo{
                \nabla f_i(\vtheta)
                - \nabla f_i(\alpha e^{-\lambda t} \vthetauinit)}\right) \\
        & \ge \frac{1}{n} \sum_{i=1}^{n} r_i \cdot \alpha^{L-1} e^{- (L-1) \lambda t} \left(  \gammantk  - \frac{1}{2} \gammantk\right) \\
        &= \frac{\gammantk}{2}\alpha^{L-1} e^{-(L-1)\lambda t} \cL,
    \end{align*}
    where the second inequality is due to
    $\inner{\nabla q_i(\vthetauinit)}{\vhoptntk} \ge \gammantk$,
    $\lnormtwo{\nabla f_i(\frac{e^{\lambda t}}{\alpha} \vtheta(t))
    - \nabla f_i(\vthetauinit)} \le \frac{\gammantk}{2}$
    and the $(L-1)$-homogeneity of $\nabla f_i$.
    Together with~\eqref{eq:wd-case-v1-prelim},
    we obtain the following upper bound for $V_1$:
    \begin{align}
        V_1 \le -\frac{\gammantk^2}{4} \alpha^{2(L-1)} e^{-2(L-1)\lambda t} \cL^2. \label{eq:wd-case-v1}
    \end{align}
    For $V_2$, by \Cref{lm:L-log-L} we have $\frac{1}{n}\sum_{i=1}^{n} r_i \log \frac{1}{r_i} \le \cL \log \frac{1}{\cL}$.
    So we have the following upper bound for $V_2$:
    \begin{align}
        V_2 \le \lambda L \cdot \cL \log \frac{1}{\cL}.
    \end{align}
    Combining this with~\eqref{eq:wd-case-v1} together proves~\eqref{eq:wd-case-L-descent-a}.

    For proving~\eqref{eq:wd-case-L-descent-b},
    we first divide by $\cL^2$ on both sides of~\eqref{eq:wd-case-L-descent-a} to get
    \begin{align}
        \frac{\dd}{\dd t} \frac{1}{\cL}
        \ge \frac{\gammantk^2}{4} \alpha^{2(L-1)} e^{-2(L-1)\lambda t}
        - \lambda L \cdot \frac{1}{\cL} \log \frac{1}{\cL}. \label{eq:wd-case-L-descent-1-over-L}
    \end{align}
    Let $\cE 
    := \{ t \in [0, T_{\max}] :
    \LL(\frac{1}{\cL}) \ge \frac{\gammantk^2}{8L} A e^{-2(L-1)\lambda t} \}$.
    For all $t \in [0, T_{\max}]$, if $t \in \cE$, then
    by~\Cref{lm:LL-alpha-rescale},
    \begin{align*}
        \cL(\vtheta(t))
        \le \frac{1}{\LL^{-1}\left(\frac{\gammantk^2}{8L} A e^{-2(L-1)\lambda t}\right)}
        &\le \frac{1}{1 + \left(\LL^{-1}(\frac{\gammantk^2}{8L} A) - 1\right)e^{-2(L-1)\lambda t}} \\
        &\le \frac{1}{\left(\LL^{-1}(\frac{\gammantk^2}{8L} A) - 1\right)e^{-2(L-1)\lambda t}} \le \frac{e^{2(L-1)\lambda t}}{\LL^{-1}(\frac{\gammantk^2}{8L} A) - 1}.
    \end{align*}
    Otherwise, let $t'$ be the largest number in $\cE \cup \{ 0 \}$ that is smaller than $t$. Then
    \begin{align*}
        \frac{1}{\cL(\vtheta(t))} &= \frac{1}{\cL(\vtheta(t'))} + \int_{t'}^{t} \frac{\dd}{\dd \tau} \frac{1}{\cL} \,\dd \tau \\
        &\ge \frac{1}{\cL(\vtheta(t'))} + \int_{t'}^{t} \frac{\gammantk^2}{8} \alpha^{2(L-1)} e^{-2(L-1)\lambda \tau} \,\dd \tau \\
        &= \frac{1}{\cL(\vtheta(t'))} + \frac{\gammantk^2}{16(L-1)} A (e^{-2(L-1)\lambda t'} - e^{-2(L-1)\lambda t}).
    \end{align*}
    If $t' \in \cE$, then
    $\frac{1}{\cL(\vtheta(t))} \ge \LL^{-1}\left(\frac{\gammantk^2}{8L} A e^{-2(L-1)\lambda t'}\right) \ge \LL^{-1}\left(\frac{\gammantk^2}{8L} A e^{-2(L-1)\lambda t}\right)$,
    which contradicts to $t \notin \cE$.
    So it must hold that $t' = 0$, and thus 
    $\frac{1}{\cL(\vtheta(t))} \ge 1 + \frac{\gammantk^2}{16(L-1)} A(1 - e^{-2(L-1)\lambda t})$.
    Combining all these together proves~\eqref{eq:wd-case-L-descent-b}.
\end{proof}

\begin{lemma} \label{lm:wd-case-move-bound}
    There exists a constant $\Delta T$ such that
    $T_{\max} \ge \frac{1}{\lambda}
    \left( \log \alpha - \frac{1}{L}\log \log A + \Delta T \right)$ when $\alpha$ is sufficiently large.
    Furthermore,
    for all $t \le \frac{1}{\lambda}
    \left( \log \alpha - \frac{1}{L}\log \log A + \Delta T \right)$,
    \begin{align*}
        \lnormtwo{\frac{1}{\alpha} e^{\lambda t}\vtheta(t) - \vthetauinit} =
        \lO{\alpha^{-L} \log A} \cdot e^{L\lambda t}.
    \end{align*}
\end{lemma}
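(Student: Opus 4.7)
\textbf{Proof plan for \Cref{lm:wd-case-move-bound}.} The plan is to directly control the rescaled deviation $\vphi(t) := \frac{e^{\lambda t}}{\alpha}\vtheta(t) - \vthetauinit$, since its norm is precisely what defines $T_{\max}$. Starting from the gradient flow ODE $\dot{\vtheta} = -\nabla\cL(\vtheta) - \lambda \vtheta$, we first multiply through by $e^{\lambda t}$ to eliminate the weight-decay term: $\frac{\dd}{\dd t}(e^{\lambda t}\vtheta) = -e^{\lambda t}\nabla\cL(\vtheta)$. Dividing by $\alpha$ and using $\nabla \cL(\vtheta) = -\frac{1}{n}\sum_i r_i y_i \nabla f_i(\vtheta)$ yields $\dot{\vphi} = \frac{e^{\lambda t}}{\alpha n} \sum_i r_i y_i \nabla f_i(\vtheta)$.

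Next, I would use the $(L-1)$-homogeneity of $\nabla f_i$ to pull out a scalar factor. For any $t < T_{\max}$, the rescaled parameter $\tfrac{e^{\lambda t}}{\alpha}\vtheta(t)$ lies within $\epsilon_{\max}$ of $\vthetauinit$, so $\normtwo{\nabla f_i(\tfrac{e^{\lambda t}}{\alpha}\vtheta(t))} \le \Gntk$ by definition of $\Gntk$; homogeneity then gives $\normtwo{\nabla f_i(\vtheta(t))} \le (\alpha e^{-\lambda t})^{L-1}\Gntk$. Combined with $\frac{1}{n}\sum_i r_i = \cL(\vtheta(t))$, this yields the clean pointwise bound
\begin{equation*}
    \normtwo{\dot\vphi(t)} \le \Gntk\, \alpha^{L-2} e^{-(L-2)\lambda t}\, \cL(\vtheta(t)).
\end{equation*}
Since $\vphi(0) = 0$, integrating gives $\normtwo{\vphi(t)} \le \Gntk \alpha^{L-2}\int_0^t e^{-(L-2)\lambda\tau} \cL(\vtheta(\tau))\,\dd\tau$.

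The key step is to plug in the two-branch loss bound from \Cref{lm:wd-case-L-descent} and integrate. The first branch is $O(1/A)$, giving a contribution of order $\alpha^{L-2}/(A) \cdot t = O(\lambda t/\alpha^{L})$, which is negligible. The second branch is $\O(e^{2(L-1)\lambda\tau} / (\LL^{-1}(cA) - 1))$; using \Cref{lm:LL-inverse-lb} we have $\LL^{-1}(cA) - 1 \gtrsim A/\log A$, so this contribution becomes
\begin{equation*}
    \lO{\tfrac{\alpha^{L-2}\log A}{A}} \int_0^t e^{L\lambda\tau}\, \dd\tau \;=\; \lO{\tfrac{\log A}{\alpha^{L}\lambda}} \cdot e^{L\lambda t} \;\cdot\; \lambda \;=\; \lO{\alpha^{-L}\log A}\cdot e^{L\lambda t},
\end{equation*}
which is exactly the quantitative bound we need.

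Finally, I would close the bootstrap with a standard continuity argument: setting $t^\star := \frac{1}{\lambda}(\log\alpha - \frac{1}{L}\log\log A + \Delta T)$ for a sufficiently small constant $\Delta T$, the bound above evaluates to a quantity strictly less than $\epsilon_{\max}/2$ for all $\alpha$ sufficiently large. Thus, on $[0, \min(t^\star, T_{\max})]$ we have $\normtwo{\vphi(t)} < \epsilon_{\max}/2$, which by continuity forbids $T_{\max} < t^\star$, proving $T_{\max} \ge t^\star$ and simultaneously establishing the quantitative deviation bound for all $t \le t^\star$. The main obstacle I expect is the bookkeeping of the bootstrap: the loss bound of the previous lemma is only valid for $t \le T_{\max}$, so the integral-to-pointwise closure must be phrased carefully (e.g., via $\sup\{t : \normtwo{\vphi(s)} < \epsilon_{\max}, \forall s \le t\}$) to avoid circularity, and one must verify that the constant $\Delta T$ can be chosen independently of $\alpha$ given that $\log A = \Theta(\log\alpha)$ under the assumption $\lambda = \Theta(\alpha^{-p})$.
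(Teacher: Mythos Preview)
Your plan is essentially the paper's proof: rewrite the flow as $\frac{\dd}{\dd t}(e^{\lambda t}\vtheta)=-e^{\lambda t}\nabla\cL$, use $(L-1)$-homogeneity of $\nabla f_i$ to get $\normtwo{\dot\vphi}\le \Gntk\,\alpha^{L-2}e^{-(L-2)\lambda t}\cL$, integrate against the two-branch bound of \Cref{lm:wd-case-L-descent}, and close the bootstrap with $\Delta T=\frac{1}{L}\log(\epsilon_{\max}/C)$.

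The one place where a step would fail as written is your handling of the first branch. The term $\frac{1}{1+cA(1-e^{-2(L-1)\lambda\tau})}$ is \emph{not} $O(1/A)$ uniformly in $\tau$: it equals $1$ at $\tau=0$ (indeed $\cL(\vtheta(0))=1$) and only becomes $O(1/A)$ once $1-e^{-2(L-1)\lambda\tau}$ is bounded away from zero. The paper deals with this by evaluating $I_1=\int_0^t\frac{\dd\tau}{1+cA(1-e^{-2(L-1)\lambda\tau})}$ in closed form and showing $I_1=\lO{\frac{\log A}{\lambda A}}$ for $t\le O(\frac{1}{\lambda}\log\alpha)$; after multiplying by $\alpha^{L-2}$ this is $O(\alpha^{-L}\log A)$, which is absorbed into the second-branch contribution $O(\alpha^{-L}\log A)\cdot e^{L\lambda t}$. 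Your asserted order $O(\lambda t/\alpha^L)$ for this piece happens to agree at $t=t^\star$, but the justification ``first branch is $O(1/A)$'' is false; you need either the explicit integral or a split of the integration domain near $\tau\sim \alpha^{-2(L-1)}$ to account for the initial transient where $\cL\approx 1$.
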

\begin{proof}
    By definition of $T_{\max}$,
    to prove the first claim,
    it suffices to show that
    there exists a constant $\Delta T$
    such that
    $\lnormtwo{\frac{1}{\alpha}e^{\lambda t}\vtheta(t) - \vthetauinit} \le \epsilon_{\max}$
    for all $t \le \min\{
        \frac{1}{\lambda} \left( \log \alpha - \frac{1}{L} \log \log A + \Delta T \right), T_{\max}\}$.

    When $t \le T_{\max}$, we have the following gradient upper bound:
    \begin{align*}
        \normtwo{\nabla \cL(\vtheta(t))}
        = \lnormtwo{\frac{1}{n} \sum_{i=1}^{n} r_i \nabla q_i(\vtheta(t)) }
        &\le \frac{1}{n} \sum_{i=1}^{n} r_i \normtwo{\nabla q_i(\vtheta(t))} \\
        &\le \frac{1}{n} \sum_{i=1}^{n} r_i
            \alpha^{L-1} e^{-(L-1)\lambda t}
            \normtwo{\nabla q_i(\tfrac{1}{\alpha}e^{\lambda t}\vtheta(t))} \\
        &\le \Gntk \alpha^{L-1} e^{-(L-1)\lambda t} \cL.
    \end{align*}
    By update rule, $e^{\lambda t}\vtheta(t) - \alpha\vthetauinit = \int_{0}^{t} e^{\lambda \tau} \nabla \cL(\vtheta(\tau)) \dd \tau$.
    Applying the gradient upper bounds gives
    \begin{align*}
        \frac{1}{\alpha}\lnormtwo{e^{\lambda t}\vtheta(t) - \alpha\vthetauinit}
        &\le \frac{1}{\alpha} \int_{0}^{t} \lnormtwo{e^{\lambda \tau} \nabla \cL(\vtheta(\tau))} \,\dd\tau \\
        &\le \frac{1}{\alpha} \int_{0}^{t} e^{\lambda \tau}
        \Gntk \alpha^{L-1} e^{-(L-1)\lambda \tau} \cL(\vtheta(\tau))  \,\dd\tau \\
        &= \Gntk \alpha^{L-2} \int_{0}^{t} e^{-(L-2) \lambda \tau}   \cL(\vtheta(\tau))  \,\dd\tau.
    \end{align*}
    Substituting $\cL$ in 
    $\int_{0}^{t} e^{-(L-2) \lambda \tau} \cL(\vtheta(\tau)) \,\dd\tau$
    with the loss upper bound in \Cref{lm:wd-case-L-descent}, we obtain the following for all $t \le T_{\max}$,
    \begin{align*}
            \int_{0}^{t} e^{-(L-2) \lambda \tau} \cL(\vtheta(\tau)) \,\dd\tau
        &\le \int_{0}^{t} \max\left\{
            \frac{e^{-(L-2)\lambda \tau}}
                {1 + \frac{\gammantk^2}{16(L-1)} A(1-e^{-2(L-1)\lambda \tau})},
            \frac{e^{-(L-2)\lambda \tau} \cdot e^{2(L-1)\lambda \tau}}
            {\LL^{-1}(\frac{\gammantk^2}{8L} A) - 1}
        \right\} \,\dd\tau.
    \end{align*}
    So we have $\int_{0}^{t} e^{-(L-2) \lambda \tau} \cL(\vtheta(\tau)) \,\dd\tau \le \max\{I_1, I_2\}$, where
    \begin{align*}
        I_1 := \int_{0}^{t}
            \frac{\dd \tau}
            {1 + \frac{\gammantk^2}{16(L-1)} A(1-e^{-2(L-1)\lambda \tau})}, \qquad 
        I_2 := \int_{0}^{t}
            \frac{e^{L\lambda \tau}}
            {\LL^{-1}(\frac{\gammantk^2}{8L} A) - 1}
            \dd \tau.
    \end{align*}
    We can exactly compute the integrals for both $I_1$ and $I_2$.
    For $I_1$, it holds for all $t \le \min\{ T_{\max}, \O(\frac{1}{\lambda} \log \alpha) \}$ that
    \begin{align*}
        I_1 &= \frac{1}{2(L-1)\lambda\left(1 + \frac{\gammantk^2}{16(L-1)} A\right)}
            \log\left(\left(1 + \frac{\gammantk^2}{16(L-1)} A\right) e^{2(L-1)\lambda t}
            - \frac{\gammantk^2}{16(L-1)} A\right) \\
            &\le \frac{1}{2(L-1)\lambda\left(1 + \frac{\gammantk^2}{16(L-1)} A\right)}
            \log\left(\left(1 + \frac{\gammantk^2}{16(L-1)} A\right) \poly(\alpha)
            - \frac{\gammantk^2}{16(L-1)} A\right) \\
            &= \lO{\frac{\log A}{\lambda A} }.
    \end{align*}
    For $I_2$, it holds for all $t \le T_{\max}$ that
    \begin{align*}
        I_2 &= \frac{e^{L\lambda t} - 1}
        {L\lambda \cdot \left(\LL^{-1}(\frac{\gammantk^2}{8L} A) - 1\right)} = \lO{\frac{\log A}{\lambda A}} \cdot e^{L \lambda t}.
    \end{align*}
    Putting all these together, we have
    \begin{align}
        \begin{aligned}
        \frac{1}{\alpha}\lnormtwo{e^{\lambda t}\vtheta(t) - \alpha\vthetauinit}
        &= \lO{\alpha^{L-2}} \cdot 
            \left(
                \lO{\frac{\log A}{\lambda A}}
                + \lO{\frac{\log A}{\lambda A}} \cdot e^{L\lambda t} \right) \\
        &= \lO{\frac{\alpha^{L-2} \log A}{\lambda A}} \cdot e^{L\lambda t} = \lO{\alpha^{-L} \log A} \cdot e^{L\lambda t}.
        \end{aligned} \label{eq:wd-case-move-bound-nearly-done}
    \end{align}
    Therefore, there exists a constant $C > 0$ such that 
    when $\alpha$ is sufficiently large,
    $\lnormtwo{\frac{1}{\alpha}e^{\lambda t}\vtheta(t) - \vthetauinit}
    \le C \alpha^{-L} \log A \cdot e^{L\lambda t}$
    for all $t \le \min\{T_{\max}, \O(\frac{1}{\lambda} \log \alpha)\}$.
    Setting $\Delta T := \frac{1}{L} \log \frac{\epsilon_{\max}}{C}$,
    we obtain the following bound for all $t \le \min\{T_{\max},
    \frac{1}{\lambda}(\log \alpha - \frac{1}{L} \log \log A + \Delta T)\}$:
    \begin{align*}
        \lnormtwo{\frac{1}{\alpha}e^{\lambda t}\vtheta(t) - \vthetauinit}
        \le C \alpha^{-L} \log A \cdot
        \exp\left(L \log \alpha - \log \log A + \log \frac{\epsilon_{\max}}{C}\right)
        = \epsilon_{\max},
    \end{align*}
    which implies
    $T_{\max} \ge \frac{1}{\lambda}(\log \alpha - \frac{1}{L} \log \log A + \Delta T)$.
    Combining this with \eqref{eq:wd-case-move-bound-nearly-done} proves the second claim.
\end{proof}

Now we turn to analyze the implicit bias.
Let $\vdelta(t; \alpha \vthetauinit)
:=
\left(\frac{1}{\alpha} e^{\lambda t} \vtheta(t; \alpha \vthetauinit)
- \vthetauinit\right)$
and $\vh(t; \alpha \vthetauinit)
:= \alpha^L e^{-L\lambda t} \vdelta(t; \alpha \vthetauinit)$. The following lemma shows that the output function can be approximated by a linear function.
\begin{lemma} \label{lm:wd-case-q-taylor}
    For all $t \le \frac{1}{\lambda}
    \left( \log \alpha - \frac{1}{L}\log \log A + \Delta T \right)$,
    \begin{align}
        q_i(\tfrac{1}{\alpha} e^{\lambda t} \vtheta)
        &= \inner{\nabla q_i(\vthetauinit)}{\vdelta}
        + \O(\alpha^{-L} \log A) \cdot e^{L\lambda t}\normtwo{\vdelta},
        \label{eq:wd-case-q-delta-taylor}
        \\
        q_i(\vtheta)
        &= \inner{\nabla q_i(\vthetauinit)}{\vh}
        + \O(\alpha^{-L} \log A) \cdot e^{L\lambda t}\normtwo{\vh}.
        \label{eq:wd-case-q-h-taylor}
    \end{align}
\end{lemma}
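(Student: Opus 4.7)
:}
The plan is to reduce the first identity to a second-order Taylor expansion of $f_i$ around $\vthetauinit$, and then derive the second identity from the first by invoking the $L$-homogeneity of $f_i$. By definition, $\frac{1}{\alpha} e^{\lambda t} \vtheta = \vthetauinit + \vdelta$. Since $\vthetauinit$ satisfies $f_i(\vthetauinit) = 0$ (zero-output initialization), $\cC^2$-smoothness of $f_i$ (\Cref{asmp:homogeneous}) gives the standard Taylor expansion
\begin{equation*}
    f_i(\vthetauinit + \vdelta) = \inner{\nabla f_i(\vthetauinit)}{\vdelta} + \tfrac{1}{2}\vdelta^{\top}\nabla^2 f_i(\xi)\vdelta
\end{equation*}
for some $\xi$ on the segment from $\vthetauinit$ to $\vthetauinit + \vdelta$. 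As long as the time is below $T_{\max}$, we have $\normtwo{\vdelta} \le \epsilon_{\max}$, so $\xi$ stays in a fixed compact neighborhood of $\vthetauinit$ and $\normtwo{\nabla^2 f_i(\xi)}$ is bounded by a constant $M_i$ depending only on the model (independent of $\alpha$, $t$).

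The hinge of the argument is to then invoke \Cref{lm:wd-case-move-bound}, which already gives us the bound $\normtwo{\vdelta} = \O(\alpha^{-L}\log A) \cdot e^{L\lambda t}$ on the whole time window $t \le \frac{1}{\lambda}(\log\alpha - \frac{1}{L}\log\log A + \Delta T)$. Hence the quadratic remainder is bounded by
\begin{equation*}
    \tfrac{1}{2} M_i \normtwo{\vdelta}^2 = \O(\alpha^{-L}\log A) \cdot e^{L\lambda t}\normtwo{\vdelta},
\end{equation*}
which is exactly the stated error term. Taking a max over $i \in [n]$ absorbs the constants $M_i$ into the $\O(\cdot)$ notation, giving \eqref{eq:wd-case-q-delta-taylor}.

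For the second identity \eqref{eq:wd-case-q-h-taylor}, the strategy is to rescale: by $L$-homogeneity,
\begin{equation*}
    f_i(\vtheta) = \bigl(\alpha e^{-\lambda t}\bigr)^L f_i\bigl(\tfrac{1}{\alpha}e^{\lambda t}\vtheta\bigr).
\end{equation*}
Substituting \eqref{eq:wd-case-q-delta-taylor} on the right and recalling $\vh = \alpha^L e^{-L\lambda t}\vdelta$ (so $\normtwo{\vh} = \alpha^L e^{-L\lambda t}\normtwo{\vdelta}$), the linear term becomes $\inner{\nabla f_i(\vthetauinit)}{\vh}$ and the remainder becomes $\O(\alpha^{-L}\log A)\cdot e^{L\lambda t}\normtwo{\vh}$, as claimed. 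There is no real obstacle here beyond carefully tracking the homogeneity scaling; the main content of the lemma is that the norm bound from \Cref{lm:wd-case-move-bound} is small enough (in fact of the same order as the error term appearing in the statement) that the second-order Taylor remainder is dominated by the linear term's error budget.
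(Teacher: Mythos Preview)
The proposal is correct and matches the paper's proof essentially line for line: Taylor-expand $f_i$ at $\vthetauinit$, use $f_i(\vthetauinit)=0$, bound the second-order remainder via the $\normtwo{\vdelta}$ estimate from \Cref{lm:wd-case-move-bound}, and then scale by $(\alpha e^{-\lambda t})^L$ using homogeneity. Your write-up is slightly more explicit about why the Hessian is uniformly bounded (the compact-neighborhood argument), but the substance is identical.
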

\begin{proof}
    By~\Cref{lm:wd-case-move-bound} and Taylor expansion, we have
    \begin{align*}
    q_i(\tfrac{1}{\alpha} e^{\lambda t} \vtheta)
    &= q_i(\vthetauinit)
    + \inner{\nabla q_i(\vthetauinit)}{\vdelta}
    + \O(\normtwo{\vdelta}^2) \\
    &= q_i(\vthetauinit)
    + \inner{\nabla q_i(\vthetauinit)}{\vdelta}
    + \O(\alpha^{-L} \log A) \cdot e^{L\lambda t}\normtwo{\vdelta} \\
    &= \inner{\nabla q_i(\vthetauinit)}{\vdelta}
    + \O(\alpha^{-L} \log A) \cdot e^{L\lambda t}\normtwo{\vdelta},
    \end{align*}
    which proves~\eqref{eq:wd-case-q-delta-taylor}.
    Combining this with the $L$-homogeneity of $f_i$
    proves~\eqref{eq:wd-case-q-h-taylor}.
\end{proof}

In the following two lemmas, we derive lower bounds for the loss convergence that will be used in the implicit bias analysis later.
\begin{lemma} \label{lm:wd-case-loss-lower-bound}
    For all $0 \le t \le T_{\max}$,
    \begin{align}
        \frac{\dd \cL}{\dd t}
        &\ge -\Gntk^2
        \alpha^{2(L-1)} e^{-2(L-1)\lambda t} \cL^2
        + \lambda L \cdot \cL \log \frac{1}{n\cL},
        \label{eq:wd-case-L-descent-lower-a} \\
        \cL(\vtheta(t))
        &\ge \min\left\{ \frac{n}{\LL^{-1}\left(\frac{2}{nL} \Gntk^2
        A e^{-2(L-1)\lambda t}\right)}, \frac{1}{n} \exp(1 - 2(L-1)\lambda) \right\}.
        \label{eq:wd-case-L-descent-lower-b}
    \end{align}
\end{lemma}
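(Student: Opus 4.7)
The plan is to prove the two parts in order, mirroring the proof of \Cref{lm:wd-case-L-descent}. Part~(a) is the direct lower-bound counterpart of \Cref{lm:wd-case-L-descent}(a), obtained by reversing every inequality used there. I would decompose the time derivative along the gradient flow as $\frac{\dd \cL}{\dd t} = V_1 + V_2$ with $V_1 = -\lnormtwo{\nabla \cL(\vtheta)}^2$ and $V_2 = -\lambda \inner{\nabla \cL(\vtheta)}{\vtheta} = \frac{\lambda L}{n}\sum_i r_i \log(1/r_i)$ (using $L$-homogeneity, so $\inner{\nabla f_i(\vtheta)}{\vtheta} = L f_i(\vtheta)$). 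To lower-bound $V_1$ I apply the triangle inequality $\normtwo{\nabla \cL} \le \frac{1}{n}\sum_i r_i \normtwo{\nabla f_i(\vtheta)}$ together with the uniform bound $\normtwo{\nabla f_i(\vtheta)} \le \Gntk \alpha^{L-1} e^{-(L-1)\lambda t}$, which holds for all $t \le T_{\max}$ by $(L-1)$-homogeneity of $\nabla f_i$ and the defining property of $T_{\max}$ (the rescaled iterate $\frac{e^{\lambda t}}{\alpha}\vtheta(t)$ lies within the kernel neighborhood of $\vthetauinit$). To lower-bound $V_2$ I invoke the first inequality of \Cref{lm:L-log-L}. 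Combining these yields~(a).

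Part~(b) is the integrated form of~(a). Dividing~(a) by $\cL^2$ and setting $u := 1/\cL$ produces the differential inequality $\frac{\dd u}{\dd t} \le C_1 e^{-\beta t} - \lambda L\, u \log(u/n)$ with $C_1 := \Gntk^2 \alpha^{2(L-1)}$ and $\beta := 2(L-1)\lambda$. My strategy is to construct a barrier $u^*(t)$ for which the right-hand side is non-positive whenever $u \ge u^*(t)$, and then propagate an upper bound on $u$ by a Gronwall/comparison argument. Setting the right-hand side to zero with a factor-of-two safety margin produces $u^*(t) := n \LL^{-1}\!\bigl(\tfrac{2 C_1}{n \lambda L} e^{-\beta t}\bigr)$, which matches the $\LL^{-1}$ expression in the stated min (note $\Gntk^2 A = C_1/\lambda$). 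At a boundary crossing $u(t_0) = u^*(t_0)$ the inequality gives $\frac{\dd u}{\dd t}|_{t_0} \le -C_1 e^{-\beta t_0}$, while implicit differentiation of the defining relation $(u^*/n)\log(u^*/n) = \tfrac{2 C_1}{n\lambda L} e^{-\beta t}$ yields $\frac{\dd u^*}{\dd t} = -\frac{\beta}{1+\log(u^*/n)}\cdot \frac{2 C_1}{\lambda L} e^{-\beta t}$; the desired comparison $\frac{\dd u}{\dd t} \le \frac{\dd u^*}{\dd t}$ at the crossing then reduces to $1+\log(u^*/n) \ge 4(L-1)/L$, i.e., it holds while $u^*(t)$ stays above a certain constant threshold. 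Since $u(0) = 1 \ll u^*(0)$ for large $\alpha$, the bound propagates forward until $u^*(t)$ first decays past this threshold.

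The main obstacle I expect is the crossover between the two branches of the min. Once $u^*(t)$ drops below the threshold used in the rate comparison, the first term ceases to be binding and the constant term $\frac{1}{n}\exp(1-\beta)$ must take over; I would finish with a short separate argument showing that beyond the crossover $u(t)$ cannot escape upward past $n\exp(\beta-1)$, because for $u > n$ the term $\lambda L u \log(u/n)$ is positive and, for $t$ past the crossover, already dominates $C_1 e^{-\beta t}$, pushing $u$ back down. A minor bookkeeping subtlety is that $u \log(u/n)$ changes sign at $u = n$, so the differential inequality only provides one-sided control when $u > n$; this is consistent with the constant branch of the min being the binding bound exactly in the regime $u^*(t) \searrow n$.
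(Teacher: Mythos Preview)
Your approach is essentially identical to the paper's: part~(a) uses the same $V_1+V_2$ decomposition (triangle inequality on $V_1$, the first half of \Cref{lm:L-log-L} on $V_2$), and part~(b) is the paper's boundary-of-$\cE$ crossing argument rephrased as a direct $u$ vs.\ $u^*$ comparison. Your crossover threshold $1+\log(u^*/n)\ge 4(L-1)/L$ is in fact the correct one---the constant $2(L-1)\lambda$ in the stated second branch comes from a small arithmetic slip in the paper's own derivation---so do not worry if you cannot reproduce that exact constant; only the order of magnitude is used downstream.
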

\begin{proof}
    The proof is similar to that of \Cref{lm:wd-case-L-descent},
    but now we are proving lower bounds.
    By the update rule of $\vtheta$, the following holds for all $t > 0$:
    \begin{align*}
        \frac{\dd \cL}{\dd t} = \inner{\nabla \cL(\vtheta)}{\frac{\dd \vtheta}{\dd t}}
        = \underbrace{-\lnormtwo{\frac{1}{n}\sum_{i=1}^{n} r_i
        \nabla q_i(\vtheta)}^2}_{=: V_1}
        + \underbrace{\frac{\lambda L}{n} \sum_{i=1}^{n} r_i \log \frac{1}{r_i}}
        _{=: V_2}.
    \end{align*}
    For $V_1$, we can lower bound it as follows:
    \begin{align*}
        V_1 &\ge -\lnormtwo{\frac{1}{n}\sum_{i=1}^{n} r_i \nabla q_i(\vtheta)}^2 \\
        &\ge -\alpha^{2(L-1)} e^{-2(L-1)\lambda t}
        \lnormtwo{\frac{1}{n}\sum_{i=1}^{n}
        r_i \nabla q_i(\tfrac{1}{\alpha}e^{\lambda t}\vtheta)}^2 \\
        &\ge -\alpha^{2(L-1)} e^{-2(L-1)\lambda t} \Gntk^2
        \cL^2,
    \end{align*}
    where the last inequality holds
    because $\normtwo{\nabla q_i(\tfrac{1}{\alpha}e^{\lambda t}\vtheta)}
    \le \normtwo{\nabla q_i(\vthetauinit)}
    + \normtwo{
        \nabla q_i(\tfrac{1}{\alpha}e^{\lambda t}\vtheta)
        - \nabla q_i(\vthetauinit)} \le \Gntk$.

    For $V_2$, by \Cref{lm:L-log-L} we have
    $\frac{1}{n}\sum_{i=1}^{n} r_i \log \frac{1}{r_i} \ge \cL \log \frac{1}{n\cL}$.
    So $V_2 \ge \lambda L \cdot \cL \log \frac{1}{n\cL}$.
    Combining the inequalities for $V_1$ and $V_2$ together
    proves~\eqref{eq:wd-case-L-descent-lower-a}.

    For proving~\eqref{eq:wd-case-L-descent-b},
    we first divide by $\cL^2$ on both sides of~\eqref{eq:wd-case-L-descent-lower-a}
    to get
    \begin{align}
        \frac{\dd}{\dd t} \frac{1}{\cL}
        \le \Gntk^2 \alpha^{2(L-1)} e^{-2(L-1)\lambda t}
        - \lambda L \cdot \frac{1}{\cL}
        \log \frac{1}{n\cL}.~\label{eq:wd-case-L-descent-1-over-L}
    \end{align}
    Let $\cE 
    := \{ t \in [0, T_{\max}] :
    \LL(\frac{1}{n\cL}) \le \frac{2}{nL} \Gntk^2 A e^{-2(L-1)\lambda t} \}$.
    It is easy to see that $0 \in \cE$.
    For all $t \in [0, T_{\max}]$, if $t \in \cE$, then
    by~\Cref{lm:LL-alpha-rescale},
    \begin{align*}
        \cL(\vtheta(t))
        \ge \frac{n}{\LL^{-1}\left(\frac{2}{nL} \Gntk^2
        A e^{-2(L-1)\lambda t}\right)}.
    \end{align*}
    If no $t$ is at the boundary of $\cE$, then we are done.
    Otherwise, let $t > 0$ be one on the boundary of $\cE$
    Then at time $t$,
    \begin{align*}
        \frac{\dd}{\dd t} \frac{1}{n\cL}
        =
        - \frac{\lambda L}{2} \cdot \frac{1}{n\cL}
        \log \frac{1}{n\cL}.
    \end{align*}
    So as long as $1 + \log \frac{1}{n\cL(\vtheta(t))} > 2(L-1)\lambda$,
    \begin{align*}
        \frac{\dd}{\dd t} \LL(\frac{1}{n\cL}) &= (1 + \log \frac{1}{n \cL}) \frac{\dd}{\dd t} \frac{1}{n \cL} \\
        &= - (1 + \log \frac{1}{n \cL}) \cdot 
        \frac{\lambda L}{2} \cdot \frac{1}{n\cL} \log \frac{1}{n\cL} \\
        &= - (1 + \log \frac{1}{n \cL}) \cdot \frac{2}{nL} \Gntk^2 A e^{-2(L-1)\lambda t}
        < \left( \frac{2}{nL} \Gntk^2 A e^{-2(L-1)\lambda t} \right)',
    \end{align*}
    which means $t$ is not actually on the boundary.
    So $1 + \log \frac{1}{n\cL(\vtheta(t))} \le 2(L-1)\lambda$.
    Repeating this argument with inequalities and taking some integrals can show that every point outside $\cE$
    satisfies $1 + \log \frac{1}{n\cL(\vtheta(t))} \le 2(L-1)\lambda$.
\end{proof}

\begin{lemma} \label{lm:wd-case-h-bounds}
    For any constant $c \in (0, 1)$,
    letting $\Tcn(\alpha) := \frac{1-c}{\lambda} \log \alpha$, it holds that
    \begin{align}
        \frac{\dd \normtwo{\vh}}{\dd t} 
        &=\alpha^{2(L-1)} e^{-2(L-1)\lambda t}
                \inner
                {\frac{1}{n} \sum_{i=1}^{n} r_i(\vtheta)
                    \nabla q_i(\tfrac{1}{\alpha}e^{\lambda t}\vtheta)}
                {\frac{\vh}{\normtwo{\vh}}}
            - L \lambda \normtwo{\vh},
            \label{eq:wd-case-h-norm} \\
        \frac{\dd}{\dd t} \log \frac{1}{\cL} &\ge
            \left(\gammantk
            + \O(\alpha^{-L} \log A) \cdot e^{L\lambda t}\right)
            \frac{\dd\normtwo{\vh}}{\dd t}.
            \label{eq:wd-case-h-log-loss}
    \end{align}
\end{lemma}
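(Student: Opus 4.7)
For the first identity \eqref{eq:wd-case-h-norm}, the plan is purely computational. Writing $\vh = \alpha^{L-1} e^{-(L-1)\lambda t}\vtheta - \alpha^L e^{-L\lambda t}\vthetauinit$ and differentiating in time, the weight-decay contribution of $-\lambda \vtheta$ inside $\frac{\dd \vtheta}{\dd t}$ combines with the explicit time factors so that the $\vtheta$ and $\vthetauinit$ pieces both collapse into $-L\lambda\vh$. The gradient contribution $-\nabla\cL(\vtheta)$ is rewritten via the $(L-1)$-homogeneity of $\nabla f_i$ as $(\alpha e^{-\lambda t})^{L-1}\vg$ with $\vg := \frac{1}{n}\sum_i r_i\, y_i\nabla f_i(\frac{e^{\lambda t}}{\alpha}\vtheta)$, producing the prefactor $\alpha^{2(L-1)}e^{-2(L-1)\lambda t}$. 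The norm identity then follows from $\frac{\dd\normtwo{\vh}}{\dd t} = \inner{\vh/\normtwo{\vh}}{\frac{\dd \vh}{\dd t}}$.

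For the differential inequality \eqref{eq:wd-case-h-log-loss}, I would argue as follows. First, the chain rule gives $\frac{\dd}{\dd t}\log\frac{1}{\cL} = \frac{\sum_i r_i\, \dot q_i}{\sum_i r_i}$, a weighted average of the per-sample margin rates. Next, I would compute $\dot q_i$ by applying $L$-homogeneity to write $f_i(\vtheta) = \alpha^L e^{-L\lambda t} f_i(\frac{e^{\lambda t}}{\alpha}\vtheta)$ and differentiating; the derivative of $\frac{e^{\lambda t}}{\alpha}\vtheta$ reduces, after cancellation of $\pm\lambda\vtheta$, to $\alpha^{L-2}e^{-(L-2)\lambda t}\vg$, yielding the identity $\dot q_i = -L\lambda q_i + \alpha^{2(L-1)}e^{-2(L-1)\lambda t}\inner{\vg_i}{\vg}$ where $\vg_i := y_i\nabla f_i(\frac{e^{\lambda t}}{\alpha}\vtheta)$.

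Now I would invoke \Cref{lm:wd-case-q-taylor} to replace $q_i$ by $\inner{\vg_i}{\vh}$ plus an error of order $\O(\alpha^{-L}\log A)\cdot e^{L\lambda t}\normtwo{\vh}$. The resulting expression is exactly $\inner{\vg_i}{\dot\vh}$ up to a Taylor-type error scaled by $L\lambda$, using \eqref{eq:wd-case-h-norm}. Averaging over $i$ with weights $r_i$ then gives $\frac{\dd}{\dd t}\log\frac{1}{\cL} = \frac{1}{\cL}\inner{\vg}{\dot\vh} + \text{err}$, where the err term is $\O(\alpha^{-L}\log A)\cdot e^{L\lambda t}\cdot\lambda\normtwo{\vh}$. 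Finally, to extract $\gammantk$, I would exploit the NTK max-margin property: because $\lnormtwo{\nabla f_i(\frac{e^{\lambda t}}{\alpha}\vtheta) - \nabla f_i(\vthetauinit)} = \O(\alpha^{-L}\log A)\cdot e^{L\lambda t}$ inside the kernel regime and $y_i\inner{\nabla f_i(\vthetauinit)}{\vhoptntk}\ge\gammantk$, we get $\inner{\vg_i}{\vhoptntk} \ge \gammantk - \O(\alpha^{-L}\log A)\cdot e^{L\lambda t}$, and hence $\normtwo{\vg}/\cL \ge \gammantk - \text{err}$. Substituting the explicit form of $\dot\vh$ and pairing the two terms (the $\normtwo{\vg}^2/\cL$ term against the $\inner{\vg}{\vh}/\normtwo{\vh}$ term, and the $\inner{\vg}{\vh}/\cL$ term against $\normtwo{\vh}$ via Cauchy--Schwarz), one obtains the claimed lower bound.

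The hard part is step (e). Bounding $\frac{1}{\cL}\inner{\vg}{\dot\vh}$ from below by $\gammantk \dot\normtwo{\vh}$ requires carefully exploiting that both $\dot\vh$ and $\vg$ have a dominant component along the max-margin direction $\vhoptntk$, since $\vh$ itself is \emph{not} a priori aligned with $\vhoptntk$; the inequality comes out of the interplay between the Cauchy--Schwarz bound $\inner{\vg}{\vh}/\normtwo{\vh} \le \normtwo{\vg}$ and the NTK margin bound $\normtwo{\vg}\ge\gammantk\cL$, with the error terms tracked in big-$\O$ of $\alpha^{-L}\log A\cdot e^{L\lambda t}$ to absorb the Taylor remainders and the gap between $\vg_i$ and $\vg_i^0 := y_i\nabla f_i(\vthetauinit)$. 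Together these ingredients give the advertised inequality; the bookkeeping of error terms is routine but the direction-alignment argument is the real substance.
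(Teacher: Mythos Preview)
Your approach is essentially the same as the paper's: both derive \eqref{eq:wd-case-h-norm} by direct computation of $\dot{\vh}$, and both prove \eqref{eq:wd-case-h-log-loss} by expanding $\frac{\dd}{\dd t}\log\frac{1}{\cL}$ and pairing its two pieces against those of $\frac{\dd\normtwo{\vh}}{\dd t}$ using exactly the double-projection trick you describe (project $\vg$ onto $\vhoptntk$ to extract $\gammantk$, and onto $\vh/\normtwo{\vh}$ via Cauchy--Schwarz) together with \Cref{lm:wd-case-q-taylor}. The only cosmetic difference is that the paper organizes the computation as $V_1 - V_2 := \frac{1}{\cL}\normtwo{\nabla\cL}^2 - \frac{\lambda}{\cL}\inne{\vtheta}{-\nabla\cL}$ rather than averaging per-sample rates $\dot q_i$, and it does not invoke any separate direction-alignment argument for your ``hard step (e)''---the two projection bounds alone are what it uses.
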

\begin{proof}
    By update rule,
    \begin{align*}
        \frac{\dd \vh}{\dd t}
        &= \alpha^L e^{-L\lambda t} \frac{\dd \vdelta}{\dd t}
        - L \lambda \alpha^L e^{-L\lambda t} \vdelta \\
        &= \alpha^L e^{-L\lambda t}\left(
                \frac{e^{\lambda t}}{\alpha}
                \frac{1}{n} \sum_{i=1}^{n} r_i(\vtheta(t))
                \nabla q_i(\vtheta(t))
            \right)
            - L \lambda \vh \\
        &= \alpha^{2(L-1)} e^{-2(L-1)\lambda t} \frac{1}{n} \sum_{i=1}^{n} r_i(\vtheta(t))
                \nabla q_i(\tfrac{1}{\alpha}e^{\lambda t}\vtheta(t))
            - L \lambda \vh.
    \end{align*}
    Then we can deduce \eqref{eq:wd-case-h-norm}
    from $\frac{\dd}{\dd t} \normtwo{\vh} =
    \inner{\frac{\dd \vh}{\dd t}}{\frac{\vh}{\normtwo{\vh}}}$.

    For proving~\eqref{eq:wd-case-h-log-loss},
    first, we apply the chain rule to get
    \begin{align*}
        \frac{\dd}{\dd t} \log \frac{1}{\cL}
        = \frac{1}{\cL(\vtheta)} \inner{-\nabla \cL(\vtheta)
                - \lambda \vtheta
            }{-\nabla \cL(\vtheta)}
        = \underbrace{\frac{1}{\cL(\vtheta)} \normtwo{\nabla \cL(\vtheta)}^2}_{=: V_1}
        - \underbrace{\frac{\lambda}{\cL(\vtheta)}
        \inner{\vtheta}{-\nabla \cL(\vtheta)}}_{=: V_2}.
    \end{align*}
    For $V_1$, note that
    \begin{align*}
        \normtwo{\nabla \cL(\vtheta)}
        = \lnormtwo{\frac{1}{n}\sum_{i=1}^{n} r_i(\vtheta) \nabla q_i(\vtheta)}
        = \alpha^{L-1} e^{- (L-1) \lambda t}
            \underbrace{\lnormtwo{
                \frac{1}{n}\sum_{i=1}^{n} r_i(\vtheta)
                \nabla q_i(\tfrac{1}{\alpha}e^{\lambda t}\vtheta)}}_{=: V_3}.
    \end{align*}
    By~\Cref{lm:wd-case-move-bound},
    $\lnormtwo{\vdelta(t)}
        = \lO{\alpha^{-L} \log A} \cdot e^{L\lambda t}$,
    so $
    \inner{\nabla q_i(\frac{1}{\alpha} e^{\lambda t} \vtheta)}
                {\vhoptntk}
    = \inner{\nabla q_i(\vthetauinit)}
                {\vhoptntk} + \O(\alpha^{-L} \log A) \cdot e^{L\lambda t}$.
    One way to give a lower bound for $V_3$ is to
    consider the projection of
    $\frac{1}{n}\sum_{i=1}^{n} r_i(\vtheta)
                \nabla q_i(\tfrac{1}{\alpha}e^{\lambda t}\vtheta)$
    along $\vhoptntk$:
    \begin{align*}
        V_3 &\ge \inner
            {\frac{1}{n}\sum_{i=1}^{n} r_i(\vtheta)
                \nabla q_i(\tfrac{1}{\alpha}e^{\lambda t}\vtheta)}
            {\vhoptntk} \\
            &= \frac{1}{n}\sum_{i=1}^{n} r_i(\vtheta)
                \left(\inner{\nabla q_i(\vthetauinit)}
                {\vhoptntk} + \O(\alpha^{-L} \log A) \cdot e^{L\lambda t}\right) \\
            &\ge \cL(\vtheta) \cdot \left(
                \gammantk
                + \O(\alpha^{-L} \log A) \cdot e^{L\lambda t}\right) > 0.
    \end{align*}
    Another way to give a lower bound for $V_3$ is to
    consider the projection of
    $\frac{1}{n}\sum_{i=1}^{n} r_i(\vtheta)
                \nabla q_i(\tfrac{1}{\alpha}e^{\lambda t}\vtheta)$
    along $\vh$:
    \begin{align*}
        V_3 &\ge \inner
            {\frac{1}{n}\sum_{i=1}^{n} r_i(\vtheta)
                \nabla q_i(\tfrac{1}{\alpha}e^{\lambda t}\vtheta)}
            {\tfrac{\vh}{\normtwo{\vh}}}.
    \end{align*}
    Putting these two lower bounds together gives:
    \begin{align*}
        V_1
        &= \alpha^{2(L-1)} e^{-2(L-1) \lambda t} \frac{V_3^2}{\cL(\vtheta)} \\
        &\ge \left(
                \gammantk
                + \O(\alpha^{-L} \log A) \cdot e^{L\lambda t}\right)
                \alpha^{2(L-1)} e^{-2(L-1) \lambda t}
                \inner
                    {\frac{1}{n}\sum_{i=1}^{n} r_i(\vtheta)
                    \nabla q_i(\tfrac{1}{\alpha}e^{\lambda t}\vtheta)}
                    {\tfrac{\vh}{\normtwo{\vh}}}
                .
    \end{align*}
    For $V_2$, by~\Cref{lm:wd-case-q-taylor}, we have
    \begin{align*}
        V_2 &= \frac{\lambda}{\cL(\vtheta)}
        \cdot \frac{1}{n} \sum_{i=1}^{n} r_i(\vtheta)
        \inner{\vtheta}{\nabla q_i(\vtheta)} \\
        &= L\lambda \cdot \frac{1}{n \cL(\vtheta)}
        \sum_{i=1}^{n} r_i(\vtheta) q_i(\vtheta) \\
        &= L\lambda \cdot \frac{1}{n \cL(\vtheta)}
        \sum_{i=1}^{n} r_i(\vtheta)
        \left(
            \inner{\nabla q_i(\vthetauinit)}{\vh}
            + \O(\alpha^{-L} \log A) \cdot e^{L\lambda t} \normtwo{\vh}
        \right) \\
        &\le L\lambda \normtwo{\vh} (\gammantk
            + \O(\alpha^{-L} \log A) \cdot e^{L\lambda t}).
    \end{align*}
    Putting the bounds for $V_1$ and $V_2$ together proves~\eqref{eq:wd-case-h-log-loss}.
\end{proof}

Now we are ready to prove the directional convergence of $\vh(\Tcn(\alpha); \alpha \vthetauinit)$ to $\vhoptntk$.
\begin{theorem} \label{thm:wd-case-h-opt}
    For any constant $c \in (0, 1)$, letting $\Tcn(\alpha) := \frac{1-c}{\lambda} \log \alpha$, it holds that
    \begin{align*}
        \lim_{\alpha \to +\infty}\frac{\vh(\Tcn(\alpha); \alpha \vthetauinit)}
        {\frac{1}{\gammantk} \log \frac{\alpha^c}{\lambda}}
        = \vhoptntk.
    \end{align*}
\end{theorem}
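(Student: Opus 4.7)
The plan is to establish the conclusion in two complementary stages: matching the magnitude $\normtwo{\vh(\Tcn(\alpha); \alpha \vthetauinit)}$ to $Z(\alpha) := \frac{1}{\gammantk}\log(\alpha^c/\lambda)$, and matching the direction $\vh(\Tcn(\alpha); \alpha \vthetauinit)/\normtwo{\vh(\Tcn(\alpha); \alpha \vthetauinit)}$ to $\vhoptntk$. The first stage will give an \emph{upper bound} on $\normtwo{\vh(\Tcn(\alpha); \alpha \vthetauinit)}$ by integrating the differential inequality of \Cref{lm:wd-case-h-bounds}, while the second stage will produce a \emph{matching lower bound} together with directional alignment by combining the exponential-loss tail inequality with the min-margin on the NTK features.

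For the upper bound, I would integrate $\frac{\dd}{\dd t}\log\frac{1}{\cL} \ge \bigl(\gammantk + \O(\alpha^{-L}\log A)\cdot e^{L\lambda t}\bigr)\frac{\dd \normtwo{\vh}}{\dd t}$ from $t = 0$ to $t = \Tcn(\alpha)$, using $\vh(0) = 0$ and $\cL(0) = 1$; the $\O$-error stays $o(1)$ uniformly on $[0,\Tcn(\alpha)]$ since $\alpha^{-L}\log A \cdot e^{L\lambda t} \le \alpha^{-Lc}\log A = o(1)$ there. Then I would compute the asymptotic value of $\log(1/\cL(\vtheta(\Tcn(\alpha))))$ by sandwiching $\cL(\vtheta(\Tcn(\alpha)))$ between the upper bound of \Cref{lm:wd-case-L-descent} and the lower bound of \Cref{lm:wd-case-loss-lower-bound}---both of which evaluate at $\Tcn(\alpha)$ via $\LL^{-1}$ applied to arguments of order $A\,e^{-2(L-1)\lambda \Tcn(\alpha)}$---so as to identify $\log(1/\cL(\vtheta(\Tcn(\alpha)))) = \gammantk Z(\alpha)(1 + o(1))$, yielding $\normtwo{\vh(\Tcn(\alpha); \alpha \vthetauinit)} \le Z(\alpha)(1 + o(1))$.

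For the lower bound and the direction, I would use the crude inequality $\cL \ge \frac{1}{n}\exp(-\min_i y_i f_i(\vtheta))$ together with the Taylor expansion $f_i(\vtheta) = \inner{\nabla f_i(\vthetauinit)}{\vh} + \O(\alpha^{-L}\log A)\cdot e^{L\lambda t}\normtwo{\vh}$ from \Cref{lm:wd-case-q-taylor} to conclude that $\log(1/\cL) \le \gamma(\hat\vh)\normtwo{\vh} + o(\normtwo{\vh}) + \log n$, where $\hat\vh := \vh/\normtwo{\vh}$ and $\gamma(\hat\vh) := \min_i y_i\inner{\nabla f_i(\vthetauinit)}{\hat\vh}$. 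Since $\gamma(\hat\vh) \le \gammantk$ with equality only at $\hat\vh = \vhoptntk$ by uniqueness of the max-margin direction in the strictly convex problem \eqref{eq:ntk-margin-problem}, combining with the upper bound forces $\gamma(\hat\vh(\Tcn(\alpha))) \to \gammantk$ and hence $\hat\vh(\Tcn(\alpha)) \to \vhoptntk$, and simultaneously $\normtwo{\vh(\Tcn(\alpha); \alpha \vthetauinit)} \to Z(\alpha)$.

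The hardest part will be extracting the sharp leading-order asymptotics of $\log(1/\cL(\vtheta(\Tcn(\alpha))))$ in the sandwich step, which requires tracking the dominant balance between the loss-gradient term $\tfrac{\gammantk^2}{4}\alpha^{2(L-1)}e^{-2(L-1)\lambda t}\cL^2$ and the weight-decay-driven term $\lambda L\,\cL\log(1/\cL)$ through $\LL^{-1}$, and verifying that the $\log\log$-level corrections coming from $\LL^{-1}(y) \sim y/\log y$ do not pollute the leading constant. A bootstrap will close the loop: the Taylor-expansion error in the lower-bound step is only negligible once the upper-bound step furnishes the a priori estimate $\normtwo{\vh(\Tcn(\alpha); \alpha \vthetauinit)} = \O(\log\alpha)$, after which the matched bounds combine to yield the claimed tight asymptotic $Z(\alpha)$.
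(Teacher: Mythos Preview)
Your proposal is correct and follows essentially the same approach as the paper: integrate the differential inequality of \Cref{lm:wd-case-h-bounds} to bound $\normtwo{\vh}$ above by $(1+o(1))\frac{1}{\gammantk}\log\frac{1}{\cL}$, pin down $\log\frac{1}{\cL(\vtheta(\Tcn(\alpha)))}$ asymptotically via the sandwich between \Cref{lm:wd-case-L-descent} and \Cref{lm:wd-case-loss-lower-bound}, and then use the tail bound $\cL \ge \tfrac{1}{n}\exp(-\min_i q_i)$ together with \Cref{lm:wd-case-q-taylor} to squeeze the NTK margin of $\vh/\normtwo{\vh}$ to $\gammantk$ and invoke uniqueness. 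The only cosmetic difference is that the paper first notes $\normtwo{\vh(\Tcn)} \to \infty$ (from $\log(1/\cL)\to\infty$ and $\log(1/\cL)=\O(\normtwo{\vh})$) before running the squeeze, whereas you fold this into a ``bootstrap'' remark; also, the relative Taylor error $\O(\alpha^{-cL}\log A)$ is already $o(1)$ independently of $\normtwo{\vh}$, so the only place a size estimate on $\normtwo{\vh}$ is genuinely needed is to kill the additive $\log n$.
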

\begin{proof}
    Integrating~\eqref{eq:wd-case-h-log-loss} in~\Cref{lm:wd-case-h-bounds} over $t \in [0, \Tcn(\alpha)]$
    gives
    \begin{align*}
        \log \frac{1}{\cL(\vtheta(\Tcn(\alpha)))}
        \ge 
            \left(\gammantk
            + \O(\alpha^{-cL} \log A)\right) \normtwo{\vh(\Tcn(\alpha))}.
    \end{align*}
    By~\eqref{eq:wd-case-L-descent-b} in~\Cref{lm:wd-case-L-descent}
    and~\eqref{eq:wd-case-L-descent-lower-b} in~\Cref{lm:wd-case-loss-lower-bound},
    we have
    \begin{align*}
    \log \frac{1}{\cL(\vtheta(\Tcn(\alpha)))}
    &= \log(\LL^{-1}(\Theta(A)) - 1) - 2(L-1)\lambda \Tcn(\alpha)  \\
    &= \log A - 2(L-1) (1-c) \log \alpha + O(\log \log A) \\
    &= \log \tfrac{\alpha^c}{\lambda} + O(\log \log A) \to +\infty.
    \end{align*}
    By~\Cref{lm:wd-case-q-taylor},
    $\log \frac{1}{\cL(\vtheta(\Tcn(\alpha)))} \le O(\normtwo{\vh(\Tcn(\alpha))})$,
    so we also have $\normtwo{\vh(\Tcn(\alpha))} \to +\infty$.

    Then
    \begin{align*}
        \gammantk \ge \frac{\min_{i \in [n]}\inner{\nabla f_i(\vthetauinit)}{\vh}}{\normtwo{\vh}}
        &\ge
        \frac{\min_{i \in [n]} \{ q_i(\vtheta) \}}{\normtwo{\vh}}
        + \O(\alpha^{-L} \log A) \cdot e^{L\lambda \Tcn(\alpha)} \\
        &\ge
        \frac{\log\left(\frac{1}{n}\sum_{i=1}^{n} \exp(-q_i(\vtheta))\right)}{\normtwo{\vh}}
        + \O(\alpha^{-cL} \log A) \\
        &\ge
        \frac{\log \frac{1}{\cL(\vtheta)}}{\normtwo{\vh}}
        + \O(\alpha^{-cL} \log A) \to \gammantk.
    \end{align*}
    Therefore, we have 
    $\min_{i \in [n]}
    \inner{\nabla q_i(\vthetauinit)}
    {\frac{\vh(\Tcn(\alpha); \alpha \vthetauinit)}
    {\normtwo{\vh(\Tcn(\alpha); \alpha \vthetauinit)}}} \to \gammantk$.
    By the uniqueness of the max-margin solution of~\eqref{eq:ntk-margin-problem},
    it must hold that
    $\frac{\vh(\Tcn(\alpha); \alpha \vthetauinit)}
    {\normtwo{\vh(\Tcn(\alpha); \alpha \vthetauinit)}} \to \vhoptntk$
    as $\alpha \to +\infty$.

    Then, we can approximate the norm of $h$ as
    \begin{align*}
        \normtwo{\vh} = \frac{\log \frac{1}{\cL}}{\gammantk + o(1)}
        = \frac{\log \frac{\alpha^c}{\lambda} + O(\log \log A)}{\gammantk + o(1)} =
        \frac{1}{\gammantk} \log \frac{\alpha^c}{\lambda} \left(1 + o(1)\right).
    \end{align*}
    So $\vh$ can be approximated by
    \begin{align*}
        \vh = \normtwo{\vh} \frac{\vh}{\normtwo{\vh}}
        = (1+o(1)) \frac{1}{\gammantk} \log \frac{\alpha^c}{\lambda} \vhoptntk,
    \end{align*}
    which completes the proof.
\end{proof}

Now we are ready to prove \Cref{thm:l2-cls-kernel}.
\begin{proof}
    For any input $\vx$, 
    by a simple extension of~\Cref{lm:wd-case-q-taylor} to $f(\cdot; \vx)$ we have
    \begin{align*}
        f(\vtheta; \vx) = \inner{\nabla f(\vthetauinit; \vx)}{\vh} 
        + \O(\alpha^{-L} \log A) \cdot e^{L\lambda t}\normtwo{\vh}.
    \end{align*}
    Normalizing by $Z(\alpha)$ and taking limits, we have
    \begin{align*}
        \frac{f(\vtheta; \vx)}{Z(\alpha)}
        = \inner{\nabla f(\vthetauinit; \vx)}{\frac{\vh}{Z(\alpha)}} 
        + \O(\alpha^{-L} \log A) \cdot e^{L\lambda t} \cdot \frac{\normtwo{\vh}}{Z(\alpha)}
        \to \inner{\nabla f(\vthetauinit; \vx)}{\vhoptntk},
    \end{align*}
    which follows from \Cref{thm:wd-case-h-opt}.
\end{proof}

\subsection{Rich Regime} \label{sec:l2-cls-rich-proof}

Now we proceed to prove \Cref{thm:l2-cls-rich} for the rich regime.
First, we derive the following bound for the training loss at the end of the kernel regime.
\begin{lemma} \label{lm:l2-cls-rich-loss}
    At $t = \frac{1}{\lambda}(\log \alpha - \frac{1}{L}\log \log A + \Delta T)$, we have
    $\cL_{\lambda}(\vtheta(t)) = \cO(\lambda(\log \frac{1}{\lambda})^{2/L})$.
\end{lemma}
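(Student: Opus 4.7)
The plan is to split $\cL_{\lambda}(\vtheta(t)) = \cL(\vtheta(t)) + \tfrac{\lambda}{2}\|\vtheta(t)\|_2^2$ and bound each piece using the two lemmas already proved for the kernel regime (\Cref{lm:wd-case-L-descent} and \Cref{lm:wd-case-move-bound}). The evaluation time is small enough to still be inside the kernel regime (by \Cref{lm:wd-case-move-bound}, $T_{\max} \ge t$), so both lemmas apply. Writing $t_\star := \tfrac{1}{\lambda}(\log \alpha - \tfrac{1}{L}\log\log A + \Delta T)$, the key observation is that $e^{\lambda t_\star} = \alpha (\log A)^{-1/L} e^{\Delta T}$, so $\alpha e^{-\lambda t_\star} = (\log A)^{1/L} e^{-\Delta T} = \Theta((\log A)^{1/L})$ and $e^{2(L-1)\lambda t_\star} = \Theta(\alpha^{2(L-1)}(\log A)^{-2(L-1)/L})$.

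For the regularization term, \Cref{lm:wd-case-move-bound} gives $\|\tfrac{e^{\lambda t_\star}}{\alpha}\vtheta(t_\star) - \vthetauinit\|_2 \le \epsilon_{\max}$, hence $\|\vtheta(t_\star)\|_2 \le \alpha e^{-\lambda t_\star}(\|\vthetauinit\|_2 + \epsilon_{\max}) = \O((\log A)^{1/L})$. Squaring and multiplying by $\lambda/2$ yields $\tfrac{\lambda}{2}\|\vtheta(t_\star)\|_2^2 = \O(\lambda (\log A)^{2/L})$. For the data term, I plug $t_\star$ into the second branch of \eqref{eq:wd-case-L-descent-b}: using $\LL^{-1}(\tfrac{\gammantk^2}{8L}A) \ge \Theta(A/\log A)$ from \Cref{lm:LL-inverse-lb}, the second branch is bounded by $\O(\lambda (\log A)^{(2-L)/L})$, and the first branch is $\O(1/A) = \O(\lambda \alpha^{-2(L-1)})$ since $e^{-2(L-1)\lambda t_\star} \to 0$. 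Both are dominated by $\lambda (\log A)^{2/L}$ for $L \ge 1$, so $\cL(\vtheta(t_\star)) = \O(\lambda (\log A)^{2/L})$.

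Adding the two gives $\cL_\lambda(\vtheta(t_\star)) = \O(\lambda (\log A)^{2/L})$. To replace $\log A$ by $\log\tfrac{1}{\lambda}$, I invoke the standing assumption $\lambda = \Theta(\alpha^{-p})$ with $p = \Theta(1)$, from which $\log \alpha = \Theta(\log\tfrac{1}{\lambda})$ and $\log A = 2(L-1)\log\alpha + \log\tfrac{1}{\lambda} = \Theta(\log\tfrac{1}{\lambda})$. Substituting back gives $\cL_\lambda(\vtheta(t_\star)) = \O(\lambda (\log\tfrac{1}{\lambda})^{2/L})$, as claimed.

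The main subtlety is reconciling the fact that the regularizer term and the loss term have different scalings. The weight decay term is provably of order $\lambda (\log A)^{2/L}$ and is generically the dominant one, while the loss term is much smaller (for $L \ge 2$ it is at most $\O(\lambda)$). The computation is otherwise a direct substitution; no new dynamical argument is needed since the two supporting lemmas already control everything at the chosen time. The only thing to be careful about is confirming that $t_\star$ is within the interval of applicability of \Cref{lm:wd-case-move-bound} (this is immediate by construction of $\Delta T$) and that the first branch of the loss bound is indeed negligible once $\alpha$ is large enough to make $e^{-2(L-1)\lambda t_\star}$ bounded away from $1$.
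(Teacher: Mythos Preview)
Your proposal is correct and follows essentially the same approach as the paper: split $\cL_\lambda = \cL + \tfrac{\lambda}{2}\|\vtheta\|_2^2$, bound the norm via \Cref{lm:wd-case-move-bound} to get $\|\vtheta(t_\star)\|_2 = \cO((\log A)^{1/L})$, bound the data loss via the second branch of \eqref{eq:wd-case-L-descent-b} to get $\cL(\vtheta(t_\star)) = \cO(\lambda(\log A)^{(2-L)/L})$, and finish by noting $\log A = \Theta(\log\tfrac{1}{\lambda})$. The paper's proof is a terser version of exactly this computation; your additional care in checking both branches of the $\max$ and invoking \Cref{lm:LL-inverse-lb} explicitly is fine but not strictly needed.
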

\begin{proof}
    By~\Cref{lm:wd-case-L-descent}, we can easily obtain the bound
    \begin{align*}
        \cL(\vtheta(t)) = \cO(\tfrac{\log A}{A} \exp(2(L-1)(\log \alpha - \tfrac{1}{L}\log \log A))) = \cO(\lambda (\log A)^{-1+2/L}).
    \end{align*}
    Also we have the norm bound $\normtwo{\vtheta(t)} = \cO(\alpha e^{-\lambda t}) = \cO(\log A)^{1/L}$ since it is in the kernel regime.
    We can conclude the proof by noting that $\cL_{\lambda} = \cL + \normtwo{\vtheta}^2$ and $\log A = O(\log \frac{1}{\lambda})$.
\end{proof}

The loss bound then implies the following norm bound.
\begin{lemma} \label{lm:wd-case-rich-param-norm}
    If $\cL_{\lambda}(\vtheta(0)) = \O(\lambda (\log \frac{1}{\lambda})^{2/L})$,
    then for all $t \ge 0$,
    \begin{align*}
    \normtwo{\vtheta(t)} = \Theta((\log \tfrac{1}{\lambda})^{1/L}).
    \end{align*}
\end{lemma}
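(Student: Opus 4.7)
Both directions of the $\Theta((\log \tfrac{1}{\lambda})^{1/L})$ estimate come from the same source, namely the monotonicity of the regularized loss along the gradient flow $\dot{\vtheta} = -\nabla \cL_{\lambda}(\vtheta)$. Since $\frac{\dd}{\dd t}\cL_{\lambda}(\vtheta(t)) = -\normtwo{\nabla \cL_{\lambda}(\vtheta(t))}^2 \le 0$, for every $t \ge 0$ one has
\begin{align*}
    \cL_{\lambda}(\vtheta(t)) \le \cL_{\lambda}(\vtheta(0)) = \O\bigl(\lambda (\log \tfrac{1}{\lambda})^{2/L}\bigr).
\end{align*}

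The upper bound on $\normtwo{\vtheta(t)}$ is then immediate: dropping the nonnegative $\cL$-term gives $\frac{\lambda}{2}\normtwo{\vtheta(t)}^2 \le \cL_{\lambda}(\vtheta(t))$, so $\normtwo{\vtheta(t)} \le \sqrt{2 \cL_{\lambda}(\vtheta(0))/\lambda} = \O((\log \tfrac{1}{\lambda})^{1/L})$, and this half requires no further work.

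For the matching lower bound I plan to couple the smallness of $\cL(\vtheta(t))$ with the $L$-homogeneity of the predictor. Because the exponential loss satisfies $\cL(\vtheta) \ge \frac{1}{n}\exp(-\min_{i} y_i f_i(\vtheta))$, I obtain $\min_i y_i f_i(\vtheta) \ge \log \frac{1}{n \cL(\vtheta)}$. On the other hand, $L$-homogeneity gives $\min_i y_i f_i(\vtheta) = \normtwo{\vtheta}^L \min_i y_i f_i(\vtheta/\normtwo{\vtheta}) \le \gamma_{\max}\normtwo{\vtheta}^L$, where $\gamma_{\max} := \sup_{\normtwo{\vu}=1}\min_i y_i f_i(\vu)$ depends only on the data and the architecture. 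Chaining these two bounds and using $\cL(\vtheta(t)) \le \cL_{\lambda}(\vtheta(t)) = \O(\lambda (\log \tfrac{1}{\lambda})^{2/L})$ gives
\begin{align*}
    \normtwo{\vtheta(t)}^L \ge \frac{1}{\gamma_{\max}}\log \frac{1}{n\,\cL(\vtheta(t))} \ge \frac{1}{\gamma_{\max}}\left(\log \tfrac{1}{\lambda} - \tfrac{2}{L}\log\log\tfrac{1}{\lambda} - \log n - \O(1)\right) = \Omega(\log \tfrac{1}{\lambda})
\end{align*}
for all sufficiently small $\lambda$, which is the required $\Omega((\log \tfrac{1}{\lambda})^{1/L})$ estimate.

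The main obstacle is justifying that $\gamma_{\max} > 0$, i.e., that some unit-norm parameter strictly separates the training data under the nonlinear $f$. Finiteness of $\gamma_{\max}$ is automatic from continuity of the $f_i$'s and compactness of the unit sphere. Positivity is inherited from the kernel-regime analysis of \Cref{sec:l2-cls-kernel-proof}: \Cref{thm:wd-case-h-opt} already produces a parameter achieving strictly positive margin on the training set, and by $L$-homogeneity this parameter can be rescaled to unit norm while keeping the margin strictly positive, so $\gamma_{\max} > 0$ depends only on the data and architecture and is independent of $\lambda$ and $\alpha$. With $\gamma_{\max}$ treated as a constant, the bounds above close and yield the claim.
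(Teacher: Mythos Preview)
Your proposal is correct and follows essentially the same approach as the paper: both halves rest on the monotonicity of $\cL_{\lambda}$ along the flow, the upper bound comes from dropping $\cL$ in $\cL_{\lambda}$, and the lower bound comes from turning the small exponential loss into a large output margin and then controlling that margin by $\normtwo{\vtheta}^L$ via $L$-homogeneity.

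The one difference worth noting is your choice of constant in the last step. You upper-bound $\min_i y_i f_i(\vtheta/\normtwo{\vtheta})$ by the max-margin $\gamma_{\max}$ and then work to justify $\gamma_{\max}>0$ by appealing to the kernel-regime analysis. The paper instead uses the cruder bound $y_i f_i(\vtheta) \le B_0\normtwo{\vtheta}^L$ with $B_0:=\sup_{\normtwo{\vu}=1,\,i}|f_i(\vu)|$, which is trivially finite by continuity and compactness and trivially positive for any non-degenerate model; no separate positivity argument is needed. Your detour is valid (indeed, the hypothesis $\cL_{\lambda}(\vtheta(0))<1/n$ alone already forces all $y_i f_i(\vtheta(0))>0$, so $\vtheta(0)/\normtwo{\vtheta(0)}$ witnesses $\gamma_{\max}>0$ without invoking the kernel regime at all), but it is unnecessary: since you only need \emph{some} finite positive constant in front of $\normtwo{\vtheta}^L$, the coarser $B_0$ suffices and keeps the lemma self-contained.
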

\begin{proof}
    For all $t \ge 0$,
    the following relationship can be deduced from
    the monotonicity of $\cL_{\lambda}$,
    $r_i(\vtheta(t)) \le n \cL_{\lambda}(\vtheta(t))$,
    and the Lipschitzness of $f_i$:
    \begin{align*}
        \Omega(\log(1/\lambda))
        \le \log \frac{1}{n\cL_{\lambda}(\vtheta(0))}
        \le \log \frac{1}{n\cL_{\lambda}(\vtheta(t))}
        \le q_i(\vtheta(t))
        \le B_0 \normtwo{\vtheta(t)}^L.
    \end{align*}
    So $\normtwo{\vtheta(t)} = \Omega((\log \frac{1}{\lambda})^{1/L})$.
    Also note $\normtwo{\vtheta(t)}^2
    \le \frac{2}{\lambda} \cL_{\lambda}(\vtheta(t))
    \le \frac{2}{\lambda} \cL_{\lambda}(\vtheta(0))
    = \O((\log \frac{1}{\lambda})^{1/L})$.
    So we have the tight norm bound
    $\normtwo{\vtheta(t)} = \Theta((\log \frac{1}{\lambda})^{1/L})$.
\end{proof}

To prove \Cref{thm:l2-cls-rich}, it suffices to prove the following.
\begin{theorem} \label{thm:l2-cls-rich-alt}
    For any starting point $\vtheta_0(\alpha)$ satisfying
    $\cL_{\lambda}(\vtheta_0(\alpha)) = O(\lambda (\log \frac{1}{\lambda})^{2/L})$
    and a time function $T(\alpha) = \Omega(\frac{1}{\lambda} \log \frac{1}{\lambda})$,
    given any sequence $\{\alpha_k\}$ with $\alpha_k \to +\infty$,
    there exists a sequence $\{t_k\}_{k \ge 1}$
    such that $t_k \le T(\alpha)$, and
    every limit point of
    $\{\frac
        {\vtheta(t_k; \vtheta_0(\alpha_k))}
        {\normtwo{\vtheta(t_k; \vtheta_0(\alpha_k))}} : k \ge 0\}$
    is along the direction of a KKT point of \eqref{eq:l2-margin-problem}.
\end{theorem}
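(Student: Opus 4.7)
The plan is to pick a time $t_k \in [0, T(\alpha_k)]$ at which gradient flow is nearly stationary for $\cL_{\lambda}$ (in the sense that $\normtwo{\nabla \cL_{\lambda}}$ is much smaller than $\lambda$), and then convert this into an approximate KKT condition for~\eqref{eq:l2-margin-problem} via the $L$-homogeneity of $f_i$ together with the exponential concentration of the sample weights $r_i$ on the minimum-margin examples.

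Since $\cL_{\lambda}$ is nonnegative and non-increasing along the flow,
\begin{equation*}
    \int_0^{T(\alpha_k)} \normtwo{\nabla \cL_{\lambda}(\vtheta(t))}^2 \, dt \le \cL_{\lambda}(\vtheta_0(\alpha_k)) = \lO{\lambda (\log \tfrac{1}{\lambda})^{2/L}},
\end{equation*}
so combining with $T(\alpha_k) = \Omega(\tfrac{1}{\lambda}\log \tfrac{1}{\lambda})$ and averaging, some $t_k \in [0, T(\alpha_k)]$ satisfies $\normtwo{\nabla \cL_{\lambda}(\vtheta(t_k))} = o(\lambda)$. By \Cref{lm:wd-case-rich-param-norm}, $s_k := \normtwo{\vtheta(t_k)} = \Theta((\log 1/\lambda)^{1/L}) \to \infty$, and the monotonicity bound $r_i(\vtheta(t_k)) \le n \cL_{\lambda}(\vtheta_0(\alpha_k))$ yields $y_i f_i(\vtheta(t_k)) = \log(1/r_i) = \Omega(\log 1/\lambda)$, hence $y_i f_i(\hat\vtheta_k) = s_k^{-L} y_i f_i(\vtheta(t_k)) = \Theta(1)$ where $\hat\vtheta_k := \vtheta(t_k)/s_k$.

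Next I apply the $(L{-}1)$-homogeneity $\nabla f_i(\vtheta) = \normtwo{\vtheta}^{L-1} \nabla f_i(\hat\vtheta)$ and divide the near-stationarity relation $\lambda \vtheta(t_k) + \nabla \cL(\vtheta(t_k)) = o(\lambda)$ by $\lambda s_k$ to obtain
\begin{equation*}
    \hat\vtheta_k = \sum_{i=1}^n \mu_i^{(k)} y_i \nabla f_i(\hat\vtheta_k) + o(1), \qquad \mu_i^{(k)} := \frac{s_k^{L-2} r_i(\vtheta(t_k))}{\lambda n} \ge 0.
\end{equation*}
Choosing $c_k = \Theta(1)$ so that $\min_i y_i f_i(c_k \hat\vtheta_k) = 1$, homogeneity gives the standard KKT-stationarity form at $\tilde\vtheta_k := c_k \hat\vtheta_k$:
\begin{equation*}
    \tilde\vtheta_k = \sum_{i=1}^n \tilde\mu_i^{(k)} y_i \nabla f_i(\tilde\vtheta_k) + o(1), \qquad \tilde\mu_i^{(k)} := c_k^{2-L} \mu_i^{(k)} \ge 0,
\end{equation*}
with $y_i f_i(\tilde\vtheta_k) \ge 1$ and equality for at least one $i$.

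To pass to the limit, I take the inner product with $\tilde\vtheta_k$ and apply Euler's identity $\inner{\vtheta}{\nabla f_i(\vtheta)} = L f_i(\vtheta)$ to get $c_k^2 = L \sum_i \tilde\mu_i^{(k)} y_i f_i(\tilde\vtheta_k) + o(1) \ge L \sum_i \tilde\mu_i^{(k)}$, so the multipliers are uniformly bounded. Compactness extracts a subsequence with $\hat\vtheta_k \to \hat\vtheta^*$, $c_k \to c^*$, and $\tilde\mu_i^{(k)} \to \tilde\mu_i^*$. For complementary slackness, $\mu_i^{(k)}/\mu_{i'}^{(k)} = \exp(-s_k^L(y_i f_i(\hat\vtheta_k) - y_{i'} f_{i'}(\hat\vtheta_k)))$ combined with $s_k^L = \Theta(\log 1/\lambda) \to \infty$ forces $\tilde\mu_i^* = 0$ whenever $y_i f_i(\hat\vtheta^*)$ strictly exceeds $\min_{i'} y_{i'} f_{i'}(\hat\vtheta^*)$. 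Taking limits in the stationary equation shows $\tilde\vtheta^* = c^* \hat\vtheta^*$ is a KKT point of~\eqref{eq:l2-margin-problem}, so $\hat\vtheta^*$ is along a KKT direction, and since this holds for every subsequential limit, every limit point of $\hat\vtheta_k$ is a KKT direction.

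The main obstacle I anticipate is the delicate balance between $s_k^L$ and $\lambda^{-1}$: the Euler-identity bound prevents the active multipliers from exploding, and the exponential gap $\exp(-s_k^L \delta)$ drives the inactive ones to zero, but both mechanisms depend on the quantitative scaling $s_k^L = \Theta(\log 1/\lambda)$ from \Cref{lm:wd-case-rich-param-norm} together with the $o(\lambda)$ gradient decay at $t_k$; any slack in these estimates would spoil either boundedness or complementary slackness in the limit.
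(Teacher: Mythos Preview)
Your proposal is correct and follows essentially the same route as the paper: pick a near-stationary time via the energy-decrease integral, rescale using $(L{-}1)$-homogeneity to write approximate stationarity for the normalized direction, bound the dual multipliers through Euler's identity, and deduce complementary slackness from the exponential ratio $\mu_i^{(k)}/\mu_{i'}^{(k)} = \exp(-s_k^L(\cdot))$. The only quibble is the intermediate claim $\normtwo{\nabla \cL_\lambda(\vtheta(t_k))} = o(\lambda)$: the averaging actually gives $O(\lambda(\log\tfrac{1}{\lambda})^{1/L-1/2})$, which is merely $O(\lambda)$ when $L=2$, but this is harmless since you immediately divide by $\lambda s_k$ with $s_k\to\infty$ and still obtain the required $o(1)$ remainder.
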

\begin{proof}
    For all $\alpha > 0$,
    $0 \le \cL_{\lambda}(\vtheta(T(\alpha))) = \cL_{\lambda}(\vtheta_0)
    - \int_{0}^{T(\alpha)} \normtwo{\nabla \cL_{\lambda}(\vtheta(t))}^2 \, \dd t$.
    Since $\cL_{\lambda}(\vtheta_0(\alpha)) = O(\lambda (\log A)^{2/L})$,
    we have the bound
    $\int_{0}^{T(\alpha)} \normtwo{\nabla \cL_{\lambda}(\vtheta(t))}^2 \, \dd t
    \le \O(\lambda (\log \frac{1}{\lambda})^{2/L})$,
    which implies that there exists a time $\tau_{\alpha}$ such that
    $\normtwo{\nabla \cL_{\lambda}(\vtheta(\tau_{\alpha}))}^2
    \le \O(\lambda^2 (\log \frac{1}{\lambda})^{-(1-2/L)})$.

    Now let $t_k = \tau_{\alpha_k}$. It suffices to prove in the case where 
    $\frac
        {\vtheta(t_k; \vtheta_0(\alpha_k))}
        {\normtwo{\vtheta(t_k; \vtheta_0(\alpha_k))}}
    \in \cS^{d-1}$
    converges;
    otherwise we can do the same prove for any convergent subsequence.
    Denote
        $\vtheta(t_k; \vtheta_0(\alpha_k))$ as $\vtheta^{(k)}$
        and
        $\frac{\vtheta(t_k; \vtheta_0(\alpha_k))}
        {\normtwo{\vtheta(t_k; \vtheta_0(\alpha_k))}}$
        as $\bar{\vtheta}^{(k)}$ for short.
    We claim that the limit
    $\bar{\vtheta} := \lim_{k \to +\infty} \bar{\vtheta}^{(k)}$
    is along a KKT-margin direction of~\eqref{eq:l2-margin-problem}.

    First, according to our choice of $t_k$, we have
    \begin{align}
        \lnormtwo{\frac{1}{n}\sum_{i=1}^{n} r_i(\vtheta^{(k)})
        \nabla q_i(\vtheta^{(k)}) - \lambda \vtheta^{(k)}}
        \le \O(\lambda (\log A)^{-(1/2-1/L)}). \label{eq:wd-case-rich-grad-bound}
    \end{align}
    By~\Cref{lm:wd-case-rich-param-norm},
    $\lambda \normtwo{\vtheta^{(k)}}
    = \O(\lambda (\log \frac{1}{\lambda})^{1/L})$.
    Now we divide $\lambda \normtwo{\vtheta^{(k)}}$
    on both sides of~\eqref{eq:wd-case-rich-grad-bound}:
    \begin{align}
        \lnormtwo{\sum_{i=1}^{n}
        \frac{\normtwo{\vtheta}^{L-2}}{n\lambda} r_i(\vtheta^{(k)})
        \nabla q_i(\bar{\vtheta}^{(k)}) - \bar{\vtheta}^{(k)}}
        \le \lO{\frac{(\log A)^{-(1/2-1/L)}}{(\log \frac{1}{\lambda})^{1/L}}}.
        \label{eq:wd-case-rich-alignment}
    \end{align}
    Let $\vmu_k$ be a vector with coordinates
    $\mu_{k,i} := \frac{\normtwo{\vtheta}^{L-2}}{n\lambda} r_i(\vtheta^{(k)})$.
    We can show that $\mu_{k,i} = \O(1)$ by noticing
    $f_i(\bar{\vtheta}^{(k)})
    = \frac{1}{\normtwo{\vtheta}^L} f_i(\vtheta^{(k)})
    = \Omega(1)$ and
    \begin{align*}
        \sum_{i=1}^{n} \mu_{k,i} y_i f_i(\bar{\vtheta}^{(k)})
        &\le 1 + 
        \labs{\sum_{i=1}^{n} \mu_{k,i} y_i f_i(\bar{\vtheta}^{(k)}) - 1} \\
        &= 1 + \labs{\inner{\sum_{i=1}^{n}
        \mu_{k,i}
        y_i \nabla f_i(\bar{\vtheta}^{(k)}) - \bar{\vtheta}^{(k)}}
        {\bar{\vtheta}^{(k)}}} \\
        &= 1 + \lO{\frac{1}{\log \frac{1}{\lambda}}(\log A)^{-(1/2-1/L)}}
        = \O(1).
    \end{align*}

    Then, let $\{(\alpha_{k_p}, t_{k_p})\}_{p \ge 1}$
    be a subsequence of $\{(\alpha_k, t_k)\}_{k \ge 1}$
    so that $\vmu_{k_p} \in \R^n$ converges.
    Let $\bar{\vmu} := \lim_{p \to +\infty} \vmu_{k_p}$ be the corresponding limit.
    We can take limit $k_p \to +\infty$ on both sides
    of~\eqref{eq:wd-case-rich-alignment}
    and obtain $\sum_{i=1}^{n} \bar{\mu}_i y_i \nabla f_i(\bar{\vtheta})
    = \bar{\vtheta}$.

    Let $i_* \in [n]$ be an index such that
    $q_{i_*}(\bar{\vtheta}) = q_{\min}(\bar{\vtheta})$.
    We can verify that $\bar{\mu}_i = 0$ for all $i \in [n]$
    with $f_i(\bar{\vtheta}) > q_{\min}(\bar{\vtheta})$
    by noting $\frac{\mu_{k,i}}{\mu_{k,i_*}}
    = \exp(q_{i_*}(\vtheta^{(k)})-f_i(\vtheta^{(k)}))
    = \exp(\normtwo{\vtheta}^L (q_{i_*}(\bar{\vtheta}) - q_{i}(\bar{\vtheta})) \to 0$
    and $\bar{\vmu} \ne \vzero$.
\end{proof}

Finally, \Cref{thm:l2-cls-rich} can be proved by simply combining \Cref{lm:l2-cls-rich-loss} with \Cref{thm:l2-cls-rich-alt}.

\newpage

\section{Proofs for Regression with L2 Regularization}

The analyses for the regression task are similar to those for the classification task, but the use of squared loss simplifies the analysis.

We define $s_i(\vtheta) := f_i(\vtheta) - y_i$. In the regression setting, the gradients of the vanilla and regularized square loss can be written as
\begin{equation}
    \label{eq:l2-loss-gradient}
    \nabla \cL(\vtheta) = 2\sum_{i=1}^n s_i \nabla f_i(\vtheta),
\end{equation}
and
\begin{equation}
    \label{eq:l2-reg-l2-loss-gradient}
    \nabla \cL_{\lambda}(\vtheta) = 2\sum_{i=1}^n s_i \nabla f_i(\vtheta) + \lambda\vtheta.
\end{equation}

\subsection{Kernel Regime} \label{sec:l2-reg-kernel-proof}
Now we present the proof of~\Cref{thm:wd-sqr-kernel} for the kernel regime.
Following~\Cref{sec:l2-sqr-kernel}, we define $\vhoptntk$ as the solution to \Cref{eq:ntk-min-norm-problem}.
Additionally, let $\mPhi(\vtheta) \in \R^{D \times n}$
be the matrix where the $i$-th column is $\nabla f_i(\vtheta)$.
Let $\mK(\vtheta) := \mPhi(\vtheta)^\top \mPhi(\vtheta)$ be the NTK matrix at $\vtheta$.
Let $\nuntk > 0$ be the minimum eigenvalue of $\mK(\vthetauinit)$.
Let $\epsilon_{\max}$ be a small constant so that
$\lambda_{\min}(\mK(\vtheta))
> \frac{\nuntk}{2}$
holds for all $\normtwo{\vtheta - \vthetauinit} < \epsilon_{\max}$.
Let $T_{\max} := 
    \inf\left\{
        t \ge 0 :
        \lnormtwo{\frac{e^{\lambda t}}{\alpha}\vtheta(t) - \vthetauinit} > \epsilon_{\max}
    \right\}$,
    which is the time that $\vtheta(t)$ moves far enough to change the NTK features significantly.

First, we derive an upper bound for the loss convergence.
\begin{lemma} \label{lm:wd-sqr-case-L-descent}
    For all $0 \le t \le T_{\max}$,
    \begin{align}
        \frac{\dd \cL}{\dd t}
        &\le 
            -\frac{\nuntk}{2n} \alpha^{2(L-1)} e^{-2(L-1)\lambda t} \cL
            + \frac{\lambda L}{\sqrt{n}}\normtwo{\vy}\sqrt{\cL},
        \label{eq:wd-sqr-case-L-descent-a} \\
        \cL(\vtheta(t))
        &\le \max\left\{
            \frac{1}{n}\normtwo{\vy}^2 \exp\left(-\frac{\nuntk A}{8n(L-1)}\left(1-e^{-2(L-1)\lambda t}\right)\right),
            \frac{16 n L^2\normtwo{\vy}^2\lambda^2}{\nuntk^2\alpha^{4(L-1)}} e^{4(L-1)\lambda t}.
        \right\}. \label{eq:wd-sqr-case-L-descent-b}
    \end{align}
\end{lemma}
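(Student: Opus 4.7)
The plan is to derive a Bernoulli-type differential inequality for $\cL(\vtheta(t))$ analogous to Lemma~\ref{lm:wd-case-L-descent}, but adapted to squared loss (so of degree $1/2$ rather than degree $2$), and then solve it with a two-regime case split that produces the two branches of the stated max. First I would apply chain rule along the flow $\tfrac{\dd\vtheta}{\dd t} = -\nabla\cL - \lambda\vtheta$ to get $\tfrac{\dd\cL}{\dd t} = -V_1 - V_2$ with $V_1 := \|\nabla\cL\|_2^2$ and $V_2 := \lambda\langle\nabla\cL,\vtheta\rangle$. Writing $\vs := (s_1(\vtheta),\ldots,s_n(\vtheta))^\top$ so that $\|\vs\|_2^2 = n\cL$, the formula $\nabla\cL = \tfrac{2}{n}\mPhi(\vtheta)\vs$ gives $V_1 = \tfrac{4}{n^2}\vs^\top\mK(\vtheta)\vs$, and Euler's identity on the $L$-homogeneous $f_i$ gives $V_2 = \tfrac{2L\lambda}{n}(\|\vs\|_2^2 + \langle\vs,\vy\rangle)$ via $f_i = s_i + y_i$.

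To bound $V_1$ in the kernel regime, set $\tilde\vtheta := \tfrac{e^{\lambda t}}{\alpha}\vtheta$; by $(L{-}1)$-homogeneity of $\nabla f_i$ we have $\mK(\vtheta) = (\alpha e^{-\lambda t})^{2(L-1)}\mK(\tilde\vtheta)$, and since $\|\tilde\vtheta - \vthetauinit\|_2 \le \epsilon_{\max}$ for $t\le T_{\max}$, the defining property of $\epsilon_{\max}$ forces $\mK(\tilde\vtheta) \succeq \tfrac{\nuntk}{2}\mI$, so $V_1 \gtrsim \tfrac{\nuntk}{n}\alpha^{2(L-1)}e^{-2(L-1)\lambda t}\cL$. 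For $V_2$, I would drop the non-positive term $-\tfrac{2L\lambda}{n}\|\vs\|_2^2$ and apply Cauchy--Schwarz to $\langle\vs,\vy\rangle$, yielding $-V_2 \le \tfrac{2L\lambda}{n}\|\vs\|_2\|\vy\|_2 = \tfrac{2L\lambda}{\sqrt n}\|\vy\|_2\sqrt\cL$. Assembling these two estimates produces the differential inequality~\eqref{eq:wd-sqr-case-L-descent-a} (the precise constants $\tfrac{1}{2n}$ and $L$ follow from careful accounting of factors).

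The main obstacle is solving this Bernoulli ODE to recover the max-of-two form~\eqref{eq:wd-sqr-case-L-descent-b}. Writing it as $\tfrac{\dd\cL}{\dd t} \le -a(t)\cL + b(t)\sqrt\cL$ with $a(t) := \tfrac{\nuntk}{2n}\alpha^{2(L-1)}e^{-2(L-1)\lambda t}$ and $b(t) := \tfrac{L\lambda}{\sqrt n}\|\vy\|_2$, I would substitute $v := \sqrt\cL$ to obtain the linear inequality $\tfrac{\dd v}{\dd t} \le -\tfrac12 a(t)v + \tfrac12 b(t)$. The key trick is a case split comparing $v$ to $2b(t)/a(t)$: whenever $v(t) \ge 2b(t)/a(t)$ the forcing is absorbed into half the decay, giving $\tfrac{\dd v}{\dd t} \le -\tfrac14 a(t)v$, so $v$ contracts at cumulative rate $\tfrac14\int_0^t a(\tau)\dd\tau = \tfrac{\nuntk A}{16 n(L-1)}(1-e^{-2(L-1)\lambda t})$ with $A := \alpha^{2(L-1)}/\lambda$. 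Combining this with the initial value $\cL(\alpha\vthetauinit) = \|\vy\|_2^2/n$ (from $f_i(\vthetauinit)=0$ and $L$-homogeneity) and squaring recovers the first branch. When $v(t) < 2b(t)/a(t)$ instead, squaring directly yields $\cL(t) < 4b(t)^2/a(t)^2$, which is the second branch.

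Finally, I need to ensure the two branches remain valid even if the trajectory exits and re-enters the kernel-dominant regime. The key observation is that $2b(t)/a(t) = \tfrac{4\sqrt n L\lambda\|\vy\|_2}{\nuntk\alpha^{2(L-1)}}e^{2(L-1)\lambda t}$ is monotonically increasing in $t$; so once $v(t_0) \le 2b(t_0)/a(t_0)$ for some $t_0$, any subsequent contraction preserves $v(t) \le v(t_0) \le 2b(t_0)/a(t_0) \le 2b(t)/a(t)$, and the second branch persists as a uniform upper bound beyond $t_0$. Taking the pointwise max of the two branches over $t\in[0,T_{\max}]$ then yields~\eqref{eq:wd-sqr-case-L-descent-b}.
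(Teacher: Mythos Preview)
Your derivation of \eqref{eq:wd-sqr-case-L-descent-a} is essentially the paper's: decompose $\frac{\dd\cL}{\dd t}$ into a gradient-norm term and a weight-decay term, lower bound the former via $\mK(\tfrac{e^{\lambda t}}{\alpha}\vtheta)\succeq\tfrac{\nuntk}{2}\mI$ and $(L{-}1)$-homogeneity, and control the latter by dropping $\sum_i s_i^2\ge 0$ and applying Cauchy--Schwarz to $\langle\vs,\vy\rangle$. For \eqref{eq:wd-sqr-case-L-descent-b} your Bernoulli substitution $v=\sqrt{\cL}$ is just a reparametrization of the paper's device of dividing \eqref{eq:wd-sqr-case-L-descent-a} by $\cL$ and working with $\log\cL$: both produce the same threshold $v\lessgtr 2b/a(t)$ and the same two regimes, so the routes are equivalent.

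There is, however, a gap in your last paragraph. You assert that once $v(t_0)\le 2b/a(t_0)$, ``any subsequent contraction preserves $v(t)\le v(t_0)$''. That is not justified: your linearized inequality $\frac{\dd v}{\dd t}\le -\tfrac12 a(t)v+\tfrac12 b$ permits $v$ to \emph{increase} whenever $v<b/a(t)$, so $v(t)\le v(t_0)$ can fail. The conclusion you want (that $v$ never re-exits the region $v\le 2b/a(t)$) is correct but needs a barrier argument: set $g(t):=2b/a(t)$, which is increasing; at any putative upcrossing time $v(t_*)=g(t_*)$ the inequality gives $\tfrac{\dd v}{\dd t}\big|_{t_*}\le -\tfrac12 a(t_*)g(t_*)+\tfrac12 b=-\tfrac12 b<0\le g'(t_*)$, contradicting upcrossing. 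Equivalently, the paper avoids the absorbing-region discussion altogether by a direct case split: if $t\notin\cE$, let $t'$ be the last time in $\cE\cup\{0\}$ before $t$, integrate the halved decay on $[t',t]$ to get $\cL(\vtheta(t))\le\cL(\vtheta(t'))$, and then observe that either $t'\in\cE$ (so $\cL(\vtheta(t'))$ is bounded by the second branch at $t'$, which is $\le$ the second branch at $t$ by monotonicity) or $t'=0$ (first branch). Replacing your contraction claim with either of these fixes the argument.
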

\begin{proof}
    By the update rule of $\vtheta$, the following holds for all $t > 0$:
    \begin{align*}
        \frac{\dd \cL}{\dd t} = \inner{\nabla \cL(\vtheta)}{\frac{\dd \vtheta}{\dd t}}
        = \underbrace{-\lnormtwo{\frac{1}{n}\sum_{i=1}^{n} s_i
        \nabla f_i}^2}_{=: V_1}
        - \underbrace{\frac{\lambda L}{n} \sum_{i=1}^{n} s_i f_i}_{=: V_2}.
    \end{align*}
    For $V_1$, we have
    \begin{align*}
        -V_1 = \lnormtwo{\frac{1}{n}\sum_{i=1}^{n} r_i \nabla f_i(\vtheta)}^2
        &= \frac{1}{n^2} \vs(\vtheta)^\top \mK(\vtheta) \vs(\vtheta)\\
        &= \frac{1}{n^2} \alpha^{2(L-1)} e^{-2(L-1)\lambda t} \vs(\vtheta)^\top \mK(\tfrac{e^{\lambda t}}{\alpha}\vtheta) \vs(\vtheta)\\
        &\ge \frac{\nuntk}{2n^2} \alpha^{2(L-1)} e^{-2(L-1)\lambda t}\normtwo{\vs(\vtheta)}^2 \\
        &= \frac{\nuntk}{2n} \alpha^{2(L-1)} e^{-2(L-1)\lambda t}\cL.
    \end{align*}
    For $V_2$, we have
    $\frac{1}{n}\sum_{i=1}^{n} s_i f_i
    = \frac{1}{n}\sum_{i=1}^{n} (s_i^2 + s_i y_i) \ge \frac{1}{n}\sum_{i=1}^{n} s_i y_i
    \ge - \frac{1}{\sqrt{n}}\normtwo{\vy}\sqrt{\cL}$. So
    $V_2 \le \frac{\lambda L}{\sqrt{n}}\normtwo{\vy}\sqrt{\cL}$.
    Combining the inequalities for $V_1$ and $V_2$ together proves~\eqref{eq:wd-sqr-case-L-descent-a}.

    For proving~\eqref{eq:wd-sqr-case-L-descent-b},
    we first divide by $\cL$ on both sides of~\eqref{eq:wd-sqr-case-L-descent-a} to get
    \begin{align*}
        \frac{\dd}{\dd t} \log \cL
        &\le 
            -\frac{\nuntk}{2n} \alpha^{2(L-1)} e^{-2(L-1)\lambda t}
            + \frac{\lambda L}{\sqrt{n}}\normtwo{\vy} \cdot \frac{1}{\sqrt{\cL}}.
    \end{align*}
    Let $\cE := \{ t \in [0, T_{\max}]:
        \frac{\lambda L}{\sqrt{n}}\normtwo{\vy} \cdot \frac{1}{\sqrt{\cL}}
        \ge \frac{\nuntk}{4n} \alpha^{2(L-1)} e^{-2(L-1)\lambda t}
    \}$.
    For all $t \in [0, T_{\max}]$,
    if $t \in \cE$,
    then
    \begin{align*}
        \cL
        &\le \frac{\lambda^2 L^2}{n}\normtwo{\vy}^2
        \cdot \frac{1}{\left(\frac{\nuntk}{4n} \alpha^{2(L-1)} e^{-2(L-1)\lambda t}\right)^2} \\
        &=
        \frac{16 n L^2\normtwo{\vy}^2}{\nuntk^2}
        \cdot \frac{\lambda^2}{\alpha^{4(L-1)}} e^{4(L-1)\lambda t}.
    \end{align*}
    Otherwise, let $t'$ be the largest number in $\cE \cup \{ 0 \}$ that is smaller than $t$. Then
    \begin{align*}
        \log \cL(\vtheta(t))
        &= \log \cL(\vtheta(t'))
        + \int_{t'}^{t} \frac{\dd}{\dd \tau} \log \cL(\vtheta(\tau)) \,\dd \tau \\
        &\le \log \cL(\vtheta(t'))
        - \int_{t'}^{t}
            \left(\frac{\nuntk}{4n} \alpha^{2(L-1)} e^{-2(L-1)\lambda \tau} \right)
         \,\dd \tau \\
         &\le \log\cL(\vtheta(t')).
    \end{align*}
    If $t'\in\cE$, then $\cL(\vtheta(t))\le\cL(\vtheta(t'))\leq\frac{16 n L^2\normtwo{\vy}^2}{\nuntk^2}
        \cdot \frac{\lambda^2}{\alpha^{4(L-1)}} e^{4(L-1)\lambda t'}\leq\frac{16 n L^2\normtwo{\vy}^2}{\nuntk^2}
        \cdot \frac{\lambda^2}{\alpha^{4(L-1)}} e^{4(L-1)\lambda t}$. Otherwise, $t'=0$ and we have 
    \begin{equation}
        \notag 
        \begin{aligned}
            \cL(\vtheta(t)) &\leq \cL(\vtheta(0))\exp\left(-\int_{0}^{t}
            \left(\frac{\nuntk}{4n} \alpha^{2(L-1)} e^{-2(L-1)\lambda \tau} \right)
         \,\dd \tau\right) \\
         &= \frac{1}{n}\normtwo{\vy}^2 \exp\left(-\frac{\nuntk A}{8n(L-1)}\left(1-e^{-2(L-1)\lambda t}\right)\right)
        \end{aligned}
    \end{equation}
    which concludes the proof.
\end{proof}

Now we derive a lower bound for the time that the dynamics stay in the kernel regime.
\begin{lemma} \label{lm:wd-sqr-case-move-bound}
    There exists a constant $\Delta T$ such that
    $T_{\max} \ge \frac{1}{\lambda}
    \left( \log \alpha + \Delta T \right)$ when $\alpha$ is sufficiently large.
    Furthermore,
    for all $t \le \frac{1}{\lambda}
    \left( \log \alpha + \Delta T \right)$,
    \begin{align*}
        \lnormtwo{\frac{1}{\alpha} e^{\lambda t}\vtheta(t) - \vthetauinit} =
        \lO{\alpha^{-L}} \cdot e^{L\lambda t}.
    \end{align*}
\end{lemma}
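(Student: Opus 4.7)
The plan is to closely mimic the proof of \Cref{lm:wd-case-move-bound} from the classification setting, making two substitutions: the gradient norm is controlled via Cauchy--Schwarz on the residuals $s_i = f_i(\vtheta) - y_i$ rather than by the exponential-loss identity, and the loss bound substituted into the integral is the regression bound \eqref{eq:wd-sqr-case-L-descent-b}. As in the classification proof, $T_{\max}$ is characterized by when $\|\tfrac{1}{\alpha}e^{\lambda t}\vtheta(t) - \vthetauinit\|$ first exceeds $\epsilon_{\max}$, so it suffices to show this quantity stays at most $\epsilon_{\max}$ on the claimed interval.

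First I would derive the gradient bound for $t \le T_{\max}$:
\begin{equation*}
\|\nabla \cL(\vtheta(t))\| = \Bigl\|\tfrac{2}{n}\sum_i s_i(\vtheta)\nabla f_i(\vtheta)\Bigr\| \le \tfrac{2 G_{\ntk}}{\sqrt{n}} \alpha^{L-1} e^{-(L-1)\lambda t}\sqrt{\cL(\vtheta(t))},
\end{equation*}
using Cauchy--Schwarz ($\sum_i |s_i| \le \sqrt{n\cdot n\cL}$), the $(L-1)$-homogeneity of $\nabla f_i$, and the definition of $G_{\ntk}$ inside the kernel ball. Combining with the gradient-flow identity $e^{\lambda t}\vtheta(t) - \alpha\vthetauinit = -\int_0^t e^{\lambda \tau}\nabla \cL(\vtheta(\tau))\,\dd\tau$ gives
\begin{equation*}
\tfrac{1}{\alpha}\|e^{\lambda t}\vtheta(t) - \alpha\vthetauinit\| \le \tfrac{2 G_{\ntk}}{\sqrt{n}} \alpha^{L-2}\int_0^t e^{-(L-2)\lambda\tau}\sqrt{\cL(\vtheta(\tau))}\,\dd\tau.
\end{equation*}

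Next I would substitute \eqref{eq:wd-sqr-case-L-descent-b} and use $\sqrt{\max\{a,b\}} \le \sqrt{a}+\sqrt{b}$ to split the right-hand integral as $I_1+I_2$. The branch $I_2$ (from the $\lambda^2 e^{4(L-1)\lambda t}/\alpha^{4(L-1)}$ bound) integrates in closed form:
\begin{equation*}
I_2 = \tfrac{4\sqrt{n}\,L\|\vy\|\,\lambda}{\nuntk\,\alpha^{2(L-1)}}\int_0^t e^{L\lambda\tau}\,\dd\tau = \O(\alpha^{-2(L-1)})\cdot e^{L\lambda t},
\end{equation*}
which contributes $\O(\alpha^{-L})\cdot e^{L\lambda t}$ after multiplication by $\alpha^{L-2}$. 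For $I_1$ the integrand carries the factor $\exp\!\bigl(-\tfrac{\nuntk A}{16n(L-1)}(1-e^{-2(L-1)\lambda\tau})\bigr)$; for $\tau \gtrsim 1/\lambda$ this is $\exp(-\Omega(A))$, which is super-polynomially small in $\alpha$ and negligible over any horizon of length $\O(\log\alpha/\lambda)$, while for small $\tau$ the linearization $1-e^{-2(L-1)\lambda\tau}\approx 2(L-1)\lambda\tau$ reduces $I_1$ to a geometric integral giving $\O(n/(\nuntk A\lambda)) = \O(\alpha^{-2(L-1)})$. Multiplying by $\alpha^{L-2}$ again yields $\O(\alpha^{-L})$.

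Combining the two branches, I would obtain a constant $C>0$ such that $\tfrac{1}{\alpha}\|e^{\lambda t}\vtheta(t) - \alpha\vthetauinit\| \le C\alpha^{-L} e^{L\lambda t}$ for all $t \le \min\{T_{\max}, \O(\log\alpha/\lambda)\}$, and then setting $\Delta T := \tfrac{1}{L}\log(\epsilon_{\max}/C)$ closes the bootstrap, just as in the classification case. The only real subtlety (and the cause of the cleaner bound here relative to the $-\tfrac{1}{L}\log\log A$ correction appearing in \Cref{lm:wd-case-move-bound}) is the handling of $I_1$: because the first branch of \eqref{eq:wd-sqr-case-L-descent-b} decays doubly-exponentially in $A$ rather than polynomially, one avoids the logarithmic correction and obtains the exponent $\log\alpha + \Delta T$ without any $\log\log A$ term. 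Verifying carefully that this double-exponential decay remains integrable against the $e^{-(L-2)\lambda\tau}$ factor (including the edge case $L=2$, where the latter factor is trivial) is the step that requires the most care.
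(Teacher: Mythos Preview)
Your proposal is correct and follows essentially the same approach as the paper: the gradient bound via Cauchy--Schwarz on the residuals, the integral identity for $e^{\lambda t}\vtheta(t)-\alpha\vthetauinit$, substitution of the loss bound \eqref{eq:wd-sqr-case-L-descent-b}, and the split of $I_1$ at a threshold time $t_0\asymp 1/\lambda$ (linearizing $1-e^{-2(L-1)\lambda\tau}$ for small $\tau$ and using the $\exp(-\Omega(A))$ prefactor thereafter) all match the paper's argument, as does the bootstrap via $\Delta T=\tfrac{1}{L}\log(\epsilon_{\max}/C)$. Your use of $\sqrt{\max\{a,b\}}\le\sqrt a+\sqrt b$ to obtain $I_1+I_2$ is in fact cleaner than the paper's $\max\{I_1,I_2\}$ step, and your remark about why no $\log\log A$ correction appears is exactly the right observation.
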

\begin{proof}
    By definition of $T_{\max}$,
    to prove the first claim,
    it suffices to show that
    there exists a constant $\Delta T$
    such that
    $\lnormtwo{\frac{1}{\alpha}e^{\lambda t}\vtheta(t) - \vthetauinit} \le \epsilon_{\max}$
    for all $t \le \min\{
        \frac{1}{\lambda} \left( \log \alpha + \Delta T \right), T_{\max}\}$.

    When $t \le T_{\max}$, we have the following gradient upper bound:
    \begin{align*}
        \normtwo{\nabla \cL(\vtheta(t))}
        &= \lnormtwo{\frac{1}{n} \sum_{i=1}^{n} s_i \nabla f_i(\vtheta(t)) } \\
        &\le \frac{1}{n} \normtwo{\vs(\vtheta)} \normtwo{\Phi(\vtheta)} \\
        &= \frac{1}{\sqrt{n}} \alpha^{L-1} e^{-(L-1)\lambda t}
            \normtwo{\Phi(\tfrac{1}{\alpha}e^{-\lambda t}\vtheta)}
            \cdot \sqrt{\frac{1}{n} \normtwo{\vs(\vtheta)}^2}
            \\
        &\le \Gntk \alpha^{L-1} e^{-(L-1)\lambda t} \sqrt{\cL}.
    \end{align*}
    By update rule, $e^{\lambda t}\vtheta(t) - \alpha\vthetauinit = \int_{0}^{t} e^{\lambda \tau} \nabla \cL(\vtheta(\tau)) \dd \tau$.
    Applying the gradient upper bound gives
    \begin{align*}
        \frac{1}{\alpha}\lnormtwo{e^{\lambda t}\vtheta(t) - \alpha\vthetauinit}
        &\le \frac{1}{\alpha} \int_{0}^{t}
            \lnormtwo{e^{\lambda \tau} \nabla \cL(\vtheta(\tau))} \,\dd\tau \\
        &\le \frac{1}{\alpha} \int_{0}^{t}
            e^{\lambda \tau}  \Gntk \alpha^{L-1} e^{-(L-1)\lambda \tau}
            \sqrt{\cL(\vtheta(\tau))}  \,\dd\tau \\
        &= \Gntk \alpha^{L-2} \int_{0}^{t}
            e^{-(L-2) \lambda \tau}
            \sqrt{\cL(\vtheta(\tau))}  \,\dd\tau.
    \end{align*}
    Now we substitute $\cL$ in 
    $\int_{0}^{t} e^{-(L-2) \lambda \tau} \sqrt{\cL(\vtheta(\tau))} \,\dd\tau$
    with the loss upper bound in \Cref{lm:wd-sqr-case-L-descent}.
    Then for all $t \le T_{\max}$, we have $\int_{0}^{t} e^{-(L-2) \lambda \tau} \sqrt{\cL(\vtheta(\tau))} \,\dd\tau \le \max\{I_1, I_2\}$,
    where
    \begin{align*}
        I_1 &:= \int_{0}^{t}\frac{1}{\sqrt{n}}\normtwo{\vy} \exp\left(-\frac{\nuntk A}{16n(L-1)}\left(1-e^{-2(L-1)\lambda \tau}\right)-(L-2)\lambda\tau\right), \\
        I_2 &:= \int_{0}^{t}\frac{4\sqrt{n} L \normtwo{\vy}}{\nuntk} \cdot \frac{\lambda}{\alpha^{2(L-1)}} e^{L\lambda \tau} \dd \tau.
    \end{align*}
    We now bound $I_1$ and $I_2$.
    For $I_1$, we divide the integral into two parts. Let $t_0=\frac{1}{20(L-1)\lambda}$, for $\tau\in\left[ 0, t_0\right]$, we have $1-e^{-2(L-1)\lambda\tau}\geq\frac{1}{2}\cdot 2(L-1)\lambda\tau=(L-1)\lambda\tau$, so that
    \begin{equation}
        \notag
        \begin{aligned}
            &\quad \int_{0}^{t_0}\exp\left(-\frac{\nuntk A}{16n(L-1)}\left(1-e^{-2(L-1)\lambda \tau}\right)-(L-2)\lambda\tau\right) \\
            &\leq \int_0^{t_0}\exp\left(-\frac{\nuntk A\lambda\tau}{16n}\right) \dd \tau = \O\left(\frac{1}{A\lambda}\right) = \lO{\frac{1}{\alpha^{2(L-1)}}}.
        \end{aligned}
    \end{equation}
    On the other hand, we have
    \begin{equation}
        \notag
        \begin{aligned}
            &\quad \int_{t_0}^{t}\exp\left(-\frac{\nuntk A}{16n(L-1)}\left(1-e^{-2(L-1)\lambda \tau}\right)-(L-2)\lambda\tau\right) \\
            &\leq \exp\left(-\Omega(A)\right)\cdot\int_{t_0}^t\exp\left(-(L-2)\lambda\tau\right) \dd\tau \leq \exp\left(-\Omega(A)\right)\cdot\lO{\lambda^{-1}} = \lO{\frac{1}{\alpha^{2(L-1)}}}.
        \end{aligned}
    \end{equation}
    Thus, we have
    \begin{align*}
        I_1 &= \lO{\frac{1}{\alpha^{2(L-1)}}}.
    \end{align*}
    For $I_2$, it holds for all $t \le T_{\max}$ that
    \begin{align*}
        I_2 &= \lO{\frac{1}{\alpha^{2(L-1)}}} \cdot e^{L \lambda t}.
    \end{align*}
    Putting all these together, we have
    \begin{align}
        \begin{aligned}
        \frac{1}{\alpha}\lnormtwo{e^{\lambda t}\vtheta(t) - \alpha\vthetauinit}
        &= \lO{\alpha^{L-2}} \cdot 
            \lO{\frac{1}{\alpha^{2(L-1)}}} \cdot e^{L\lambda t} \\
        &= \lO{\alpha^{-L}} \cdot e^{L\lambda t}.
        \end{aligned} \label{eq:wd-sqr-case-move-bound-nearly-done}
    \end{align}
    Therefore, there exists a constant $C > 0$ such that 
    when $\alpha$ is sufficiently large,
    $\lnormtwo{\frac{1}{\alpha}e^{\lambda t}\vtheta(t) - \vthetauinit}
    \le C \alpha^{-L} e^{L\lambda t}$
    for all $t \le \min\{T_{\max}, \O(\frac{1}{\lambda} \log \alpha)\}$.
    Setting $\Delta T := \frac{1}{L} \log \frac{\epsilon_{\max}}{C}$,
    we obtain the following bound for all $t \le \min\{T_{\max},
    \frac{1}{\lambda}(\log \alpha + \Delta T)\}$:
    \begin{align*}
        \lnormtwo{\frac{1}{\alpha}e^{\lambda t}\vtheta(t) - \vthetauinit}
        \le C \alpha^{-L}
        \exp\left(L \log \alpha + \log \frac{\epsilon_{\max}}{C}\right)
        = \epsilon_{\max},
    \end{align*}
    which implies
    $T_{\max} \ge \frac{1}{\lambda}(\log \alpha + \Delta T)$.
    Combining this with \eqref{eq:wd-case-move-bound-nearly-done} prove the second claim.
\end{proof}


Let $\vdelta(t; \alpha \vthetauinit)
:=
\left(\frac{1}{\alpha} e^{\lambda t} \vtheta(t; \alpha \vthetauinit)
- \vthetauinit\right)$
and $\vh(t; \alpha \vthetauinit)
:= \alpha^L e^{-L\lambda t} \vdelta(t; \alpha \vthetauinit)$.
In the following, we show via a series of lemmas that for any $\vv$ that is in the orthogonal complement of $\mPhi$,
$\frac{\inner{\vdelta}{\vv}}{\normtwo{\vdelta}} \to 0$, which is a crucial property for deriving the implicit bias.

\begin{lemma}
\label{lm:wd-sq-orthogonal-decrease}
    For any vector $\vv$ that lies in the orthogonal space of the column space of $\Phi$, we have $\inner{\vdelta(T_c^-(\alpha); \alpha \vthetauinit)}{\vv}=\O\left(\alpha^{-2cL}\right)$ where $T_c^-(\alpha) := \frac{1-c}{\lambda}\log\alpha$.
\end{lemma}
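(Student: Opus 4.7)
The plan is to derive a differential equation for $\inner{\vdelta(t)}{\vv}$ whose right-hand side has its leading-order NTK term annihilated by the orthogonality of $\vv$ to every $\nabla f_i(\vthetauinit)$, and then to integrate the resulting higher-order bound using the loss and deviation estimates already in hand. Starting from the gradient flow equation, I first observe that the weight-decay term cancels after the change of variables $\vu(t):=\frac{e^{\lambda t}}{\alpha}\vtheta(t)=\vthetauinit+\vdelta(t)$, giving $\frac{\dd \vu}{\dd t}=-\frac{e^{\lambda t}}{\alpha}\nabla\cL(\vtheta)$. Expanding $\nabla \cL(\vtheta)=\frac{2}{n}\sum_i s_i(\vtheta)\nabla f_i(\vtheta)$ and applying $(L{-}1)$-homogeneity of $\nabla f_i$ in the form $\nabla f_i(\vtheta)=(\alpha e^{-\lambda t})^{L-1}\nabla f_i(\vu)$ yields
\begin{align*}
    \frac{\dd \vdelta}{\dd t}=-\alpha^{L-2}e^{-(L-2)\lambda t}\cdot\frac{2}{n}\sum_{i=1}^{n}s_i(\vtheta)\,\nabla f_i(\vthetauinit+\vdelta).
\end{align*}

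Next I take the inner product with $\vv$. Since $\vv$ is orthogonal to each column of $\mPhi(\vthetauinit)$, i.e.\ $\inner{\nabla f_i(\vthetauinit)}{\vv}=0$, the $\cC^2$-smoothness of each $f_i$ inside the region $\normtwo{\vdelta}\le\epsilon_{\max}$ (which is ensured throughout $[0,\Tcn(\alpha)]$ by \Cref{lm:wd-sqr-case-move-bound}) lets me Taylor-expand to obtain $\inner{\nabla f_i(\vthetauinit+\vdelta)}{\vv}=\O(\normtwo{\vdelta}\,\normtwo{\vv})$ with a constant uniform in $t$. Combining this with the Cauchy--Schwarz bound $\frac{1}{n}\sum_i|s_i|\le\sqrt{\cL}$ and the deviation bound $\normtwo{\vdelta(t)}\le\O(\alpha^{-L})\cdot e^{L\lambda t}$ from \Cref{lm:wd-sqr-case-move-bound} yields
\begin{align*}
    \left|\frac{\dd}{\dd t}\inner{\vdelta}{\vv}\right|\le\O(\alpha^{-2})\cdot e^{2\lambda t}\sqrt{\cL(\vtheta(t))}\cdot\normtwo{\vv}.
\end{align*}
Since $\vdelta(0)=\vzero$, the claim reduces to bounding $\int_{0}^{\Tcn(\alpha)}e^{2\lambda\tau}\sqrt{\cL(\vtheta(\tau))}\,\dd\tau$.

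For this integral I plan to substitute the two-regime loss bound \eqref{eq:wd-sqr-case-L-descent-b} and split $\sqrt{\cL}$ into its early (doubly-exponential decay) and late (exponential growth) parts. The early part contributes $\O(1/(A\lambda))=\O(\alpha^{-2(L-1)})$ by the same change-of-variables argument used in the proof of \Cref{lm:wd-sqr-case-move-bound}, and after multiplication by the $\alpha^{-2}$ prefactor becomes the negligible $\O(\alpha^{-2L})$. The late part has square root of order $\lambda\alpha^{-2(L-1)}e^{2(L-1)\lambda\tau}$; multiplying by $e^{2\lambda\tau}$ gives an integrand of order $e^{2L\lambda\tau}$, whose integral up to $\Tcn(\alpha)=\frac{1-c}{\lambda}\log\alpha$ evaluates to order $\alpha^{2L(1-c)}/\lambda$. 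This produces an $\O(\alpha^{2-2cL})$ contribution, which combined with the $\alpha^{-2}$ prefactor yields the claimed dominant term $\O(\alpha^{-2cL})$, and since $c\in(0,1)$ this strictly dominates the $\O(\alpha^{-2L})$ from the early regime.

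The main obstacle I anticipate is ensuring the uniformity of the Taylor remainder estimate $\inner{\nabla f_i(\vthetauinit+\vdelta)}{\vv}=\O(\normtwo{\vdelta})\normtwo{\vv}$ across the whole interval $[0,\Tcn(\alpha)]$; this linear-in-$\vdelta$ bound is what makes the whole argument gain the extra $\alpha^{-cL}$ power relative to the bound on $\normtwo{\vdelta}$ itself. It follows cleanly from $\cC^2$-smoothness once the uniform ball bound $\normtwo{\vdelta(t)}\le\epsilon_{\max}$ from \Cref{lm:wd-sqr-case-move-bound} is invoked, but the Hessian constant on a small neighborhood of $\vthetauinit$ must be threaded through the estimates. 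The rest is careful bookkeeping of powers of $\alpha$ and $\lambda$ across the two loss regimes.
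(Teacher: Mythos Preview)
Your proposal is correct and follows essentially the same route as the paper: derive $\frac{\dd\vdelta}{\dd t}$ via the rescaling that cancels the weight-decay term, use orthogonality of $\vv$ to $\nabla f_i(\vthetauinit)$ together with $\cC^2$-smoothness so that only an $\O(\normtwo{\vdelta})$ remainder survives, and then integrate the resulting $\O(\alpha^{-2})e^{2\lambda t}\sqrt{\cL}$ bound using the two-regime loss estimate from \Cref{lm:wd-sqr-case-L-descent} split at $t_0=\Theta(1/\lambda)$. The paper's write-up phrases the linearization step as bounding $\normtwo{\nabla f_i(\vthetauinit+\vdelta)-\nabla f_i(\vthetauinit)}$ via Lipschitzness rather than your Taylor expansion of the inner product, but the content and the bookkeeping of $\alpha$-powers are identical.
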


\begin{proof}
By update rule,
\begin{align*}
    \frac{\dd \vdelta}{\dd t} =
    \tfrac{1}{\alpha}e^{\lambda t}
    \left( -2 \sum_{i=1}^{n} s_i \nabla f_i(\vtheta(t; \alpha \vthetauinit)) \right).
\end{align*}
Then we have
\begin{equation}
    \notag 
    \begin{aligned}
        \frac{\dd \inner{\vdelta}{\vv}}{\dd t} &= -2\frac{1}{\alpha}e^{\lambda t}\inner{\sum_{i=1}^{n} s_i \nabla f_i(\vtheta(t; \alpha \vthetauinit))}{\vv} \\
        &= -2\frac{1}{\alpha}e^{\lambda t}\inner{\sum_{i=1}^{n} s_i \left(\nabla f_i(\vtheta(t; \alpha \vthetauinit)) - \nabla f_i(\alpha e^{-\lambda t}\vthetauinit)\right)}{\vv} \\
        &= -2\alpha^{L-2}e^{-(L-2)\lambda t}\inner{\sum_{i=1}^{n} s_i \left(\nabla f_i(\alpha^{-1}e^{\lambda t}\vtheta(t; \alpha \vthetauinit)) - \nabla f_i(\vthetauinit)\right)}{\vv} 
    \end{aligned}
\end{equation}
where the second equation is due to our choice of $\vv$ and the last equation follows from the homogeneity of $f_i$. Now, since $f_i$ is locally smooth in a neighbourhood of $\vthetauinit$ and $\lnormtwo{\frac{1}{\alpha} e^{\lambda t}\vtheta(t) - \vthetauinit} =
\lO{\alpha^{-L}} \cdot e^{L\lambda t}$ by \Cref{lm:wd-sqr-case-move-bound}, we have
\begin{equation}
    \notag
    \normtwo{\nabla f_i(\alpha^{-1}e^{\lambda t}\vtheta(t; \alpha \vthetauinit)-\nabla f_i(\vthetauinit)} = \lO{\alpha^{-L}  e^{L\lambda t}}.
\end{equation}
Let $t_0 = \frac{1}{20(L-1)\lambda}$, then for $t\in[0,t_0]$, we have $1-e^{2(L-1)\lambda\tau}\geq (L-1)\lambda\tau$, so that
\begin{equation}
    \notag
    \begin{aligned}
        \frac{\dd \inner{\vdelta}{\vv}}{\dd t} &= \O\left(\alpha^{-2}e^{2\lambda t}\sqrt{\cL\left(\vtheta(t; \alpha \vthetauinit)\right)}\right) \\
        &= \lO{\alpha^{-2}e^{2\lambda t}\max\left\{\exp\left(-\frac{\nuntk A\lambda t}{16n}+\lambda t\right),\alpha^{-2(L-1)}\lambda e^{2(L-1)\lambda t}\right\}} \\
        &= \lO{\max\left\{\alpha^{-2}\exp\left(-\frac{\nuntk A\lambda t}{32n}\right),\lambda\alpha^{-2L} e^{2L\lambda t}\right\}}
    \end{aligned}
\end{equation}
for sufficiently large $\alpha$, since $A = \frac{\alpha^{2(L-1)}}{\lambda}$. On the other hand, for $t\in\left[ t_0, T_c^-(\alpha)\right]$, the second term in \eqref{eq:wd-sqr-case-L-descent-b} dominates the first term, so that
\begin{equation}
    \notag
    \begin{aligned}
        \frac{\dd \inner{\vdelta}{\vv}}{\dd t} &= \O\left(\alpha^{-2}e^{2\lambda t}\sqrt{\cL\left(\vtheta(t; \alpha \vthetauinit)\right)}\right) \\
        &= \lO{\alpha^{-2}e^{2\lambda t}\cdot\alpha^{-2(L-1)}\lambda e^{2(L-1)\lambda t}} = \lO{\lambda\alpha^{-2L}e^{2L\lambda t}}.
    \end{aligned}
\end{equation}
Hence, 
\begin{equation}
    \notag
    \begin{aligned}
        \inner{\vdelta(T_c^-(\alpha); \alpha \vthetauinit)}{\vv} &= \O\left(\max\left\{\alpha^{-2}\int_0^{T_c^-(\alpha)}\exp\left(-\frac{\nuntk A\lambda t}{32n}\right)\dd t, \lambda\alpha^{-2L}\int_0^{T_c^-(\alpha)}e^{2L\lambda t}\dd t\right\}\right) \\
        &= \lO{\max\left\{\alpha^{-2}\frac{1}{A\lambda},\alpha^{-2L}\alpha^{2L(1-c)}\right\}} = \lO{\alpha^{-2cL}},
    \end{aligned}
\end{equation}
which proves the claim.
\end{proof}

\begin{lemma} \label{lm:wd-sqr-case-q-taylor}
    For $\vtheta=\vtheta(t)$ with $t \leq \frac{1}{\lambda}\left(\log\alpha+\Delta T\right)$ as defined in~\Cref{lm:wd-sqr-case-move-bound}, we have
    \begin{align}
        f(\tfrac{1}{\alpha} e^{\lambda t} \vtheta;\vx)
        &= \inner{\nabla f(\vthetauinit;\vx)}{\vdelta}
        + \O(\alpha^{-L}) \cdot e^{L\lambda t}\normtwo{\vdelta},
        \label{eq:wd-sqr-case-q-delta-taylor}
        \\
        f(\vtheta;\vx)
        &= \inner{\nabla f(\vthetauinit;\vx)}{\vh}
        + \O(\alpha^{-L}) \cdot e^{L\lambda t}\normtwo{\vh}.
        \label{eq:wd-sqr-case-q-h-taylor}
    \end{align}
\end{lemma}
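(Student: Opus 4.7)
The plan is to mirror the proof of the analogous classification statement (\Cref{lm:wd-case-q-taylor}), replacing the classification move bound by the regression move bound from \Cref{lm:wd-sqr-case-move-bound}. First I would invoke \Cref{lm:wd-sqr-case-move-bound} to conclude that for every $t \le \frac{1}{\lambda}(\log\alpha + \Delta T)$, $\normtwo{\vdelta(t)} = \O(\alpha^{-L}) \cdot e^{L\lambda t}$, which in particular means $\normtwo{\vdelta(t)} \le \epsilon_{\max}$ so the second-order Taylor remainder of $f(\,\cdot\,;\vx)$ is well defined and uniformly bounded on the segment from $\vthetauinit$ to $\tfrac{1}{\alpha}e^{\lambda t}\vtheta$ by \Cref{asmp:homogeneous}.

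Next I would expand $f(\tfrac{1}{\alpha}e^{\lambda t}\vtheta; \vx)$ around $\vthetauinit$ using a first-order Taylor expansion with a Lagrange-style remainder. The zeroth-order term $f(\vthetauinit;\vx)$ vanishes by the zero-output assumption on $\vthetauinit$, the first-order term gives $\inner{\nabla f(\vthetauinit;\vx)}{\vdelta}$, and the second-order remainder is $\O(\normtwo{\vdelta}^2)$ (with constant depending on the local Hessian bound on $f(\,\cdot\,;\vx)$ in the $\epsilon_{\max}$-ball, which is finite by $\cC^2$-smoothness). Using the move bound once more to absorb one factor of $\normtwo{\vdelta}$ into the scalar factor $\O(\alpha^{-L}) e^{L\lambda t}$ gives \eqref{eq:wd-sqr-case-q-delta-taylor}.

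To obtain \eqref{eq:wd-sqr-case-q-h-taylor}, I would apply $L$-homogeneity: $f(\vtheta;\vx) = (\alpha e^{-\lambda t})^L \, f(\tfrac{1}{\alpha}e^{\lambda t}\vtheta;\vx)$. Substituting the expansion from the previous step and using $\vh = \alpha^L e^{-L\lambda t}\vdelta$, so that $\normtwo{\vh} = \alpha^L e^{-L\lambda t}\normtwo{\vdelta}$, the main linear term becomes $\inner{\nabla f(\vthetauinit;\vx)}{\vh}$, while the scaled remainder becomes
\begin{equation*}
(\alpha e^{-\lambda t})^L \cdot \O(\alpha^{-L}) \cdot e^{L\lambda t}\normtwo{\vdelta}
= \O(1) \cdot \normtwo{\vdelta}
= \O(\alpha^{-L}) \cdot e^{L\lambda t}\normtwo{\vh},
\end{equation*}
which is exactly the claimed error term.

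There is no essential new difficulty here: the argument is standard Taylor expansion plus a homogeneous rescaling, and all the actual work was already done in establishing the sharp bound $\normtwo{\vdelta} = \O(\alpha^{-L})e^{L\lambda t}$ in \Cref{lm:wd-sqr-case-move-bound}. The only place one needs to be a little careful is ensuring the implicit constant in the $\O(\alpha^{-L}) e^{L\lambda t}$ remainder is uniform in $\vx$; this follows because we fix a single $\vx$ and appeal to the (data-independent) local $\cC^2$-smoothness bound on $f(\,\cdot\,;\vx)$ over the $\epsilon_{\max}$-ball around $\vthetauinit$, which is a constant of the problem.
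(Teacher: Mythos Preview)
The proposal is correct and follows essentially the same approach as the paper: Taylor-expand $f(\tfrac{1}{\alpha}e^{\lambda t}\vtheta;\vx)$ around $\vthetauinit$, use the zero-output assumption to kill the constant term, bound the quadratic remainder via the move bound from \Cref{lm:wd-sqr-case-move-bound}, and then rescale by $L$-homogeneity to pass from $\vdelta$ to $\vh$. Your write-up is more explicit about the Lagrange remainder and the uniformity of constants, but there is no substantive difference in the argument.
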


\begin{proof}
    By~\Cref{lm:wd-sqr-case-move-bound} and Taylor expansion, we have
    \begin{align*}
    f(\tfrac{1}{\alpha} e^{\lambda t} \vtheta; \vx)
    &= f(\vthetauinit; \vx)
    + \inner{\nabla f(\vthetauinit; \vx)}{\vdelta}
    + \O(\normtwo{\vdelta}^2) \\
    &= f(\vthetauinit;\vx)
    + \inner{\nabla f(\vthetauinit;\vx)}{\vdelta}
    + \O(\alpha^{-L} ) \cdot e^{L\lambda t}\normtwo{\vdelta} \\
    &= \inner{\nabla f(\vthetauinit;\vx)}{\vdelta}
    + \O(\alpha^{-L} ) \cdot e^{L\lambda t}\normtwo{\vdelta},
    \end{align*}
    which proves~\eqref{eq:wd-sqr-case-q-delta-taylor}.
    Combining this with the $L$-homogeneity of $q_i$
    proves~\eqref{eq:wd-sqr-case-q-h-taylor}.
\end{proof}

\begin{lemma}
\label{lm:wd-sqr-h-bound}
    $\normtwo{\vh(T_c^-(\alpha); \alpha \vthetauinit)}=\Theta(1)$.
\end{lemma}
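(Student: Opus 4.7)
The plan is to establish the upper bound $\normtwo{\vh(T_c^-(\alpha); \alpha\vthetauinit)} = \O(1)$ and the matching lower bound separately, both by combining the bounds already established in \Cref{lm:wd-sqr-case-move-bound,lm:wd-sqr-case-L-descent,lm:wd-sqr-case-q-taylor,lm:wd-sq-orthogonal-decrease}. Throughout, note that at $t = T_c^-(\alpha) = \frac{1-c}{\lambda}\log\alpha$ we have the crucial scaling identity $\alpha^L e^{-L\lambda t} = \alpha^{Lc}$, so $\vh = \alpha^{Lc}\vdelta$.

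For the upper bound, I would simply invoke \Cref{lm:wd-sqr-case-move-bound} at $t = T_c^-(\alpha)$: it gives $\normtwo{\vdelta(T_c^-(\alpha))} = \O(\alpha^{-L})\cdot e^{L\lambda T_c^-(\alpha)} = \O(\alpha^{-Lc})$, and multiplying by $\alpha^{Lc}$ yields $\normtwo{\vh(T_c^-(\alpha))} = \O(1)$. This part is essentially a one-line calculation.

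For the lower bound, the strategy is to argue that $\vh$ must essentially reproduce the NTK kernel predictor $\vhoptntk$. First, I would use \Cref{lm:wd-sqr-case-L-descent} to show $\cL(\vtheta(T_c^-(\alpha))) \to 0$: the first branch of the max is exponentially small (since $A = \alpha^{2(L-1)}/\lambda \to \infty$), while the second branch is of order $\O(\lambda^2 \alpha^{-4(L-1)c})$, which also vanishes. Hence $\abs{f_i(\vtheta) - y_i} \to 0$ for each $i$. Plugging into the Taylor expansion \eqref{eq:wd-sqr-case-q-h-taylor} of \Cref{lm:wd-sqr-case-q-taylor} with the already-established bound $\normtwo{\vh} = \O(1)$ and $e^{L\lambda T_c^-(\alpha)} = \alpha^{L(1-c)}$, the remainder is $\O(\alpha^{-Lc})\normtwo{\vh} = o(1)$, so
\begin{equation*}
    \inner{\nabla f_i(\vthetauinit)}{\vh(T_c^-(\alpha))} = y_i + o(1), \qquad \forall i \in [n],
\end{equation*}
which compactly reads $\mPhi(\vthetauinit)^\top \vh \to \vy$.

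To convert this constraint into a norm lower bound, I would decompose $\vh = \vh_\parallel + \vh_\perp$ along the column space of $\mPhi(\vthetauinit)$ and its orthogonal complement. By \Cref{lm:wd-sq-orthogonal-decrease} applied to any unit vector in the orthogonal complement, $\inner{\vdelta}{\vv} = \O(\alpha^{-2cL})$, so $\inner{\vh}{\vv} = \alpha^{Lc}\inner{\vdelta}{\vv} = \O(\alpha^{-cL})$, giving $\normtwo{\vh_\perp} = o(1)$. Meanwhile the parallel component satisfies $\mPhi^\top \vh_\parallel = \mPhi^\top \vh \to \vy$, and since $\mK(\vthetauinit) = \mPhi^\top\mPhi$ is invertible by the linear independence assumption with $\lambda_{\min}(\mK(\vthetauinit)) \ge \nuntk > 0$, the parallel component is uniquely determined and converges to the minimum-norm interpolant $\vhoptntk$ solving \eqref{eq:ntk-margin-problem}, whose norm is $\sqrt{\vy^\top \mK(\vthetauinit)^{-1}\vy} = \Theta(1)$ (assuming the regression targets are not all zero, which is implicit in the nontriviality of the problem). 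Combining $\normtwo{\vh_\parallel} \to \normtwo{\vhoptntk} = \Theta(1)$ with $\normtwo{\vh_\perp} = o(1)$ via $\normtwo{\vh}^2 = \normtwo{\vh_\parallel}^2 + \normtwo{\vh_\perp}^2$ yields the claim.

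The main obstacle is really just making sure the orthogonal-component bound from \Cref{lm:wd-sq-orthogonal-decrease} is uniform enough in $\vv$ to control $\normtwo{\vh_\perp}$, but since the orthogonal complement is finite-dimensional, choosing an orthonormal basis and summing gives $\normtwo{\vh_\perp} = \O(\alpha^{-cL})$ without difficulty. Everything else is essentially bookkeeping around the scaling factor $\alpha^{Lc}$.
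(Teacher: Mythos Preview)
Your upper bound is identical to the paper's. For the lower bound, however, the paper takes a much shorter route: once the loss bound gives $|f_i(\vtheta(T_c^-(\alpha)))| \ge \tfrac{1}{2}|y_i|$ for some $i$ with $y_i\ne 0$, one applies \eqref{eq:wd-sqr-case-q-h-taylor} together with Cauchy--Schwarz to obtain
\[
\tfrac{1}{2}|y_i| \le |f_i(\vtheta)| \le \bigl(\normtwo{\nabla f_i(\vthetauinit)} + \O(\alpha^{-cL})\bigr)\normtwo{\vh},
\]
which immediately yields $\normtwo{\vh} = \Omega(1)$ without any appeal to \Cref{lm:wd-sq-orthogonal-decrease} or an orthogonal decomposition.

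Your argument is correct but proves considerably more than the lemma asks: you are in effect establishing $\vh \to \vhoptntk$, which is precisely the content of the subsequent theorem (\Cref{thm:wd-sqr-kernel}). The paper separates concerns, using only the crude $\Omega(1)$ bound here and deferring the decomposition $\vh = \vh_\parallel + \vh_\perp$ and the invocation of \Cref{lm:wd-sq-orthogonal-decrease} to the theorem's proof. Your route front-loads that machinery; it is valid, but for this lemma alone the paper's single-sample Cauchy--Schwarz step is more economical.
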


\begin{proof}
    By \Cref{lm:wd-sqr-case-move-bound} we have $\normtwo{\vdelta(T_c^-(\alpha); \alpha \vthetauinit)} = \O(e^{-cL})$, so that $\normtwo{\vh(T_c^-(\alpha); \alpha \vthetauinit)} = \alpha^L e^{-L\lambda T_c^-} \normtwo{\vdelta(T_c^-(\alpha); \alpha \vthetauinit)} = \O(1)$.
    
    On the other hand, recall that $T_c^-(\alpha)=\frac{1-c}{\lambda}\log\alpha$, so by \Cref{eq:wd-sqr-case-L-descent-b} we have
    \begin{equation}
        \notag
        |s_i|\leq\sqrt{n\cL(\vtheta(T_c^-(\alpha); \alpha \vthetauinit))} = \lO{\max\left\{e^{-\Omega(A)},\lambda\alpha^{-2(L-1)c}\right\}} = \lO{\alpha^{-2(L-1)c}}.
    \end{equation}
    Thus, for sufficiently large $\alpha$, we have $\left|f_i(\vtheta(T_c^-(\alpha); \alpha \vthetauinit))\right| \geq \frac{1}{2}y_i$.

    By \Cref{lm:wd-sqr-case-q-taylor}, we have that
    \begin{equation}
        \notag
        \frac{1}{2}y_i \leq \normtwo{\nabla f_i(\vthetauinit)}\normtwo{\vh}+\O(\alpha^{-cL})\normtwo{\vh} \Rightarrow \normtwo{\vh} = \Omega(1)
    \end{equation}
    as desired.
\end{proof}

\begin{corollary}
\label{cor:wd-sqr-direction}
    $\lim_{\alpha\to 0}\inner{\frac{\vdelta}{\normtwo{\vdelta}}(T_c^-(\alpha); \alpha \vthetauinit)}{\vv} = 0$.
\end{corollary}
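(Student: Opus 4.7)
The plan is to deduce this corollary by taking the ratio of the two quantitative bounds established just above in \Cref{lm:wd-sq-orthogonal-decrease} and \Cref{lm:wd-sqr-h-bound}. The former gives a numerator bound of the form $\inner{\vdelta(T_c^-(\alpha);\alpha\vthetauinit)}{\vv} = \O(\alpha^{-2cL})$, and the latter will be leveraged to produce a matching-rate denominator lower bound on $\normtwo{\vdelta}$.

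First I would convert the $\vh$-norm bound into a $\vdelta$-norm bound using the defining identity $\vh(t;\alpha\vthetauinit) = \alpha^L e^{-L\lambda t}\vdelta(t;\alpha\vthetauinit)$. Specializing to $t = T_c^-(\alpha) = \frac{1-c}{\lambda}\log\alpha$, the scaling factor becomes $\alpha^L e^{-L\lambda T_c^-(\alpha)} = \alpha^L \cdot \alpha^{-L(1-c)} = \alpha^{cL}$. Combined with $\normtwo{\vh(T_c^-(\alpha);\alpha\vthetauinit)} = \Theta(1)$ from \Cref{lm:wd-sqr-h-bound}, this yields
\begin{equation*}
    \normtwo{\vdelta(T_c^-(\alpha);\alpha\vthetauinit)} \;=\; \alpha^{-cL}\cdot\normtwo{\vh(T_c^-(\alpha);\alpha\vthetauinit)} \;=\; \Theta(\alpha^{-cL}).
\end{equation*}

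Then I would combine this lower bound on the denominator with the orthogonal-direction decay bound on the numerator:
\begin{equation*}
    \left|\left\langle \frac{\vdelta(T_c^-(\alpha);\alpha\vthetauinit)}{\normtwo{\vdelta(T_c^-(\alpha);\alpha\vthetauinit)}},\, \vv\right\rangle\right| \;=\; \frac{\O(\alpha^{-2cL})}{\Theta(\alpha^{-cL})} \;=\; \O(\alpha^{-cL}).
\end{equation*}
Since $c \in (0,1)$ and $L \ge 1$, the right-hand side tends to $0$ as $\alpha \to +\infty$ (the statement's ``$\alpha \to 0$'' appears to be a typo, since the entire analysis concerns large initialization), which gives the claim. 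Essentially no obstacle is anticipated: the content is purely algebraic once the two preceding lemmas are in hand, and the key insight has already been isolated in \Cref{lm:wd-sq-orthogonal-decrease}, namely that the orthogonal-component growth is suppressed by an extra factor of $\alpha^{-cL}$ relative to the overall size of $\vdelta$. The only thing to double-check is that the implicit constant inside the $\O(\alpha^{-2cL})$ in \Cref{lm:wd-sq-orthogonal-decrease} is uniform in the unit vector $\vv/\normtwo{\vv}$, which is immediate from Cauchy--Schwarz applied inside the proof of that lemma.
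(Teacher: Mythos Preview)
Your proposal is correct and follows essentially the same approach as the paper's proof: both combine the lower bound $\normtwo{\vdelta(T_c^-(\alpha);\alpha\vthetauinit)}=\Omega(\alpha^{-cL})$ (derived from \Cref{lm:wd-sqr-h-bound} via the identity $\vh=\alpha^L e^{-L\lambda t}\vdelta$) with the numerator bound $\O(\alpha^{-2cL})$ from \Cref{lm:wd-sq-orthogonal-decrease} to conclude that the ratio is $\O(\alpha^{-cL})\to 0$. Your observation that ``$\alpha\to 0$'' is a typo for ``$\alpha\to+\infty$'' is also correct.
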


\begin{proof}
    By the previous lemma, we have $\normtwo{\vdelta(T_c^-(\alpha); \alpha \vthetauinit)}=\Omega\left(\alpha^{-L}e^{L\lambda T_c^-(\alpha)}\right)=\Omega(\alpha^{-cL})$. On the other hand, $\inner{\vdelta(T_c^-(\alpha); \alpha \vthetauinit)}{\vv}=\O\left(\alpha^{-2cL}\right)$ by \Cref{lm:wd-sq-orthogonal-decrease}. The conclusion immediately follows.
\end{proof}

\begin{theorem}
    For any constant $c \in (0, 1)$, letting
    $\Tcn(\alpha) := \frac{1-c}{\lambda} \log \alpha$, it holds that
    \begin{align*}
        \forall \vx \in \R^d: \qquad \lim_{\alpha \to +\infty} 
         f(\vtheta(\Tcn(\alpha); \alpha \vthetauinit); \vx) = \inner{\nabla f(\vthetauinit; \vx)}{\vhoptntk}.
    \end{align*}
\end{theorem}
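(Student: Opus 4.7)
The plan is to combine three facts already established in the kernel-regime appendix: the Taylor expansion of $f(\vtheta;\vx)$ in terms of $\vh$, the vanishing of the orthogonal component of $\vdelta$, and the boundedness of $\normtwo{\vh}$. Together these let us identify the limit of $\vh(\Tcn(\alpha))$ as the minimum-norm interpolant $\vhoptntk$.

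First, I would apply \Cref{lm:wd-sqr-case-q-taylor} at a generic input $\vx$ and at time $t=\Tcn(\alpha)$:
\begin{equation*}
f(\vtheta(\Tcn(\alpha);\alpha\vthetauinit);\vx) = \inner{\nabla f(\vthetauinit;\vx)}{\vh(\Tcn(\alpha))} + \O(\alpha^{-L})\cdot e^{L\lambda \Tcn(\alpha)}\normtwo{\vh(\Tcn(\alpha))}.
\end{equation*}
Since $\alpha^{-L}e^{L\lambda \Tcn(\alpha)} = \alpha^{-cL}$ and $\normtwo{\vh(\Tcn(\alpha))}=\Theta(1)$ by \Cref{lm:wd-sqr-h-bound}, the error term vanishes as $\alpha\to+\infty$. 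So it suffices to show that $\vh(\Tcn(\alpha))\to \vhoptntk$.

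To prove this convergence, I would take any subsequential limit $\vh^{\star}$ of $\{\vh(\Tcn(\alpha))\}$ (bounded by $\Theta(1)$, so subsequential limits exist) and verify that $\vh^\star$ satisfies the KKT conditions of~\eqref{eq:ntk-margin-problem}. There are two ingredients. (i) $\vh^\star$ lies in the column space of $\mPhi(\vthetauinit)$: for any $\vv$ in its orthogonal complement, $\inner{\vh}{\vv}=\alpha^L e^{-L\lambda \Tcn(\alpha)}\inner{\vdelta}{\vv} = \alpha^{cL}\cdot \O(\alpha^{-2cL})=\O(\alpha^{-cL})\to 0$ by \Cref{lm:wd-sq-orthogonal-decrease}. (ii) $\vh^\star$ interpolates the training data: applying \eqref{eq:wd-sqr-case-q-h-taylor} at each training point $\vx_i$ and combining with the loss upper bound from \Cref{lm:wd-sqr-case-L-descent}, which gives $|s_i(\vtheta)|\le\sqrt{n\cL}=\O(\alpha^{-2(L-1)c})$, yields $\inner{\nabla f_i(\vthetauinit)}{\vh(\Tcn(\alpha))} = y_i + \O(\alpha^{-2(L-1)c}) + \O(\alpha^{-cL})\normtwo{\vh} \to y_i$. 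Uniqueness of $\vhoptntk$ as the minimum-norm element in the column space satisfying $\mPhi(\vthetauinit)^\top \vh=\vy$ then forces $\vh^\star=\vhoptntk$, and since every subsequential limit is the same, the whole family converges.

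The main obstacle is ingredient (i): one must verify that $\vh$ asymptotically concentrates on the column space of $\mPhi(\vthetauinit)$. The key point is that \Cref{lm:wd-sq-orthogonal-decrease} kills the null-space component of $\vdelta$ at rate $\alpha^{-2cL}$, which beats the blow-up factor $\alpha^L e^{-L\lambda \Tcn(\alpha)} = \alpha^{cL}$ by a further factor of $\alpha^{-cL}$. This is exactly where the gap in exponents coming from the Taylor error $\O(\alpha^{-L})e^{L\lambda t}\normtwo{\vdelta}$ in the evolution of $\inner{\vdelta}{\vv}$ is used — without that quadratic suppression of orthogonal drift, $\vh$ would generically pick up an $\Omega(1)$ null-space component and the limit would fail to be the min-norm interpolant. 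Apart from this, the argument is an essentially linear interpolation-plus-uniqueness argument, and the rest of the steps are routine algebra with the bounds already proved.
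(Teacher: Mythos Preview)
The proposal is correct and follows essentially the same approach as the paper: both reduce to showing $\vh(\Tcn(\alpha))\to\vhoptntk$ via the Taylor expansion \Cref{lm:wd-sqr-case-q-taylor}, establish that the null-space component of $\vh$ vanishes using \Cref{lm:wd-sq-orthogonal-decrease}, and use the loss bound \Cref{lm:wd-sqr-case-L-descent} to show $\vh$ asymptotically interpolates the training labels, then invoke uniqueness of $\vhoptntk$. The only cosmetic difference is that you phrase the final step via subsequential limits whereas the paper argues the convergence $\normtwo{\vh-\vhoptntk}\to 0$ directly by projecting onto the column space of $\mPhi$; the two framings are equivalent.
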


\begin{proof}
    By \Cref{lm:wd-sqr-h-bound,lm:wd-sqr-case-q-taylor} we have
    \begin{equation}
        \label{eq:wd-sqr-taylor-all-x}
        \begin{aligned}
            &\quad f(\vtheta(\Tcn(\alpha); \alpha \vthetauinit); \vx) \\
            &= \inner{\nabla f(\vthetauinit;\vx)}{\vh(\vtheta(\Tcn(\alpha); \alpha \vthetauinit))} + \O(\alpha^{-L})\cdot e^{L\lambda T_c^-(\alpha)}\normtwo{\vh(\Tcn(\alpha); \alpha \vthetauinit)} \\
            &= \inner{\nabla f(\vthetauinit;\vx)}{\vh(\vtheta(\Tcn(\alpha); \alpha \vthetauinit))} + \O(\alpha^{-cL})
        \end{aligned}
    \end{equation}
    By \Cref{lm:wd-sqr-case-L-descent} we have for $\forall i\in[n]$, $\left| f_i(\vtheta(\Tcn(\alpha); \alpha \vthetauinit)) - y_i\right| = \O(\alpha^{-2(L-1)})$. Since $y_i = \inner{\nabla f_i(\vthetauinit)}{\vhoptntk}$, we have
    \begin{equation}
        \label{eq:wd-sqr-grad-direction}
        \lim_{\alpha\to +\infty} \inner{\nabla f_i(\vthetauinit)}{\vh(\vtheta(\Tcn(\alpha); \alpha \vthetauinit))-\vhoptntk} = 0.
    \end{equation}
    By \Cref{cor:wd-sqr-direction,lm:wd-sqr-h-bound} we have that for any direction $\vv$ orthogonal to the column space of $\Phi$, we have $\lim_{\alpha\to +\infty} \inner{\vh(\vtheta(\Tcn(\alpha); \alpha \vthetauinit))}{\vv} = 0$. Let $\mP_{\Phi}$ be the projection operator onto the column space of $\Phi$, then 
    \begin{equation}
        \label{eq:wd-sqr-orthogonal-direction}
        \lim_{\alpha\to +\infty} (\mI-\mP_{\Phi})\vh(\vtheta(\Tcn(\alpha); \alpha \vthetauinit)) = 0.
    \end{equation}
    Combined with \Cref{eq:wd-sqr-grad-direction}, we deduce that
    \begin{equation}
        \notag
        \lim_{\alpha\to +\infty} \inner{\nabla f_i(\vthetauinit)}{\mP_{\Phi}\vh(\vtheta(\Tcn(\alpha); \alpha \vthetauinit))-\vhoptntk} = 0.
    \end{equation}
    The above holds for all $\nabla f_i(\vthetauinit), i\in[d]$, which are exactly the columns of $\Phi$. Note also that $\vhoptntk$ also lies in the column space of $\Phi$, so we actually have $\lim_{\alpha\to +\infty}\normtwo{\mP_{\Phi}\vh(\vtheta(\Tcn(\alpha); \alpha \vthetauinit))-\vhoptntk}=0$. Hence, we can use \Cref{eq:wd-sqr-orthogonal-direction} to deduce that $\lim_{\alpha\to +\infty}\normtwo{\vh(\vtheta(\Tcn(\alpha); \alpha \vthetauinit))-\vhoptntk}=0$. Finally, plugging this into \Cref{eq:wd-sqr-taylor-all-x} gives the desired result.
\end{proof}

\subsection{Rich Regime} \label{sec:l2-reg-rich-proof}

Now we proceed to prove \Cref{thm:wd-sqr-rich} for the rich regime.
First, we derive a norm bound.

\begin{lemma}
    For any constant $c>0$, let $T_c^+(\alpha) := \frac{1+c}{\lambda}\log\alpha$, then we have that
    \begin{equation}
        \notag
        \max_{\frac{1}{\lambda} \log \alpha \leq t \leq T_c^+(\alpha)} \normtwo{\vtheta\left(t; \alpha \vthetauinit\right)} = \O(1).
    \end{equation}
\end{lemma}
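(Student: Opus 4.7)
The plan is to combine the kernel-regime bounds from \Cref{lm:wd-sqr-case-L-descent,lm:wd-sqr-case-move-bound} with the monotonicity of the regularized loss $\cL_\lambda$ along gradient flow. Since $\cL_\lambda(\vtheta) \ge \tfrac{\lambda}{2}\normtwo{\vtheta}^2$, a uniform norm bound of $\O(1)$ reduces to showing $\cL_\lambda(\vtheta(t)) = \O(\lambda)$ throughout the target interval.

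First I would anchor the argument at the right edge of the kernel regime, $t_0 := \tfrac{1}{\lambda}(\log\alpha + \Delta T)$, where $\Delta T$ is the constant from \Cref{lm:wd-sqr-case-move-bound}. At this time, that lemma gives $\normtwo{\tfrac{1}{\alpha}e^{\lambda t_0}\vtheta(t_0) - \vthetauinit} \le \epsilon_{\max}$, which implies $\normtwo{\vtheta(t_0)} \le \alpha e^{-\lambda t_0}(\normtwo{\vthetauinit} + \epsilon_{\max}) = e^{-\Delta T}(\normtwo{\vthetauinit} + \epsilon_{\max}) = \O(1)$. Plugging $t = t_0$ into \eqref{eq:wd-sqr-case-L-descent-b} of \Cref{lm:wd-sqr-case-L-descent} then yields $\cL(\vtheta(t_0)) = \O(\lambda^2)$: the second branch equals $\Theta(\lambda^2 \alpha^{-4(L-1)}) \cdot e^{4(L-1)\lambda t_0} = \Theta(\lambda^2)$, while the first branch decays as $\exp(-\Omega(A))$ with $A = \alpha^{2(L-1)}/\lambda \to \infty$. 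Combining these two, $\cL_\lambda(\vtheta(t_0)) = \cL(\vtheta(t_0)) + \tfrac{\lambda}{2}\normtwo{\vtheta(t_0)}^2 = \O(\lambda)$.

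Second, because gradient flow makes $\cL_\lambda$ non-increasing, $\cL_\lambda(\vtheta(t)) \le \cL_\lambda(\vtheta(t_0)) = \O(\lambda)$ for every $t \ge t_0$, hence $\normtwo{\vtheta(t)}^2 \le \tfrac{2}{\lambda}\cL_\lambda(\vtheta(t)) = \O(1)$. This already covers the entire target interval when $\Delta T \le 0$, since then $t_0 \le \tfrac{1}{\lambda}\log\alpha$. When $\Delta T > 0$, I still need to control $t \in [\tfrac{1}{\lambda}\log\alpha, t_0]$, but on this sub-interval we are strictly inside the kernel regime and \Cref{lm:wd-sqr-case-move-bound} directly gives $\normtwo{\vtheta(t)} \le \alpha e^{-\lambda t}(\normtwo{\vthetauinit} + \epsilon_{\max}) \le \normtwo{\vthetauinit} + \epsilon_{\max} = \O(1)$ whenever $t \ge \tfrac{1}{\lambda}\log\alpha$.

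The argument is essentially bookkeeping once the kernel-regime lemmas are available; the only step requiring genuine checking is the bound $\cL(\vtheta(t_0)) = \O(\lambda^2)$, which relies on the scaling $\lambda = \Theta(\alpha^{-p})$ to ensure simultaneously that $A \to \infty$ (killing the first branch in \eqref{eq:wd-sqr-case-L-descent-b}) and that $e^{4(L-1)\lambda t_0} = \Theta(\alpha^{4(L-1)})$ exactly cancels the $\alpha^{-4(L-1)}$ prefactor in the second branch, producing precisely the $\O(\lambda^2)$ loss level needed to pay for the weight-decay term.
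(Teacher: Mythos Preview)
Your proposal is correct and follows essentially the same approach as the paper: bound $\cL_\lambda$ by $\O(\lambda)$ at a reference time using the kernel-regime lemmas, then propagate via the monotonicity of $\cL_\lambda$ along gradient flow to get $\normtwo{\vtheta}^2 \le \tfrac{2}{\lambda}\cL_\lambda = \O(1)$. The paper anchors directly at $\tfrac{1}{\lambda}\log\alpha$ rather than at $t_0 = \tfrac{1}{\lambda}(\log\alpha + \Delta T)$, so it does not need your separate case analysis on the sign of $\Delta T$; your version is slightly more explicit about why the anchor point lies inside the regime where \Cref{lm:wd-sqr-case-L-descent,lm:wd-sqr-case-move-bound} apply, but otherwise the two arguments are the same.
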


\begin{proof}
    We know from \Cref{lm:wd-sqr-case-move-bound,lm:wd-sqr-case-L-descent} that 
    \begin{equation}
        \notag
        \cL\left(\vtheta\left(\frac{1}{\lambda} \log \alpha; \alpha \vthetauinit\right)\right) \leq \max\left\{\frac{1}{n}\normtwo{y}^2\exp\left(-\frac{\nuntk A}{16n(L-1)}\right), \frac{16nL^2\normtwo{\vy}^2\lambda^2}{\nuntk^2}\right\} \leq \O(\lambda^2).
    \end{equation}
    Moreover, by \Cref{lm:wd-sqr-case-move-bound} we know that $\normtwo{\vtheta\left(\frac{1}{\lambda} \log \alpha;\alpha \vthetauinit\right)} = \O(1)$, so that
    \begin{equation}
        \notag
        \cL_{\lambda}\left(\vtheta\left(\frac{1}{\lambda} \log \alpha; \alpha \vthetauinit\right)\right) = \cL\left(\vtheta\left(\frac{1}{\lambda} \log \alpha; \alpha \vthetauinit\right)\right) + \frac{\lambda}{2}\lnormtwo{\vtheta\left(\frac{1}{\lambda} \log \alpha; \alpha \vthetauinit\right)}^2 = \O(\lambda).
    \end{equation}
    Since $\left\{\vtheta\left(t; \alpha \vthetauinit\right)\right\}_{t\geq 0}$ is the trajectory of GF on $\cL_{\lambda}$, we know that for $\forall t \geq \frac{1}{\lambda} \log \alpha$, we have $\cL_{\lambda}\left(\vtheta\left(t; \alpha \vthetauinit\right)\right) = \O(\lambda)$ as well. This implies that $\normtwo{\vtheta\left(t; \alpha \vthetauinit\right)} = \O(1)$ as desired.
\end{proof}

Now we prove \Cref{thm:wd-sqr-rich}.

\begin{proof}
    By \Cref{lm:wd-case-move-bound}, we know that for $t=\frac{1}{\lambda}\log\alpha$, we have $\normtwo{\vtheta(t;\alpha \vthetauinit)-\vthetauinit}=\O(1) \Rightarrow \normtwo{\vtheta(t; \alpha \vthetauinit)}=\O(1)$. Since
\begin{equation}
    \notag
    \cL_{\lambda}\left(\vtheta\left(\frac{1}{\lambda}\log\alpha; \alpha \vthetauinit\right)\right)-\cL_{\lambda}\left(\vtheta\left(T_c^+(\alpha); \alpha \vthetauinit\right)\right) = \int_{\frac{1}{\lambda}\log\alpha}^{T_c^+(\alpha)} \normtwo{\nabla\cL_{\lambda}(\vtheta(t;\alpha\vthetauinit))}^2 \dd t
\end{equation}
and $0\leq \cL_{\lambda}\left(\vtheta\left(T_c^+(\alpha); \alpha \vthetauinit\right)\right), \cL_{\lambda}\left(\vtheta\left(\frac{1}{\lambda}\log\alpha; \alpha \vthetauinit\right)\right) \leq \O(\lambda)$ by \Cref{lm:wd-sqr-case-L-descent}, we deduce that there exists $t_{\alpha} \in \left[\frac{1}{\lambda}\log\alpha, T_c^+(\alpha)\right]$ such that $\lnormtwo{\nabla\cL_{\lambda}(\vtheta(t;\alpha\vthetauinit))} \leq \lO{\sqrt{\lambda}{\frac{c}{\lambda}\log\alpha}}=\O(\lambda (\log \frac{1}{\lambda})^{-1/2})$.

For any sequence $\{\alpha_k\}_{k\geq 1}$ with $\alpha_k\to +\infty$, we choose $t_k=t_{\alpha_k}$. Let $\left\{\vtheta(t_{i_k}; \alpha_{i_k} \vthetauinit): k \ge 1\right\}$ be any convergent subsequence of $\left\{\vtheta(t_k; \alpha_k \vthetauinit): k \ge 1\right\}$, then we have that $\lim_{k\to +\infty}\normtwo{\nabla\cL_{\lambda}\left(\vtheta(t_{i_k};\alpha\vthetauinit)\right)} = 0$, \emph{i.e.,} 
\begin{equation}
    \label{eq:wd-sqr-gradient-vanish}
    \lim_{k\to +\infty}\lnormtwo{2\sum_{i=1}^n s_i\nabla f_i(\vtheta(t_{i_k}; \alpha_{i_k} \vthetauinit))+\lambda \vtheta(t_{i_k}; \alpha_{i_k} \vthetauinit)} = 0.
\end{equation}
By the choice of $\{i_k\}_{k\geq 1}$ we know that $\left\{\vtheta(t_{i_k}; \alpha_{i_k} \vthetauinit)\right\}_{k\geq 1}$ converges to a point $\vtheta^*\in \R^d$, so the above implies that
\begin{equation}
    \notag
    2\sum_{i=1}^n s_i\nabla f_i(\vtheta^*) + \lambda \vtheta^* = 0, 
\end{equation}
thus $\vtheta^*$ is a KKT point of \Cref{eq:l2-norm-problem}, as desired.
\end{proof}

\newpage
\section{Generalization Analyses for Linear Classification} \label{sec:appendix_diag}

In this section, we provide supplementary generalization analyses for the linear classification experiments in~\Cref{subsec:diagonal_linear_networks}.
We follow the notations in~\Cref{sec:theoretical_setup}.
Let the input distribution $\cDX$,
and the ground-truth label $y^*(\vx)$ be $\sign(\vx^\top \vw^*)$ for an unknown $\vw^* \in \R^d$. For a training set $S = (\vx_1, \dots, \vx_n)$, we define $y_i := y^*(\vx_i)$ for all $i \in [n]$.

\subsection{Preliminaries}

We base our generalization analyses on Rademacher complexity.
\begin{definition}
    Given a family of functions $\cF$ mapping from $\R^d$ to $\R$, the empirical Rademacher complexity of $\cF$ for a set of samples $S = (\vx_1, \dots, \vx_n)$ is defined as
    \begin{equation}
        \hat{\Rad}_{S}(\cF) = \E_{\sigma_1, \dots, \sigma_n \sim \unif\{\pm 1\}}\left[ \sup_{f \in \cF} \frac{1}{n} \sum_{i=1}^{n} \sigma_i f(\vx_i)\right].
    \end{equation}
\end{definition}
Rademacher complexity is useful in deriving margin-based generalization bounds for classification. The following bound is standard in the literature~\citet{koltchinskii2002empirical,mohri2018foundations}.
\begin{theorem}[Theorem 5.8, \citet{mohri2018foundations}] \label{thm:rad-to-test}
    Let $\cF$ be a family of functions mapping from $\R^d \to \R$. $q > 0$. For any $\delta > 0$, with probability at least $1-\delta$ over the random draw of $S = (\vx_1, \dots, \vx_n) \sim \cDX^n$, the following bound for the test error holds for all $f \in \cF$:
    \begin{equation}
        \E_{x \sim \cDX}[\onec_{[y^*(\vx) f(\vx) \le 0]}] \le \frac{1}{n} \sum_{i=1}^{n} \onec_{[y_i f(\vx_i) \le q]} + \frac{2}{q} \hat{\Rad}_{S}(\cF) + 3 \sqrt{\frac{\log(2/\delta)}{2n}}.
    \end{equation}
\end{theorem}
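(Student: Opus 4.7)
The plan is to mirror the standard textbook proof of this margin-based generalization bound, which sandwiches the expected $0$-$1$ error between a Lipschitz ramp surrogate applied to the classifier margins, and then uses Rademacher complexity to control the resulting uniform deviation. Since the statement is attributed to \citet{mohri2018foundations}, my proposal follows the classical three-step recipe: (i) surrogate sandwich, (ii) symmetrization/McDiarmid, (iii) Talagrand contraction.

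\textbf{Step 1 (Ramp surrogate).} I would introduce the ramp loss $\Phi_q : \R \to [0,1]$ defined by $\Phi_q(u) := \min\{1, \max\{0, 1 - u/q\}\}$, which is $1/q$-Lipschitz and satisfies the pointwise sandwich $\onec_{[u \le 0]} \le \Phi_q(u) \le \onec_{[u \le q]}$. Substituting $u = y^*(\vx) f(\vx)$ gives $\onec_{[y^*(\vx) f(\vx) \le 0]} \le \Phi_q(y^*(\vx) f(\vx))$ on the population side, and $\Phi_q(y_i f(\vx_i)) \le \onec_{[y_i f(\vx_i) \le q]}$ on each training sample.

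\textbf{Step 2 (Uniform deviation for the surrogate class).} Let $\cG := \{\vx \mapsto \Phi_q(y^*(\vx) f(\vx)) : f \in \cF\}$, whose members take values in $[0,1]$. Applying McDiarmid's bounded differences inequality to the functional $S \mapsto \sup_{g \in \cG}\{\E_{\vx}[g(\vx)] - \tfrac{1}{n}\sum_i g(\vx_i)\}$ (whose one-sample coordinate change is bounded by $1/n$) yields a one-sided deviation of order $\sqrt{\log(2/\delta)/(2n)}$ with probability $\ge 1-\delta$. The expected supremum is then controlled by the standard symmetrization inequality, bounding it by $2\hat{\Rad}_S(\cG)$. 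Combining gives, uniformly over $f \in \cF$,
\[
\E_{\vx \sim \cDX}[\Phi_q(y^*(\vx) f(\vx))] \le \tfrac{1}{n}\sum_{i=1}^n \Phi_q(y_i f(\vx_i)) + 2\hat{\Rad}_S(\cG) + 3\sqrt{\tfrac{\log(2/\delta)}{2n}}.
\]

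\textbf{Step 3 (Contraction to $\hat{\Rad}_S(\cF)$).} Because $\Phi_q$ is $1/q$-Lipschitz, Talagrand's contraction lemma gives $\hat{\Rad}_S(\cG) \le (1/q)\,\hat{\Rad}_S(\cF)$, once we handle the extra multipliers $y^*(\vx_i) \in \{\pm 1\}$. These are absorbed into the Rademacher variables using the symmetry $\sigma_i \stackrel{d}{=} y^*(\vx_i)\sigma_i$ conditional on $\vx_i$, so the $y^*$ factors vanish without changing the distribution of the Rademacher average. Chaining the sandwich bounds from Step 1 with the results of Steps 2 and 3 yields the claimed inequality.

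The only subtlety worth flagging is the commutation of the $y^*(\vx_i)$ multipliers with contraction, since Talagrand's lemma is usually stated without them; the symmetry trick above resolves it cleanly. All other pieces (McDiarmid, symmetrization, contraction) are off-the-shelf.
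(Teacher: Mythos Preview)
The paper does not prove this theorem; it is quoted verbatim as a cited preliminary (Theorem~5.8 of \citet{mohri2018foundations}) and used as a black box in \Cref{thm:gen-sparse-l1,thm:gen-sparse-l2}. Your three-step argument (ramp surrogate, McDiarmid plus symmetrization, Talagrand contraction with the $y^*(\vx_i)$ signs absorbed into the Rademacher variables) is exactly the standard textbook proof and is correct.

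One small point of exposition: in Step~2 you write that symmetrization bounds the expected supremum by $2\hat{\Rad}_S(\cG)$, but symmetrization only gives $2\,\E_S[\hat{\Rad}_S(\cG)]$; passing to the \emph{empirical} Rademacher complexity $\hat{\Rad}_S(\cG)$ requires a second McDiarmid application (this is precisely where the constant $3$ and the $\log(2/\delta)$ in place of $\log(1/\delta)$ come from, via a union bound with $\delta/2$ each). Your final inequality is right, but the intermediate sentence should acknowledge this second concentration step.
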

Let $\cF_p = \{ \vx \mapsto \inner{\vw}{\vx} : \norm{\vw}_p \le 1 \}$ be a family of linear functions on $\R^d$ with bounded weight in $L^p$-norm.
The following theorem from~\citet{awasthi2020rademacher} bounds the empirical Rademacher complexity of $\cF_p$.
\begin{theorem}[Direct Corollary of Corollary 3 in~\citet{awasthi2020rademacher}] \label{thm:rad-lp}
    For a set of samples $S = (\vx_1, \dots, \vx_n)$, the empirical Rademacher complexity of $\cF_1$ and $\cF_2$ is bounded as
    \begin{align}
        \hat{\Rad}_{S}(\cF_1) &\le \frac{1}{n} \sqrt{2 \log (2d)} \cdot \left(\sum_{i=1}^{n} \norminf{\vx_i}^2\right)^{1/2}, \\
        \hat{\Rad}_{S}(\cF_2) &\le \frac{1}{n} \left(\sum_{i=1}^{n} \normtwo{\vx_i}^2\right)^{1/2}.
    \end{align}
\end{theorem}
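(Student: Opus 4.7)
The plan is to handle each bound by first converting the supremum inside the Rademacher expectation into a dual norm via linearity, and then bounding that dual norm.

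For the $\cF_2$ bound, I would write
\begin{align*}
    \hat{\Rad}_S(\cF_2)
    = \frac{1}{n}\E_{\sigma}\Bigl[\sup_{\normtwo{\vw}\le 1} \inner{\vw}{\textstyle\sum_i \sigma_i \vx_i}\Bigr]
    = \frac{1}{n}\E_{\sigma}\Bigl[\lnormtwo{\textstyle\sum_i \sigma_i \vx_i}\Bigr],
\end{align*}
using self-duality of $L^2$. Then I would apply Jensen's inequality to pull the expectation inside the square root, and use $\E[\sigma_i\sigma_j] = \onec_{[i=j]}$ to obtain $\E_\sigma[\lnormtwo{\sum_i \sigma_i \vx_i}^2] = \sum_i \normtwo{\vx_i}^2$, yielding the claim.

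For the $\cF_1$ bound, the analogous duality (dual of $L^1$ is $L^\infty$) gives
\begin{align*}
    \hat{\Rad}_S(\cF_1)
    = \frac{1}{n}\E_{\sigma}\Bigl[\norminf{\textstyle\sum_i \sigma_i \vx_i}\Bigr]
    = \frac{1}{n}\E_{\sigma}\Bigl[\max_{k\in[d]}\,\labs{\textstyle\sum_i \sigma_i (x_i)_k}\Bigr].
\end{align*}
For each coordinate $k$, the random variable $Z_k := \sum_i \sigma_i (x_i)_k$ is a Rademacher sum and hence sub-Gaussian with parameter $s_k^2 := \sum_i (x_i)_k^2 \le \sum_i \norminf{\vx_i}^2 =: s^2$. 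I would then invoke the standard maximal inequality $\E[\max_{k\in[d]} |Z_k|] \le \sqrt{2 s^2 \log(2d)}$ (the factor $2d$ accounting for the absolute value via considering $\pm Z_k$), which immediately gives the stated bound.

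Neither step involves a real obstacle: both are textbook computations via duality plus either Jensen (for $L^2$) or a sub-Gaussian maximal inequality (for $L^1$). The only mild subtlety is making sure the $\log(2d)$ rather than $\log(d)$ factor is justified by considering both $Z_k$ and $-Z_k$ in the maximum, which is why the symmetrization argument yields $2d$ terms. Everything else is bookkeeping, which is presumably why the paper defers the derivation to \citet{awasthi2020rademacher} and states the bound as a direct corollary.
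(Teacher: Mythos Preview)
Your proposal is correct and entirely standard: the $\cF_2$ bound via self-duality plus Jensen, and the $\cF_1$ bound via $L^1$/$L^\infty$ duality plus Massart's maximal inequality for sub-Gaussians, are exactly the textbook arguments. The paper does not give its own proof here---it simply cites the bound as a direct corollary of \citet{awasthi2020rademacher}---so there is nothing to compare against beyond noting that your direct derivation accomplishes what the paper defers to the reference.
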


\subsection{Generalization Bounds for Sparse Linear Classification} \label{sec:appendix-gen-sparse}

Let $\cDX$ be the uniform distribution on $\unif\{\pm 1\}^d$. Let $k = \O(1)$ be a positive odd number. We draw $\vwopt$ as follows: randomly draw the first $k$ coordinates from $\unif\{\pm 1\}^k$, and set all the other coordinates to $0$.

\begin{theorem} \label{thm:gen-sparse-l1}
    With probability at least $1-\delta$ over the random draw of the training set $S = (\vx_1, \dots, \vx_n)$, 
    for any linear classifier $f: \R^d \to \R, \vx \mapsto \inner{\vw}{\vx}$ that maximizes the $L^1$-margin $\gamma_1(\vw) := \min_{i \in [n]} \frac{y_i\inner{\vw}{\vx_i}}{\normone{\vw}}$ on $S$, the following bound for the test error holds:
    \begin{equation}
        \E_{x \sim \cDX}[\onec_{[y^*(\vx) f(\vx) \le 0]}] \le 4k \cdot \sqrt{\frac{2 \log (2d)}{n}} + 3 \sqrt{\frac{\log(2/\delta)}{2n}}.
    \end{equation}
\end{theorem}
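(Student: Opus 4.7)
The plan is to apply the standard margin-based Rademacher-complexity bound (Theorem C.2) to the function class $\cF_1$, after first certifying that the maximum $L^1$-margin on $S$ is at least $1/k$ with probability one. The certificate comes from the ground truth itself: I will show that $\gamma_1(\vwopt) \ge 1/k$, and then use the definition of ``max $L^1$-margin classifier'' to transfer this lower bound to $\vw$.

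First I would observe that since $\vwopt$ is $k$-sparse with entries in $\{\pm 1\}$, we have $\normone{\vwopt} = k$. For every $\vx_i \in \{\pm 1\}^d$, the inner product $\inner{\vwopt}{\vx_i}$ is a sum of $k$ terms each in $\{\pm 1\}$. Because $k$ is odd, this sum is a nonzero odd integer, so $\abs{\inner{\vwopt}{\vx_i}} \ge 1$ and consequently $y_i\inner{\vwopt}{\vx_i} \ge 1$ by definition of $y_i = \sign(\inner{\vwopt}{\vx_i})$. This gives $\gamma_1(\vwopt) \ge 1/k$, and therefore the $L^1$-margin maximizer satisfies $\gamma_1(\vw) \ge 1/k$ as well.

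Next I would normalize so that $\normone{\vw} = 1$, which makes $f \in \cF_1$ and forces $y_i f(\vx_i) \ge 1/k$ for every training sample. I then apply Theorem C.2 to the class $\cF_1$ with margin parameter $q = 1/(2k)$; because every training margin is at least $1/k > q$, the empirical margin-error term vanishes. For the Rademacher complexity, Theorem C.3 gives
\begin{equation*}
\hat{\Rad}_S(\cF_1) \le \frac{\sqrt{2\log(2d)}}{n} \left(\sum_{i=1}^n \norminf{\vx_i}^2\right)^{1/2} = \sqrt{\frac{2\log(2d)}{n}},
\end{equation*}
where the last equality uses $\norminf{\vx_i} = 1$ since $\vx_i \in \{\pm 1\}^d$. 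Plugging these into Theorem C.2 yields
\begin{equation*}
\E_{\vx \sim \cDX}[\onec_{[y^*(\vx) f(\vx) \le 0]}] \le 0 + \frac{2}{1/(2k)} \cdot \sqrt{\frac{2\log(2d)}{n}} + 3\sqrt{\frac{\log(2/\delta)}{2n}} = 4k\sqrt{\frac{2\log(2d)}{n}} + 3\sqrt{\frac{\log(2/\delta)}{2n}},
\end{equation*}
as claimed.

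This argument is essentially routine once the parity observation in the first step is made; that parity fact is the only substantive input and it is where the oddness assumption on $k$ enters. No step is a real obstacle — the only thing to be careful about is the normalization of $\vw$ and the choice of $q$ (strictly smaller than the training margin) so that the empirical margin-error term is exactly zero and the final constant matches $4k$.
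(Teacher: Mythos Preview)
Your proposal is correct and follows essentially the same approach as the paper: normalize to $\normone{\vw}=1$, use $\gamma_1(\vw)\ge\gamma_1(\vwopt)\ge 1/k$ to kill the empirical margin-error term with $q=1/(2k)$, and combine the Rademacher bound for $\cF_1$ with $\norminf{\vx_i}=1$. The only difference is that you spell out the parity argument for $\gamma_1(\vwopt)\ge 1/k$ explicitly, whereas the paper simply asserts $\gamma_1(\vwopt)=1/k$.
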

\begin{proof}
    It suffices to consider the case where $\normone{\vw} = 1$ because rescaling $\vw$ does not change the test error.
    Since $\vw \in \argmax \gamma_1(\vw)$,
    $\gamma_1(\vw) \ge \gamma_1(\vwopt) = \frac{1}{k}$. By defintion, this implies $y_i f(\vx_i) \ge \frac{1}{k}$.
    Letting $q := \frac{1}{2k}$ and applying~\Cref{thm:rad-to-test}, we have
    \begin{equation} \label{eq:gen-sparse-l1-a}
        \E_{x \sim \cDX}[\onec_{[y^*(\vx) f(\vx) \le 0]}] \le 4k \cdot \hat{\Rad}_{S}(\cF_1) + 3 \sqrt{\frac{\log(2/\delta)}{2n}}.
    \end{equation}
    By~\Cref{thm:rad-lp}, $\hat{\Rad}_{S}(\cF_1) \le \frac{1}{n} \sqrt{2 \log(2d)} \cdot \sqrt{n} = \sqrt{\frac{2 \log (2d)}{n}}$.
    Combining this with~\eqref{eq:gen-sparse-l1-a} completes the proof.
\end{proof}

\begin{theorem} \label{thm:gen-sparse-l2}
    With probability at least $1-\delta$ over the random draw of the training set $S = (\vx_1, \dots, \vx_n)$, 
    for any linear classifier $f: \R^d \to \R, \vx \mapsto \inner{\vw}{\vx}$ that maximizes the $L^2$-margin $\gamma_2(\vw) := \min_{i \in [n]} \frac{y_i\inner{\vw}{\vx_i}}{\normtwo{\vw}}$ on $S$, the following bound for the test error holds:
    \begin{equation}
        \E_{x \sim \cDX}[\onec_{[y^*(\vx) f(\vx) \le 0]}] \le 4 \sqrt{\frac{kd}{n}} + 3 \sqrt{\frac{\log(2/\delta)}{2n}}.
    \end{equation}
\end{theorem}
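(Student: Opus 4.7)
The plan is to parallel the $L^1$ proof of Theorem C.3 by combining a lower bound on the achievable $L^2$-margin on $S$ with the Rademacher complexity bound on $\cF_2$ from Theorem C.2, and then plugging into the margin-based generalization bound of Theorem C.1.

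First, I will exhibit a reference classifier to lower bound the max $L^2$-margin. Consider $\vw_0 := \vwopt/\sqrt{k}$; it has unit $L^2$-norm since $\vwopt \in \{\pm 1, 0\}^d$ has exactly $k$ nonzero entries. For any $\vx_i \in \{\pm 1\}^d$, the inner product $\inner{\vwopt}{\vx_i}$ is a sum of $k$ terms in $\{\pm 1\}$, hence an odd integer (using that $k$ is odd), so $|\inner{\vwopt}{\vx_i}| \ge 1$. Since $y_i = \sign(\inner{\vwopt}{\vx_i})$, we get $y_i\inner{\vw_0}{\vx_i} \ge 1/\sqrt{k}$, so $\gamma_2(\vw_0) \ge 1/\sqrt{k}$. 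Any max $L^2$-margin classifier $\vw$ (normalized to $\normtwo{\vw} = 1$, which does not affect test error) therefore satisfies $y_i\inner{\vw}{\vx_i} \ge 1/\sqrt{k}$ for all $i \in [n]$.

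Second, I will bound the empirical Rademacher complexity. Since $\vx_i \in \{\pm 1\}^d$, we have $\normtwo{\vx_i}^2 = d$, and Theorem C.2 gives $\hat{\Rad}_S(\cF_2) \le \frac{1}{n}\sqrt{nd} = \sqrt{d/n}$.

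Finally, I will apply Theorem C.1 with $q := 1/(2\sqrt{k})$. Because $y_i f(\vx_i) \ge 1/\sqrt{k} > q$ for every training point, the empirical margin indicator sum vanishes, and we obtain
\begin{equation*}
\E_{\vx \sim \cDX}[\onec_{[y^*(\vx) f(\vx) \le 0]}] \le \frac{2}{q}\hat{\Rad}_S(\cF_2) + 3\sqrt{\frac{\log(2/\delta)}{2n}} \le 4\sqrt{k}\cdot\sqrt{\frac{d}{n}} + 3\sqrt{\frac{\log(2/\delta)}{2n}},
\end{equation*}
which is the desired bound. The argument is essentially routine; the only ingredient that is specific to the sparse setting, as opposed to the $L^1$ counterpart, is the observation that the ambient Rademacher complexity carries a $\sqrt{d}$ rather than $\sqrt{\log d}$ factor, which is what leads to the exponentially worse sample complexity and thereby separates the two regimes. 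No real obstacles are expected.
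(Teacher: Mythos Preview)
Your proposal is correct and follows essentially the same approach as the paper's proof: normalize to $\normtwo{\vw}=1$, lower bound the max $L^2$-margin by $\gamma_2(\vwopt)=1/\sqrt{k}$, take $q=1/(2\sqrt{k})$ in Theorem~C.1, and bound $\hat{\Rad}_S(\cF_2)\le \sqrt{d/n}$ via Theorem~C.2. If anything, your write-up is slightly more explicit in justifying why $|\inner{\vwopt}{\vx_i}|\ge 1$ using that $k$ is odd.
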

\begin{proof}
    It suffices to consider the case where $\normtwo{\vw} = 1$ because rescaling $\vw$ does not change the test error.
    Since $\vw \in \argmax \gamma_2(\vw)$,
    $\gamma_2(\vw) \ge \gamma_2(\vwopt) = \frac{1}{\sqrt{k}}$. By defintion, this implies $y_i f(\vx_i) \ge \frac{1}{\sqrt{k}}$.
    Letting $q := \frac{1}{2\sqrt{k}}$ and applying~\Cref{thm:rad-to-test}, we have
    \begin{equation} \label{eq:gen-sparse-l1-a}
        \E_{x \sim \cDX}[\onec_{[y^*(\vx) f(\vx) \le 0]}] \le 4\sqrt{k} \cdot \hat{\Rad}_{S}(\cF_2) + 3 \sqrt{\frac{\log(2/\delta)}{2n}}.
    \end{equation}
    By~\Cref{thm:rad-lp}, $\hat{\Rad}_{S}(\cF_2) \le \frac{1}{n} \cdot \sqrt{nd} = \sqrt{\frac{d}{n}}$.
    Combining this with~\eqref{eq:gen-sparse-l1-a} completes the proof.
\end{proof}

\newpage
\section{Proofs for Matrix Completion}

In this section, we give the detailed proof of results in \Cref{sec:mc}.

Let $\Omega=\left\{\vx_k=(i_k,j_k):1\leq k\leq n\right\}$ and we have $n$ observations $\bm{P}_{\vx_k} = \bm{e}_{i_k}\bm{e}_{j_k}^\top, 1\leq k\leq n$. For convenience, we use $\bm{P}_k$ to denote $\bm{P}_{\vx_k}$. Define $\vtheta_t=(\mU_t,\mV_t)$, $\bm{W}_t=\bm{U}_t\bm{U}_t^\top-\bm{V}_t\bm{V}_t^\top$ and $f_i(\vtheta) = \left\langle \bm{W}_{t},\bm{P}_i\right\rangle$, and our goal is to minimize the function \Cref{eq:mc-loss}. Since $\mW_t$ is symmetric, it is more convenient to replace $\mP_i$ with $\half\left(\mP_i+\mP_i^\top\right)$. From now on, we let $\mP_i=\half\left(\bm{e}_{i_i}\bm{e}_{j_i}^\top+\bm{e}_{j_i}\bm{e}_{i_i}^\top\right)$.

The key step is to show that the loss $\cL$ defined in \Cref{eq:mc-loss} satisfies the following two properties:

\begin{property}
\label{app:no-spurious}
    All local minima of $\cL$ are global.
\end{property}

\begin{property}
\label{app:strict-saddle}
    At any saddle point $(\bm{U}_s,\bm{V}_s)$ of $\cL$, there is a direction $\left(\bm{\mathcal{E}}_{\bm{U}},\bm{\mathcal{E}}_{\bm{V}}\right)$ such that
    \begin{equation}
        \notag
        \vec{\bm{\mathcal{E}}_{\bm{U}},\bm{\mathcal{E}}_{\bm{V}}}^\top \nabla^2 \cL(\bm{U}_s,\bm{V}_s) \vec{\bm{\mathcal{E}}_{\bm{U}},\bm{\mathcal{E}}_{\bm{V}}} < 0.
    \end{equation}
\end{property}

Given these two properties, we can deduce that GF with random initialization can converge to global minimizers almost surely.

\begin{theorem}
\label{thm-landscape}
    The function $\cL$ satisfies \Cref{app:no-spurious,app:strict-saddle}.
\end{theorem}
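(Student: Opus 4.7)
The plan is to reduce both properties to a single spectral dichotomy for the residual matrix $\mA_s := \nabla_{\mW} g(\mW_s) = \frac{2}{n}\sum_{i=1}^{n}(\langle\mP_i,\mW_s\rangle-y_i)\mP_i$ evaluated at a critical point $(\mU_s,\mV_s)$, where $\mW_s := \mU_s\mU_s^\top-\mV_s\mV_s^\top$ and $g(\mW) := \frac{1}{n}\sum_i(\langle\mP_i,\mW\rangle-y_i)^2$. First I would differentiate $\cL = g(\mW) + \frac{\lambda}{2}(\|\mU\|_F^2+\|\mV\|_F^2)$ through the chain rule to obtain the stationarity conditions $(2\mA_s+\lambda\mI)\mU_s=\vzero$ and $(-2\mA_s+\lambda\mI)\mV_s=\vzero$, so that $\mA_s$ acts as $-\tfrac{\lambda}{2}\mI$ on $\mathrm{range}(\mU_s)$ and as $+\tfrac{\lambda}{2}\mI$ on $\mathrm{range}(\mV_s)$, and by symmetry of $\mA_s$ these two ranges are automatically orthogonal. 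Two consequences will be used throughout: (i) if $\rank(\mU_s)=d$ then $\mA_s=-\tfrac{\lambda}{2}\mI$, and similarly for $\mV_s$, so $\|\mA_s\|_{\mathrm{op}}>\lambda/2$ forces both $\rank(\mU_s)<d$ and $\rank(\mV_s)<d$; (ii) since $\mathrm{range}(\mU_s)\perp\mathrm{range}(\mV_s)$, the eigenvalues of $\mW_s$ are precisely $\{\sigma_j(\mU_s)^2\}\cup\{-\sigma_j(\mV_s)^2\}$, which yields the identity $\|\mW_s\|_* = \|\mU_s\|_F^2+\|\mV_s\|_F^2$.

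\emph{Negative curvature when $\|\mA_s\|_{\mathrm{op}}>\lambda/2$.} The next step is a Taylor expansion of $\cL$ at $(\mU_s,\mV_s)$, which gives the Hessian quadratic form along $(\mE_{\mU},\mE_{\mV})$ as
\begin{equation*}
2\tr(\mE_{\mU}^\top\mA_s\mE_{\mU}) - 2\tr(\mE_{\mV}^\top\mA_s\mE_{\mV}) + \langle\nabla^2 g[\Delta_1],\Delta_1\rangle + \lambda\big(\|\mE_{\mU}\|_F^2+\|\mE_{\mV}\|_F^2\big),
\end{equation*}
with $\Delta_1 := \mE_{\mU}\mU_s^\top+\mU_s\mE_{\mU}^\top-\mE_{\mV}\mV_s^\top-\mV_s\mE_{\mV}^\top$; the third term is non-negative because $\langle\nabla^2 g[\mM],\mM\rangle = \frac{2}{n}\sum_i\langle\mP_i,\mM\rangle^2$. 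I would pick a unit eigenvector $\vv$ of $\mA_s$ with eigenvalue $\mu<-\lambda/2$ (which exists in the assumed case), use consequence (i) to get $\ker(\mU_s)\neq\{\vzero\}$, and select a unit $\vu\in\ker(\mU_s)$. The rank-one perturbation $\mE_{\mU}:=\vv\vu^\top$, $\mE_{\mV}:=\vzero$ then kills $\Delta_1 = \vv(\mU_s\vu)^\top+(\mU_s\vu)\vv^\top = \vzero$ exactly, collapsing the quadratic form to $2(\mu+\lambda/2)<0$. A symmetric construction using $\mE_{\mV}$ handles $\lambda_{\max}(\mA_s)>\lambda/2$.

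\emph{Dual certificate and assembly.} In the complementary case $\|\mA_s\|_{\mathrm{op}}\le\lambda/2$, the plan is to compare against the convex relaxation $\min_{\mW=\mW^\top} g(\mW)+\tfrac{\lambda}{2}\|\mW\|_*$, whose value matches $\min\cL$ via the variational identity $\|\mW\|_* = \inf\{\|\mU\|_F^2+\|\mV\|_F^2 : \mW=\mU\mU^\top-\mV\mV^\top\}$, attained by the symmetric eigen-decomposition of $\mW$ in $d\times d$. Using the stationarity equations, the symmetric matrix $\mS := -\tfrac{2}{\lambda}\mA_s$ acts as $+\mI$ on $\mathrm{range}(\mU_s)$, as $-\mI$ on $\mathrm{range}(\mV_s)$, and has operator norm $\le 1$ on $(\mathrm{range}(\mU_s)\oplus\mathrm{range}(\mV_s))^{\perp}$, so $\mS\in\partial\|\mW_s\|_*$; the KKT condition $\nabla g(\mW_s)+\tfrac{\lambda}{2}\mS=\vzero$ holds, $\mW_s$ is a global minimizer of the convex program, and by consequence (ii) $(\mU_s,\mV_s)$ attains $\min\cL$. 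Assembling the two cases, every critical point either satisfies $\|\mA_s\|_{\mathrm{op}}>\lambda/2$ and is a strict saddle (establishing \Cref{app:strict-saddle} and in particular ruling out spurious local minima), or satisfies $\|\mA_s\|_{\mathrm{op}}\le\lambda/2$ and is a global minimum (establishing \Cref{app:no-spurious}).

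\emph{Main obstacle.} The hardest part will be the negative-curvature construction: a naive perturbation of a single column of $\mU_s$ produces a cross term $\Delta_1\ne\vzero$ whose non-negative quadratic contribution can cancel the gain $\mu+\lambda/2$. The escape is the rank-one direction $\vv\vu^\top$ with $\vu\in\ker(\mU_s)$, which forces $\Delta_1$ to vanish identically; verifying that such a $\vu$ exists precisely when the spectral gap is violated---via consequence (i), which pins $\mA_s=-\tfrac{\lambda}{2}\mI$ the moment $\mU_s$ has full rank---is the key algebraic step that lets the dichotomy close. A secondary bookkeeping issue is the factor of $\tfrac{\lambda}{2}$ (rather than $\lambda$) in the effective nuclear-norm penalty induced by $\tfrac{\lambda}{2}(\|\mU\|_F^2+\|\mV\|_F^2)$; this needs to be tracked carefully so that $-\tfrac{2}{\lambda}\mA_s$ is the correctly scaled dual certificate aligned with the stationarity equations.
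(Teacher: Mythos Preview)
Your proposal is correct and follows essentially the same approach as the paper. Both arguments hinge on the same three ingredients: (a) the stationarity equations $(2\mA_s+\lambda\mI)\mU_s=0$, $(-2\mA_s+\lambda\mI)\mV_s=0$; (b) rank-one perturbations $\vv\vu^\top$ with $\vu$ in the kernel of $\mU_s$ (resp.\ $\mV_s$) to probe the Hessian without activating the non-negative $\langle\nabla^2 g[\Delta_1],\Delta_1\rangle$ term; and (c) the dual certificate $-\tfrac{2}{\lambda}\mA_s\in\partial\|\mW_s\|_*$ together with the variational identity $\|\mW\|_*=\inf\{\|\mU\|_F^2+\|\mV\|_F^2\}$.

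The only organizational difference is that the paper packages the argument as ``zero gradient $+$ PSD Hessian $\Rightarrow$ global minimum'' (deriving $-\lambda\mI\preceq\mM\preceq\lambda\mI$ from the PSD Hessian via the same kernel perturbations, then building the certificate through an explicit spectral decomposition of $\mM$), whereas you state the contrapositive as a clean spectral dichotomy on $\|\mA_s\|_{\mathrm{op}}$ versus $\lambda/2$. Your derivation of $\|\mW_s\|_*=\|\mU_s\|_F^2+\|\mV_s\|_F^2$ directly from the orthogonality $\mathrm{range}(\mU_s)\perp\mathrm{range}(\mV_s)$ (which follows from stationarity alone, since the two ranges sit in distinct eigenspaces of the symmetric $\mA_s$) is slightly more streamlined than the paper's case-by-case verification, but the content is the same.
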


\begin{proof}
    Recall that
    \begin{equation}
        \notag
        \cL(\bm{U},\bm{V}) = \frac{1}{n}\sum_{i=1}^n \left( \left\langle \bm{P}_i,\bm{U}\bm{U}^{\top}-\bm{V}\bm{V}^{\top}\right\rangle - y_i^*\right)^2 + \frac{\lambda}{2} \tr{(\bm{U}\bm{U}^\top+\bm{V}\bm{V}^\top)}.
    \end{equation}
    Define 
    \begin{equation}
        \label{eq:mc-objective-W}
        \hat{\cL}(\bm{W}) = \frac{1}{n}\sum_{i=1}^n \left( \left\langle \bm{P}_i,\bm{W}\right\rangle - y_i^*\right)^2 + \frac{\lambda}{2} \|\bm{W}\|_{*},
    \end{equation}
    then we have $\cL(\bm{U},\bm{V}) \geq  \hat{\cL}\left(\bm{U}\bm{U}^\top-\bm{V}\bm{V}^\top\right)$. Since $\hat{L}$ is convex, $\bm{W}$ is a global minimizer of $\hat{\cL}$ if and only if 
    \begin{equation}
        \label{optimal-hatL}
        0 \in \partial \nabla\hat{\cL}(\bm{W}) = \frac{2}{n}\sum_{i=1}^n \left(\langle \bm{P}_i,\bm{W}\rangle - y_i^*\right)\bm{P}_i + \frac{\lambda}{2} \partial \|\bm{W}\|_{*}.
    \end{equation}
    Moreover, for any $\bm{W} \in \R^{d\times d}$ it holds that
    \begin{equation}
        \label{thm-landscape:hatL-prop}
        \min_{\bm{U}\bm{U}^\top-\bm{V}\bm{V}^\top=\bm{W}} \cL(\bm{U},\bm{V}) = \hat{L}(\bm{W}).
    \end{equation}
    now consider some $(\bm{U}_*,\bm{V}_*)$ such that 
    \begin{equation}
        \notag
        \nabla \cL(\bm{U}_*,\bm{V}_*) = 0 \quad \text{and} \quad \nabla^2 \cL(\bm{U}_*,\bm{V}_*) \succeq \bm{0}.
    \end{equation}
    To prove the theorem's statement, we only need to show that $(\bm{U}_*,\bm{V}_*)$ is a global minimizer of $\cL$. The subsequent proof is organized into two parts:
    \\
    
    \textbf{Claim 1.} $\bm{W}^*=\bm{U}_*\bm{U}_*^\top-\bm{V}_*\bm{V}_*^\top$ is a global minimizer of $\hat{\cL}$.
    \\

    \textit{Proof of Claim 1:} We check that the condition \Cref{optimal-hatL} holds at $\bm{W}^*$. First the first order condition imply that
    \begin{equation}
        \label{thm-landscape:grad-u}
        \nabla_{\bm{U}} \cL(\bm{U}_*,\bm{V}_*) = \left(\frac{4}{n}\sum_{i=1}^n \left(\langle \bm{P}_i,\bm{W}^*\rangle - y_i^*\right)\bm{P}_i + \lambda\bm{I}\right)\bm{U}_* = 0.
    \end{equation}
    Similarly,
    \begin{equation}
        \label{thm-landscape:grad-v}
        \left(-\frac{4}{n}\sum_{i=1}^n \left(\langle \bm{P}_i,\bm{W}^*\rangle - y_i^*\right)\bm{P}_i + \lambda\bm{I}\right)\bm{V}_* = 0.
    \end{equation}
    Second, for any $\bm{\mathcal{E}} \in \R^{d\times d}$ we have
    \begin{equation}
        \label{eq:mc-hessian-1}
        \begin{aligned}
            &\quad \left(\nabla_{\bm{U}}^2 \cL(\bm{U}_*,\bm{V}_*)\right) \bm{\mathcal{E}} = \lim_{t\to 0} \frac{1}{t}\left( \nabla_{\bm{U}} \cL(\bm{U}_*+ t\bm{\mathcal{E}},\bm{V}_*)- \nabla_{\bm{U}} \cL(\bm{U}_*,\bm{V}_*)\right) \\
            &= \lim_{t\to 0}\frac{1}{t} \frac{4}{n}\sum_{i=1}^n \Big[ \left(\left\langle \left(\bm{U}_*+t\bm{\mathcal{E}}\right) \left(\bm{U}_*+t\bm{\mathcal{E}}\right)^\top -\bm{V}_*\bm{V}_*^\top, \bm{P}_i \right\rangle - y_i^*\right)\bm{P}_i(\bm{U}_*+t\bm{\mathcal{E}}) \\
            &\quad -\left(\langle \bm{P}_i,\bm{W}^*\rangle - y_i^*\right)\bm{P}_i \bm{U}_* \Big] + \lambda \bm{\mathcal{E}}\\
            &= \frac{4}{n}\sum_{i=1}^n \left[ \left(\langle \bm{P}_i,\bm{W}^*\rangle - y_i^*\right)\bm{P}_i\bm{\mathcal{E}} + 2\left\langle \bm{U}_*\bm{\mathcal{E}}^\top, \bm{P}_i\right\rangle \bm{P}_i\bm{U}_* \right] + \lambda\bm{\mathcal{E}}.
        \end{aligned}
    \end{equation}
    where we use $\left\langle \bm{U}_*\bm{\mathcal{E}}^\top + \bm{\mathcal{E}}\bm{U}_*^\top, \bm{P}_i\right\rangle = 2\left\langle \bm{U}_*\bm{\mathcal{E}}^\top, \bm{P}_i\right\rangle$ since $\bm{P}_i$ is symmetric.
    Similarly, 
    \begin{equation}
        \label{eq:mc-hessian-2}
        \left(\nabla_{\bm{V}}^2 \cL(\bm{U}_*,\bm{V}_*)\right) \bm{\mathcal{E}} = \frac{4}{n}\sum_{i=1}^n \left[ -\left(\langle \bm{P}_i,\bm{W}^*\rangle - y_i^*\right)\bm{P}_i\bm{\mathcal{E}} + 2\left\langle \bm{V}_*\bm{\mathcal{E}}^\top, \bm{P}_i\right\rangle \bm{P}_i\bm{V}_* \right] + \lambda\bm{\mathcal{E}}
    \end{equation}
    and
    \begin{equation}
        \label{eq:mc-hessian-3}
        \nabla_{\bm{V}}\nabla_{\bm{U}}\cL(\bm{U}_*,\bm{V}_*) \bm{\mathcal{E}}
        = - \frac{8}{n}\sum_{i=1}^n \left\langle \bm{V}_*\bm{\mathcal{E}}^\top, \bm{P}_i\right\rangle \bm{P}_i \bm{U}_*
    \end{equation}
    Let $\bm{M}=\frac{4}{n}\sum_{i=1}^n \left(\langle \bm{P}_i,\bm{W}^*\rangle - y_i^*\right)\bm{P}_i$.
    Since $\nabla^2 \cL(\bm{U}_*,\bm{V}_*) \succeq \bm{0}$, for any $\bm{\mathcal{E}}_{\bm{U}},\bm{\mathcal{E}}_{\bm{V}}\in\R^{d\times d}$ we have
    \begin{equation}
        \label{thm-landscape:ineq1}
        \begin{aligned}
            & 0 \leq \left\langle \bm{\mathcal{E}}_{\bm{U}},\left(\nabla_{\bm{U}}^2 \cL(\bm{U}_*,\bm{V}_*)\right) \bm{\mathcal{E}}_{\bm{U}}\right\rangle + \left\langle \bm{\mathcal{E}}_{\bm{V}},\left(\nabla_{\bm{V}}^2 \cL(\bm{U}_*,\bm{V}_*)\right) \bm{\mathcal{E}}_{\bm{V}}\right\rangle + 2\left\langle \bm{\mathcal{E}}_{\bm{U}}, \nabla_{\bm{V}}\nabla_{\bm{U}}\cL(\bm{U}_*,\bm{V}_*) \bm{\mathcal{E}}_{\bm{V}}\right\rangle \\
            &= \left\langle \bm{\mathcal{E}}_{\bm{U}}, \left(\bm{M} + \lambda\bm{I}\right)\bm{\mathcal{E}}_{\bm{U}}\right\rangle + \left\langle \bm{\mathcal{E}}_{\bm{V}}, \left(-\bm{M} + \lambda\bm{I}\right)\bm{\mathcal{E}}_{\bm{V}}\right\rangle
            -\frac{16}{n}\sum_{i=1}^n \left\langle\bm{U}_*\bm{\mathcal{E}}_{\bm{U}}^\top, \bm{P}_i\right\rangle\left\langle\bm{V}_*\bm{\mathcal{E}}_{\bm{V}}^\top, \bm{P}_i\right\rangle 
            \\
            &\quad + \frac{8}{n}\sum_{i=1}^n \left( \left\langle\bm{V}_*\bm{\mathcal{E}}_{\bm{V}}^\top, \bm{P}_i\right\rangle^2 + \left\langle\bm{U}_*\bm{\mathcal{E}}_{\bm{U}}^\top, \bm{P}_i\right\rangle^2\right).
        \end{aligned}
    \end{equation}
    We now consider two cases:
    \begin{itemize}
        \item $\max\left\{\rank{\bm{U}_*},\rank{\bm{V}_*}\right\} = d$. We assume WLOG that $\rank{\bm{U}_*}=d$, then \Cref{thm-landscape:grad-u} immediately implies that $\bm{M}+\lambda\bm{I}=0$. Then $-\bm{M}+\lambda\bm{I}=2\lambda\bm{I}$ and by \Cref{thm-landscape:grad-v} we have $\bm{V}_*=0$. In this case, $\bm{W}^*$ is positive semi-definite, and \Cref{optimal-hatL} holds since $\bm{I}\in\partial \|\bm{W}^*\|_*$ by \Cref{grad-nuclear}.
        \item $\max\left\{\rank{\bm{U}_*},\rank{\bm{V}_*}\right\} < d$. In this case, there exists non-zero vectors $\bm{a},\bm{c}\in\R^d$ such that $\bm{U}_*\bm{a}=\bm{V}_*\bm{c}=0$. For arbitrary vectors $\bm{b},\bm{d}\in\R^d$, we set $\bm{\mathcal{E}}_{\bm{U}}=\bm{b}\bm{a}^\top$ and $\bm{\mathcal{E}}_{\bm{V}}=\bm{d}\bm{c}^\top$ in \Cref{thm-landscape:ineq1}, so that
        \begin{equation}
            \notag
            \|\bm{a}\|^2 \bm{b}^\top\left(\bm{M}+\lambda\bm{I}\right)\bm{b} + \|\bm{c}\|^2 \bm{d}^\top\left(\lambda\bm{I}-\bm{M}\right)\bm{d} \geq 0.
        \end{equation}
        Since $\bm{b}, \bm{d}$ are arbitrarily chosen, the above implies that
        \begin{equation}
            \label{thm-landscape:2psd}
            \bm{M}+\lambda\bm{I}\succeq 0, \quad \text{and}\quad \lambda\bm{I}-\bm{M}\succeq 0.
        \end{equation}
        Since $\bm{M}$ is symmetric, there exists an orthogonal basis $\{\bm{f}_i:i\in[d]\}$ of $\R^d$ and corresponding eigenvalues $\bm{m}_i,i\in[d]$ such that $\bm{M}\bm{f}_i = \bm{m}_i\bm{f}_i$. We partition the set $[d]$ into three subsets:
        \begin{equation}
            \notag
            \mathcal{S}_{-} = \left\{ i\in[d]: \bm{m}_i=-\lambda\right\},\quad \mathcal{S}_{+} = \left\{ i\in[d]: \bm{m}_i=\lambda\right\}, \quad \mathcal{S}_0 = [d] - \mathcal{S}_{-} - \mathcal{S}_{+}.
        \end{equation}
        Since $\left(\bm{M}+\lambda\bm{I}\right)\bm{U}_* = 0$, we have $\bm{U}_*^\top \bm{f}_i = 0$ for all $i \in \mathcal{S}_{+} \cup \mathcal{S}_{0}$. As a result, we can write
        \begin{equation}
            \notag
            \bm{U}_*\bm{U}_*^\top = \sum_{i,j\in\mathcal{S}_{-}} a_{ij}\bm{f}_i\bm{f}_j^\top, \quad a_{ij} \in \R.
        \end{equation}
        Similarly, we can write
        \begin{equation}
            \notag
            \bm{V}_*\bm{V}_*^\top = \sum_{i,j\in\mathcal{S}_{+}} a_{ij}\bm{f}_i\bm{f}_j^\top, \quad a_{ij} \in \R.
        \end{equation}
        Assume WLOG that $\mathcal{S}_{-}=[t]$ and $\mathcal{S}_{+}=\{t+1,t+2,\cdots,s\}$, then
        Hence $\bm{W}^* = \sum_{i,j=1}^t a_{ij}\bm{f}_i\bm{f}_j^\top - \sum_{i,j=t+1}^s a_{ij}\bm{f}_i\bm{f}_j^\top$. note that the matrices $(a_{ij})_{i,j\in\mathcal{S}_{-}}$ and $(a_{ij})_{i,j\in\mathcal{S}_{+}}$ are both positive semi-definite, so there exists $b_i \geq 0, i \in [d]$ and an orthogonal basis $\{\bm{g}_i:i\in[d]\}$ such that
        \begin{equation}
            \label{thm-landscape:def-gi}
            \sum_{i,j=1}^t a_{ij}\bm{f}_i\bm{f}_j^\top = \sum_{i=1}^t b_i\bm{g}_i\bm{g}_i^\top \quad \text{and}\quad \sum_{i,j=t+1}^s a_{ij}\bm{f}_i\bm{f}_j^\top = \sum_{i=t+1}^s b_i\bm{g}_i\bm{g}_i^\top
        \end{equation}
        and $b_i = 0$ and $\bm{g}_i=\bm{f}_i$ for $i>s$. now let 
        \begin{equation}
            \notag
            \bm{G}_1 = \left( \bm{g}_i: i\in [d]\right),\quad \bm{G}_2 = \left( (-1)^{\mathbb{I}\{i>t\}} \bm{g}_i: i\in[d]\right),\quad \text{and} \quad \bm{B} = \diag{|b_i|: i\in[d]},
        \end{equation}
        then both $\bm{G}_1$ and $\bm{G}_2$ are orthonormal matrices and we have $\bm{W}^* = \bm{G}_1\bm{B}\bm{G}_2^\top$. This gives a singular value decomposition of $\bm{W}^*$. By \Cref{grad-nuclear} we can write down the explicit form of $\partial \|\bm{W}^*\|_*$ as follows:
        \begin{equation}
            \notag
            \partial \|\bm{W}^*\|_* = \left\{ \sum_{i\in\mathcal{T}} (-1)^{\mathbb{I}\{i>t\}}\bm{g}_i\bm{g}_i^\top + \bm{E}: \|\bm{E}\|\leq 1 \text{ and } \bm{E}\bm{g}_i = 0, \forall i\in\mathcal{T}\right\}.
        \end{equation}
        Here $\mathcal{T} = \left\{ i\in[d]: b_i \neq 0\right\} \subset [s]$. We choose
        \begin{equation}
            \notag
            \begin{aligned}
                \bm{E} &= -\left(\lambda^{-1}\bm{M} + \sum_{i\in\mathcal{T}} (-1)^{\mathbb{I}\{i>t\}}\bm{g}_i\bm{g}_i^\top\right) \\
                &= -\lambda^{-1}\left( -\lambda\sum_{i=1}^t \bm{f}_i\bm{f}_i^\top + \lambda \sum_{i=t+1}^s \bm{f}_i\bm{f}_i^\top + \sum_{i>s} m_i\bm{f}_i\bm{f}_i^\top\right) - \sum_{i\in\mathcal{T}} (-1)^{\mathbb{I}\{i>t\}}\bm{g}_i\bm{g}_i^\top \\
                &= -\lambda^{-1}\left( -\lambda\sum_{i=1}^t \bm{g}_i\bm{g}_i^\top + \lambda \sum_{i=t+1}^s \bm{g}_i\bm{g}_i^\top + \sum_{i>s} m_i\bm{g}_i\bm{g}_i^\top\right) - \sum_{i\in\mathcal{T}} (-1)^{\mathbb{I}\{i>t\}}\bm{g}_i\bm{g}_i^\top \\
                &= \sum_{i\in [s]-\mathcal{T}} (-1)^{\mathbb{I}\{i>t\}}\bm{g}_i\bm{g}_i^\top - \lambda^{-1}\sum_{i>s} m_i\bm{g}_i\bm{g}_i^\top
            \end{aligned}
        \end{equation}
        where the third equation holds because of the definition \Cref{thm-landscape:def-gi} and $\bm{g}_i=\bm{f}_i$ when $i>s$. Since by \Cref{thm-landscape:2psd} we have $m_i \in [-\lambda,\lambda]$ for all $i\in[d]$, the above expression of $\bm{E}$ immediately implies that $\|\bm{E}\|\leq 1$. Moreover, we obviously have $\bm{E}\bm{g}_i = 0$ when $i\in\mathcal{T}$ (since $\{\bm{g}_i\}$ is orthogonal). Hence, we have $0\in \bm{M}+\lambda\partial \|\bm{W}^*\|$.
    \end{itemize}
    To summarize, we have shown that \Cref{optimal-hatL} always holds at $\bm{W}^*$. Therefore, $\bm{W}^*$ is a global minimizer of $\hat{\cL}$ which concludes the proof of \textbf{Claim 1}.
    \\

    \textbf{Claim 2.} $(\bm{U}_*,\bm{V}_*)$ solves the problem
    \begin{equation}
        \label{thm-landscape:uv-opt}
        \mathrm{minimize} \quad \|\bm{U}\|_F^2 + \|\bm{V}\|_F^2 \quad s.t.\quad \bm{U}\bm{U}^\top - \bm{V}\bm{V}^\top = \bm{W}^*.
    \end{equation}

    \textit{Proof of Claim 2.} We will use the notations in the proof of Claim 1 for convenience. note that 
    \begin{equation}
        \notag
        \|\bm{U}_*\|_F^2 = \tr{\bm{U}_*\bm{U}_*^\top} = \sum_{i=1}^t b_i
    \end{equation}
    and similarly
    \begin{equation}
        \notag
        \|\bm{V}_*\|_F^2 = \sum_{i=t+1}^s b_i.
    \end{equation}
    Thus $\|\bm{U}_*\|_F^2+\|\bm{V}_*\|_F^2 = \sum_{i=1}^s b_i = \|\bm{W}\|_*$. On the other hand, for any $(\bm{U},\bm{V})$ satisfying $\bm{U}\bm{U}^\top-\bm{V}\bm{V}^\top=\bm{W}^*$, we have
    \begin{equation}
        \notag
        \|\bm{W}^*\|_* = \left\|\bm{U}\bm{U}^\top-\bm{V}\bm{V}^\top\right\|_* \leq \left\|\bm{U}\bm{U}^\top\right\|_* + \left\|\bm{V}\bm{V}^\top\right\|_* = \|\bm{U}\|_F^2+\|\bm{V}\|_F^2.
    \end{equation}
    Hence $(\bm{U}_*,\bm{V}_*)$ is a minimizer of \Cref{thm-landscape:uv-opt}, as desired.
    \\

    We can now prove that $(\bm{U}_*,\bm{V}_*)$ is a global minimizer of $\cL$, which completes the proof of \Cref{thm-landscape}. Indeed, for any $(\bm{U},\bm{V})$ we have
    \begin{equation}
        \notag
        \cL(\bm{U},\bm{V}) \geq \hat{\cL}(\bm{U}\bm{U}^\top-\bm{V}\bm{V}^\top) \geq \hat{\cL}(\bm{W}^*) = \cL(\bm{U}_*,\bm{V}_*).
    \end{equation}
    The last step follows from \textbf{Claim 2} and \Cref{thm-landscape:hatL-prop}.
\end{proof}

\begin{lemma}
\label{grad-nuclear}
    ~\cite[Example 2]{watson1992characterization} Let $\bm{A}=\bm{L}\bm{\Sigma}\bm{R}^\top$ be its singular value decomposition, then we have
    \begin{equation}
        \notag
        \partial\|\bm{A}\|_{*} = \left\{ \bm{L}\bm{R}^\top + \bm{E}: \left\|\bm{E}\right\|\leq 1, \bm{L}^\top\bm{E}=0 \text{ and }  \bm{E}\bm{R}=0 \right\}.
    \end{equation}
\end{lemma}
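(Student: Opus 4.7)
The plan is to reduce the lemma to the standard subdifferential formula for norms, then use the SVD to identify the resulting set explicitly. Because the nuclear norm is dual to the operator norm via $\|A\|_* = \sup_{\|Z\|_\mathrm{op} \le 1} \langle Z, A\rangle$, the general formula $\partial \|A\|_* = \{Z : \|Z\|_\mathrm{op} \le 1,\ \langle Z, A\rangle = \|A\|_*\}$ applies, and it suffices to verify that this set coincides with $\{LR^\top + E : \|E\|_\mathrm{op} \le 1,\ L^\top E = 0,\ ER = 0\}$, where $L\Sigma R^\top$ is the thin SVD with $\Sigma$ diagonal and strictly positive.

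For the forward inclusion, given $Z = LR^\top + E$ I would directly compute $\langle Z, A\rangle = \tr(R\Sigma L^\top LR^\top) + \tr(R\Sigma L^\top E) = \tr(\Sigma) = \|A\|_*$, using $L^\top L = I_r$ and $L^\top E = 0$. The constraints $L^\top E = 0$ and $ER = 0$ mean that $LR^\top$ and $E$ act on mutually orthogonal row and column subspaces, so a $2 \times 2$ block calculation yields $\|LR^\top + E\|_\mathrm{op} = \max(1, \|E\|_\mathrm{op}) \le 1$.

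For the reverse inclusion, I would fix $Z$ satisfying both conditions and expand it in the orthonormal bases $[L \mid L_\perp]$ on the left and $[R \mid R_\perp]$ on the right, producing blocks $Z_{ij}$. The attainment condition becomes $\sum_i \sigma_i (Z_{11})_{ii} = \sum_i \sigma_i$; combined with $\|Z_{11}\|_\mathrm{op} \le \|Z\|_\mathrm{op} \le 1$ and $\sigma_i > 0$, this forces every $(Z_{11})_{ii} = 1$, and then the operator-norm bound pins $Z_{11} = I_r$ (the $j$-th diagonal entry of $Z_{11}^\top Z_{11}$ is bounded by $1$ yet already contains $(Z_{11})_{jj}^2 = 1$, so all off-diagonal entries in column $j$ vanish). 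Inspecting the top-left block of $ZZ^\top$ then yields $I + Z_{12} Z_{12}^\top \preceq I$, so $Z_{12} = 0$; symmetrically $Z_{21} = 0$. Taking $E := L_\perp Z_{22} R_\perp^\top$ gives $Z = LR^\top + E$ with $L^\top E = 0$, $E R = 0$, and $\|E\|_\mathrm{op} = \|Z_{22}\|_\mathrm{op} \le 1$. The main obstacle is the step pinning $Z_{11} = I_r$, which is a careful equality analysis of the dual-norm inequality $\tr(\Sigma Z_{11}) \le \|\Sigma\|_* \|Z_{11}\|_\mathrm{op}$; the remaining block-matrix bookkeeping is routine.
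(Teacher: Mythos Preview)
The paper does not actually prove this lemma: it is stated with a direct citation to \cite[Example 2]{watson1992characterization} and no proof is given. Your proposal is a correct self-contained argument via the dual-norm characterization $\partial\|A\|_* = \{Z : \|Z\|_{\mathrm{op}} \le 1,\ \langle Z,A\rangle = \|A\|_*\}$ followed by an SVD block decomposition; this is the standard route and essentially what Watson's original derivation amounts to, so there is nothing substantive to compare.
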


It is well-known that under cerntain regularity conditions on the ground-truth matrix $\mX^*$, the global minimizer of the convex problem \Cref{eq:mc-objective-W} is close to the ground-truth $\mX^*$. Here we present a version of this result adapted from \cite[Theorem 7]{candes2010matrix}.

\begin{theorem} \label{thm:mc-recovery}
    Suppose that $\rank(\mX^*)=r=\O(1)$, $\mX^* = \mV_{\mX^*}\mSigma_{\mX^*}\mV_{\mX^*}^\top$ is (a version of) its SVD, and each row of $\mV_{\mX^*}$ has $\ell_{\infty}$-norm bounded by $\sqrt{\frac{\mu}{d}}$, then if the number of observed entries satisfies $n \gtrsim \mu^4 d \log^2 d$, then we have that $\|\mW^*-\mX^*\|_F \lesssim \sqrt{\lambda\|\mX^*\|_*}\mu^2\log d$ with probability $\geq 1-d^{-3}$.
\end{theorem}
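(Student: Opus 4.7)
The plan is to adapt the noise-free analogue of the Cand\`es--Plan recovery bound for nuclear-norm-penalised least squares. The convex program whose minimiser is $\mW^*$ falls in the standard symmetric entrywise-sampling regime, and the target inequality has the classical ``basic inequality'' shape: the square-root dependence on $\lambda$ comes from a bias/variance trade-off against the penalty, while the $\mu^2\log d$ factor is the restricted strong convexity constant of the sampling operator under the stated incoherence.

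Concretely, I would carry out four steps. First, a \emph{basic inequality}: since $\mW^*$ minimises $\hat\cL$ and the observations are exact ($y_i^*=\langle\mP_i,\mX^*\rangle$), comparing $\hat\cL(\mW^*)$ to $\hat\cL(\mX^*)$ immediately yields, with $\mH:=\mW^*-\mX^*$,
\begin{equation*}
\frac{1}{n}\sum_{i=1}^n\langle\mP_i,\mH\rangle^{2}
\;\le\;\frac{\lambda}{2}\bigl(\|\mX^*\|_{*}-\|\mW^*\|_{*}\bigr)
\;\le\;\frac{\lambda}{2}\|\mX^*\|_{*}.
\end{equation*}
Second, a \emph{cone constraint} on $\mH$: decomposing $\mH=\mH_{T}+\mH_{T^{\perp}}$ with respect to the tangent space $T$ to the rank-$r$ manifold at $\mX^*$, and picking a subgradient $\mG=\mV_{\mX^*}\mV_{\mX^*}^{\top}+\mE\in\partial\|\mX^*\|_{*}$ with $\|\mE\|\le 1$, $\mE\in T^{\perp}$ (as in \Cref{grad-nuclear}), standard convexity arguments give $\|\mH_{T^{\perp}}\|_{*}\lesssim\|\mH_{T}\|_{*}$ up to a slack of order $\lambda$, hence $\|\mH\|_{*}\lesssim\sqrt{r}\,\|\mH_{T}\|_{F}$. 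Third, \emph{restricted strong convexity}: under the $\mu$-incoherence of $\mV_{\mX^*}$ and $n\gtrsim\mu^{4}d\log^{2}d$, a matrix Bernstein / golfing-scheme argument (Recht 2011; Cand\`es--Plan Lemma 3.2) produces, with probability at least $1-d^{-3}$, the lower bound
\begin{equation*}
\frac{1}{n}\sum_{i=1}^n\langle\mP_i,\mH\rangle^{2}
\;\gtrsim\;\frac{1}{\mu^{4}\log^{2} d}\,\|\mH\|_{F}^{2}
\end{equation*}
uniformly over matrices $\mH$ satisfying the cone constraint of the second step. Fourth, combining the upper bound from step~1 with the lower bound from step~3 gives $\|\mH\|_{F}^{2}\lesssim\mu^{4}\log^{2}d\cdot\lambda\|\mX^*\|_{*}$, i.e.\ $\|\mH\|_{F}\lesssim\sqrt{\lambda\|\mX^*\|_{*}}\,\mu^{2}\log d$, which is exactly the statement.

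The hard part will be step~3. All three other steps are elementary convex analysis, but the restricted strong convexity estimate is where the sample complexity $n\gtrsim\mu^{4}d\log^{2}d$ is actually consumed. It requires the full matrix-Bernstein concentration for $\mathcal{P}_{T}\bigl(\tfrac{d^{2}}{n}\sum_i\mP_i\otimes\mP_i\bigr)\mathcal{P}_{T}-\mathcal{P}_{T}$, plus the usual ``peeling'' argument to exclude spiky low-rank directions on which the entrywise sampling operator is degenerate; these are exactly the ingredients of \cite[\S3]{candes2010matrix}, so the proof reduces to verifying that their hypotheses transfer verbatim to our symmetric sampling model $\mP_{i}=\tfrac12(\ve_{i_i}\ve_{j_i}^{\top}+\ve_{j_i}\ve_{i_i}^{\top})$ -- a bookkeeping exercise that only alters absolute constants. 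A minor additional subtlety is that we work with the averaged loss $\tfrac{1}{n}\sum_i(\cdot)^{2}$ rather than the summed loss used in Cand\`es--Plan, so the effective regulariser weight is rescaled by $n$; this gets absorbed cleanly into step~1.
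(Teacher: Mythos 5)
Your steps 1, 2 and 4 are fine, but the whole plan hinges on the restricted strong convexity claim in step 3, and that inequality is false -- not merely hard to prove. Under uniform entrywise sampling, $\E\bigl[\frac1n\sum_{i=1}^n\langle\mP_i,\mH\rangle^2\bigr]=\frac{1}{d^2}\|\mH\|_F^2$ for every fixed symmetric $\mH$, so a high-probability lower bound of order $\frac{1}{\mu^4\log^2 d}\|\mH\|_F^2$ would exceed the mean by a factor $\approx d^2/(\mu^4\log^2 d)$, which is impossible for a nonnegative quantity. It even fails deterministically inside your cone: $\mH=\mX^*$ satisfies $P_{T^{\perp}}\mH=0$, yet incoherence gives $|\langle\mP_i,\mX^*\rangle|\le\frac{\mu}{d}\|\mX^*\|_*$, hence $\frac1n\sum_i\langle\mP_i,\mX^*\rangle^2\le\frac{\mu^2 r}{d^2}\|\mX^*\|_F^2$, which is far below your claimed lower bound in the regime where $n\gtrsim\mu^4 d\log^2 d$ is non-vacuous. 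The true restricted-eigenvalue constant of the normalized sampling operator on incoherent or cone-constrained directions is of order $1/d^2$ (equivalently, $p=n/d^2$ before the $1/n$ normalization), and feeding that into your steps 1 and 4 only yields $\|\mH\|_F\lesssim d\sqrt{\lambda\|\mX^*\|_*}$, far from the stated bound; no peeling argument can repair this, because the gap is in expectation, not in concentration.

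What the stated bound actually requires is the dual-certificate machinery behind Cand\`es--Plan's Theorem 7, which is precisely what the paper relies on: it gives no proof of its own and simply states the result as adapted from that reference. That route uses near-isometry of sampling restricted to the tangent space $T$, i.e.\ $\|P_T-\tfrac{d^2}{n}P_T\cR_\Omega P_T\|\le\tfrac12$ where $\cR_\Omega(\cdot)=\sum_i\langle\mP_i,\cdot\rangle\mP_i$, together with a certificate $\mY$ supported on the observed entries with $P_T\mY\approx\mV_{\mX^*}\mV_{\mX^*}^\top$ and $\|P_{T^{\perp}}\mY\|\le\tfrac12$ (golfing scheme); the full error is then controlled through $\|P_T\mH\|_F$ and $\|P_{T^{\perp}}\mH\|_*$ in terms of the observed residual, which your step 1 bounds by $\sqrt{\lambda\|\mX^*\|_*}$. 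This is structurally different from a uniform RSC-over-a-cone argument (that is the Negahban--Wainwright route, which additionally needs a spikiness constraint and still only delivers the $n/d^2$ constant). So as written your outline does not prove the theorem: step 3 must be replaced by the tangent-space/dual-certificate argument, and the resulting factor multiplying $\sqrt{\lambda\|\mX^*\|_*}$ has to be tracked explicitly rather than asserted.
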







\end{document}